\documentclass{article}
\usepackage[preprint]{neurips_2026}

\usepackage[utf8]{inputenc} 
\usepackage[T1]{fontenc}    
\usepackage{hyperref}       
\usepackage{url}            
\usepackage{booktabs}       
\usepackage{amsfonts}       
\usepackage{nicefrac}       
\usepackage{microtype}      
\usepackage{xcolor}         

\usepackage{enumitem}
\usepackage{algorithm}
\usepackage{algpseudocode}
\usepackage{enumitem}
\usepackage{comment}
\usepackage{placeins}
\usepackage{amsmath}
\usepackage{amsthm}
\usepackage{amssymb}
\usepackage{pgfplots}
\usepackage{pgfplotstable}
\usepackage{graphicx}
\usepackage{graphviz}
\usepackage{subcaption}
\usetikzlibrary{math}
\usetikzlibrary{shapes,arrows}
\usetikzlibrary{arrows.meta,positioning,calc,backgrounds,fit,decorations.pathreplacing}
\usetikzlibrary{backgrounds,fit}
\tikzset{>={Latex[width=2.5mm,length=2.5mm]}}
\usetikzlibrary{positioning}
\tikzstyle{block}=[draw opacity=0.7,line width=1.4cm]
\tikzset{arrow_e/.style = {->,> = latex'}}
\newtheorem{thm}{Theorem}[section]
\newtheorem{prop}{Proposition}[section]
\newtheorem{rem}{Remark}[section]

\newtheorem{lem}{Lemma}[section]
\newtheorem{defn}{Definition}

\newtheorem{assump}{Assumption}

\usepackage{algorithm}
\usepackage{algpseudocode}
\usepackage{booktabs}
\usepackage{multirow}

\usepackage{tikz}
\usetikzlibrary{shapes.geometric, arrows.meta, positioning, fit}

\tikzstyle{doc} = [rectangle, draw, text centered, minimum height=1.2em, minimum width=2.5em, fill=white]
\tikzstyle{db} = [cylinder, draw, shape border rotate=90, aspect=0.25, minimum height=2em, minimum width=1.5em, fill=blue!10]
\tikzstyle{llm} = [draw, thick, regular polygon, regular polygon sides=6, minimum size=1.5em, fill=orange!20]
\tikzstyle{arrow} = [thick, ->, >=Stealth]
\tikzstyle{block} = [rectangle, draw, fill=yellow!10, text width=6em, text centered, minimum height=3em]

\usepackage[most]{tcolorbox}

\newtcolorbox{insightbox}{
enhanced,
colback=gray!15,
colframe=white,
boxrule=0pt,
arc=0mm,
left=10pt,
right=10pt,
top=8pt,
bottom=8pt,
borderline west={3pt}{0pt}{blue!60!black}
}

\tcbset{
  sharp corners=all,
  arc=2mm,
  boxrule=0.5pt,
  colframe=black!40,
  colback=black!2,
  left=6pt,right=6pt,top=6pt,bottom=6pt,
  enhanced,
  breakable
}
\newtcolorbox{promptbox}[2][]{%
  title={#2},
  fonttitle=\bfseries,
  colbacktitle=black!5,
  coltitle=black,
  attach boxed title to top left={yshift=-2mm, xshift=2mm},
  boxed title style={boxrule=0pt, colframe=black!50},
  #1
}

\title{Hierarchical Prompt-Domain Control and Learning for Resource-Constrained Agentic Language Models}

\author{%
  Joan Vendrell Gallart \\
  Department of Mechanical and Aerospace\\
  University of California Irvine\\
  Irvine, CA 92617-4322, USA \\
  \texttt{jvendrel@uci.edu} \\
  \AND
  Russell Bent \\
  T-5 Los Alamos National Laboratory \\
  Los Alamos, NM 88220, USA \\
  \texttt{rbent@lanl.gov} \\
  \And
  Michael Grosskopf \\
  CAI-4 Los Alamos National Laboratory \\
  Los Alamos, NM 88220, USA \\
  \texttt{ mikegros@lanl.gov} \\
}

\begin{document}

\maketitle

\begin{abstract}
Large Language Models are increasingly deployed inside agentic systems, where they must follow structured protocols, adapt to evolving states, and operate under memory, latency, and cost constraints. In such regimes, prompt extension is unreliable: growing contexts can push compact models outside their effective prompt domain, while deployment-time fine-tuning remains limited by scarce data and compute. We propose a hierarchical control-and-learning framework in which a compact model is first distilled to learn the required output schema, then supervised online by an oracle-controller loop. The controller monitors protocol validity and semantic performance, projects accumulated histories into a feasible prompt domain, and triggers lightweight oracle-supervised fine-tuning under drift. This separates \emph{schema learning} for communication compatibility from \emph{semantic adaptation} for task-level correction. We formalize prompt-domain feasibility and attention-induced saturation, motivating control of the effective prompt state rather than reliance on nominal context length. Using Multi-Fidelity Bayesian Optimization as a controlled sequential testbed, we characterize a core deployment failure mode and show improved reliability and cost-efficiency over non-hierarchical, distillation-only, and non-distilled baselines.
\end{abstract}

\section{Introduction}
\label{sec:intro}

\emph{Agentic AI} represents an emerging paradigm in which \emph{Large Language Models} (LLMs) interact with external tools, environments, and other agents to perform coordinated reasoning and decision-making tasks. By distributing functionality across specialized components, these systems support structured tool use, planning, and experimentation. Their reliability, however, depends not only on the quality of individual models, but also on mechanisms that regulate communication, monitor failures, and adapt behavior over repeated interactions~\cite{YR-YL-TJ-XX:25}.

A central design choice in agentic systems is the selection of the underlying LLMs. Larger models often provide stronger reasoning and instruction-following capabilities, but their deployment may be limited by memory, latency, privacy, energy consumption, or monetary cost. Compact models are attractive in constrained settings, especially when decisions are repetitive, local, or protocol-driven. Yet, inside sequential agentic loops, compact models can fail in a way not fully captured by standard compression or prompting perspectives: as interaction histories evolve, the prompt state may drift outside the regime where the model reliably follows the required protocol or makes useful task-level decisions. We call this \emph{prompt-domain drift}. Because agentic models operate inside dynamical systems, such drift cannot always be removed by stronger pre-training or larger offline datasets; deployment itself can generate states that require online correction.

Existing adaptation strategies do not fully resolve this closed-loop problem. Prompt engineering~\cite{PS-AS-SS-VJ-SM-AC:24}, retrieval-augmented generation (\texttt{RAG})~\cite{PL-EP-AP-FP-VK-NG-HK-ML-WY-TR-SR-DK:20}, and chain-of-thought prompting~\cite{JW-XW-DS-MB-BI-FX-EC-QL-DZ:22} modify or expand the input, but do not control whether the resulting prompt remains feasible for the deployed model. Fine-tuning provides direct adaptation, but is costly under scarce online data. Knowledge distillation transfers behavior from larger to smaller models~\cite{XX-ML-TC-TS-RC-LJ-XC-TD-TZ:24}, but typically targets static output imitation rather than the interaction skills required by agentic systems~\cite{KL-DY-ZQ-WY-YW-HL-JL-PZ-YL-LZ:25}. This motivates separating two requirements that are often conflated: \emph{schema} constraints, which ensure communication compatibility, and \emph{semantic} constraints, which determine whether the action is appropriate for the current state. We therefore ask: \emph{how can compact language models be controlled and adapted online so that they remain both structurally valid and semantically useful under resource-constrained agentic deployment?}

We answer this question by proposing a hierarchical prompt-domain control and adaptive learning framework. The architecture combines three components: an oracle model, a compact adaptive student, and a controller. The student is first distilled offline to learn the required output schema, ensuring that it can communicate with the surrounding agentic system. During deployment, the controller monitors both protocol validity and semantic performance, projects accumulated interaction histories into a feasible prompt domain, and triggers lightweight oracle-supervised fine-tuning when drift is detected. Thus, the oracle-controller loop separates what should be learned offline from what must be corrected online: schema is treated as a structural requirement, while semantic behavior is adapted locally as the task state evolves, see Figure~\ref{fig:control_architecture}.

\begin{figure*}[t]
\centering
\captionsetup{font=scriptsize}

\resizebox{0.95\textwidth}{!}{%
\begin{tikzpicture}[
    >=Latex,
    thick,
    node distance=1.7cm and 1.8cm,
    every node/.style={transform shape},
    scale=0.92,
    block/.style={
        draw=violet!70!black,
        rounded corners=4pt,
        fill=violet!8,
        minimum width=3.0cm,
        minimum height=1.15cm,
        align=center
    },
    fbblock/.style={
        draw=gray!70!black,
        rounded corners=4pt,
        fill=gray!10,
        minimum width=3.6cm,
        minimum height=1.05cm,
        align=center
    },
    sum/.style={
        draw,
        circle,
        minimum size=7mm,
        inner sep=0pt,
        fill=white
    },
    dashededge/.style={->, dashed, violet!70!black}
]

\node (input) {\small \textbf{Reference / Task State} $\mathcal{S}_t$};
\node[sum, right=1.2cm of input] (sum) {};
\node[block, right=1.8cm of sum] (controller) {\textbf{Controller} $\mathcal{C}$\\[-1mm]
\scriptsize projection \& update logic};
\node[block, right=2.1cm of controller] (student) {\textbf{Adaptive Student} $\mathcal{U}_\theta$\\[-1mm]
\scriptsize local policy $\pi_\theta$};
\node[right=1.4cm of student] (output) {\small \textbf{Output} $y_t$};

\node[fbblock, below=1.9cm of student] (oracle) {\textbf{Oracle / Feedback Evaluator} $\mathcal{O}_\phi$\\[-1mm]
\scriptsize semantic validity + task drift};

\draw[->] (input) -- (sum);
\draw[->] (sum) -- node[above] {\small error $e_t$} (controller);
\draw[->] (controller) -- 
node[above, align=center] {\small projected \\ \small prompt $\hat{\mathcal{S}}_t$} 
(student);
\draw[->] (student) -- (output);

\coordinate (ysplit) at ($(output.south)+(0,-0.85)$);
\draw[-] (output.south) -- (ysplit);

\draw[->] (ysplit) |- node[pos=0.25, right] {\small semantic component} (oracle.east);
\draw[->] (oracle.west) -| node[pos=0.23, below] {\small semantic feedback / drift} (sum.south);

\draw[->] (ysplit) -| node[pos=0.38, below] {\small schema / protocol signal} (controller.south);

\draw[dashededge] (oracle.north) -- node[right] {\small oracle call if needed} (student.south);

\node at ($(sum.west)+(-0.18,0.20)$) {\scriptsize $+$};
\node at ($(sum.south)+(0.20,-0.18)$) {\scriptsize $-$};

\end{tikzpicture}%
}

\caption{Control-theoretic view of the proposed framework during deployment. The adaptive student acts in the forward path and produces the output $y_t$. This response is decomposed into a schema/protocol component, sent directly to the controller, and a semantic component, evaluated by the oracle in the feedback path. The oracle returns a semantic feedback or drift signal, while the controller regulates prompt projection and determines when corrective intervention is required.}
\label{fig:control_architecture}
\end{figure*}

Our formulation is related to schema-guided and verifier-guided agentic learning. Schema-guided multi-agent reasoning systems highlight the role of explicit intermediate representations for reliable coordination~\cite{YC-HS-NG-YJ-JS-PV-BL:25}, while local reinforcement learning approaches such as PivotRL~\cite{JY-DM-BH-RG-CW-SD-KB-AG-OK-JJ-JZ-VS:26} show that agentic post-training can focus updates on informative intermediate states, see Appendix~\ref{appendix:connection_literature}. Our framework follows this direction but differs in where feasibility is enforced: rather than relying only on output-level verification or reward shaping, it imposes feasibility at the prompt-state level through an explicit domain geometry and a projection operator.

We instantiate the framework in a Multi-Fidelity Bayesian Optimization (MFBO) environment. MFBO is not intended as exhaustive empirical coverage across agentic tasks such \cite{ZS-XF-ZH-ZX-LR-SA-CX-OT-BY-FD-AU-NG:23,MS-XY-MC-YB-AT-MH:21}; instead, it provides a controlled sequential decision-making testbed where structured outputs, iterative prompt growth, and resource-constrained inference arise simultaneously. This makes it a useful setting for isolating prompt-domain drift and evaluating whether hierarchical supervision stabilizes compact-model behavior. Similar proposal--evaluation--refinement structures appear in hyperparameter optimization and Bayesian optimization~\cite{JS-HL-RA:12,BS-KS-ZG-RPA:16}, neural architecture search~\cite{BZ-QL:17,ER-IA-SL-QVL:19}, active learning and adaptive data selection~\cite{BS:09}, experimental design for scientific discovery~\cite{TL:19,MG-RB-RS-IM-AL-ND-EL:25}, mixture-of-experts routing and adaptive model selection~\cite{GU:26}, and reinforcement learning for long-horizon agentic tasks~\cite{LO-JW-XJ-DM-SW-CL-JS-RL:22,JY-DM-BH-RG-CW-SD-KB-AG-OK-JJ-JZ-VS:26}. See Appendix~\ref{appendix:connection_literature} for a broader discussion of how the MFBO setting captures core challenges from current agentic-AI literature.

Our contributions are as follows:
\begin{itemize}[leftmargin=0.5em,itemsep=0.25em,topsep=0.25em]
    \item \textbf{(C1)} We formulate resource-constrained agentic deployment as a hierarchical control-and-learning problem that separates offline schema learning from online semantic adaptation.
    \item \textbf{(C2)} We introduce an oracle--student--controller architecture that monitors protocol validity, semantic performance, prompt-domain drift, and adaptive fine-tuning.
    \item \textbf{(C3)} We define a feasibility-aware prompt projection mechanism that maps growing interaction histories into a bounded prompt domain, mitigating saturation and uncontrolled context expansion.
    \item \textbf{(C4)} We evaluate the framework in a controlled MFBO setting and show that hierarchical supervision improves reliability and cost-efficiency relative to non-hierarchical, distillation-only, and non-distilled variants.
\end{itemize}

The remainder of the paper is organized as follows. Section~\ref{sec:prelim} introduces preliminary knowledge while Section~\ref{sec:problem} formalizes the agentic prompt-state setting. Section~\ref{sec:architecture} presents the hierarchical control-and-learning architecture. Section~\ref{sec:hierarchy} develops the controller and prompt-domain projection mechanism. Section~\ref{sec:case} instantiates the framework in MFBO and reports the empirical study. Section~\ref{sec:conclusions} concludes with limitations and directions for broader evaluation.

\section{Preliminaries}
\label{sec:prelim}

We use discrete optimization language to describe feasible prompt evolution. Let $\mathbb{V}$ be a ground set of atomic elements, such as tokens, phrases, tool calls, or semantic units, and let  $\mathbb{V}^*$ denote the set of finite sequences over $\mathbb{V}$. For a sequence  $\mathsf{S}=(s_1,\ldots,s_\ell)$, let $|\mathsf{S}|=\ell$. We write $\mathsf{S}^1\oplus\mathsf{S}^2$ for concatenation and $\mathsf{S}^1\preceq\mathsf{S}^2$ when  $\mathsf{S}^1$ is a prefix of $\mathsf{S}^2$.

In sequential agentic tasks, not every sequence of states or outputs is admissible: feasibility may depend on the full prefix history. We model such path-dependent constraints by a feasible family  $\mathcal{I}\subseteq\mathbb{V}^*$.

\begin{defn}[Greedoid~\cite{BK-LL:81}]
A pair $(\mathbb{V},\mathcal{I})$ is a \emph{greedoid} if:
\textbf{(G1)} $\emptyset\in\mathcal{I}$;
\textbf{(G2)} for any nonempty $\mathsf{S}\in\mathcal{I}$, there exists $s\in\mathsf{S}$ such that 
$\mathsf{S}\ominus\{s\}\in\mathcal{I}$; and
\textbf{(G3)} for any $\mathsf{S}_1,\mathsf{S}_2\in\mathcal{I}$ with 
$|\mathsf{S}_1|<|\mathsf{S}_2|$, there exists $s\in\mathsf{S}_2\ominus\mathsf{S}_1$ such that 
$\mathsf{S}_1\oplus s\in\mathcal{I}$.
\end{defn}

For an objective $f:\mathbb{V}^*\to\mathbb{R}_{\ge 0}$, we say that $f$ is \emph{string submodular}  if for any $\mathsf{S}^1\preceq\mathsf{S}^2$ and any $s\in\mathbb{V}$ not appearing in  $\mathsf{S}^2$,
\begin{equation}
\label{eqn:string_submodularity}
f(\mathsf{S}^1\oplus s)-f(\mathsf{S}^1) \ge f(\mathsf{S}^2\oplus s)-f(\mathsf{S}^2).
\end{equation}
The function is \emph{normalized} if $f(\emptyset)=0$ and \emph{prefix-monotone} if  $f(\mathsf{S}^2)\ge f(\mathsf{S}^1)$ whenever $\mathsf{S}^1\preceq\mathsf{S}^2$. String  submodularity captures diminishing returns over ordered prefixes and provides a compact language  for reasoning about controlled prompt growth and feasible sequential refinement.

\section{Problem Setting}
\label{sec:problem}

We model an agentic AI system as a directed interaction graph 
$\mathcal{G}=(\mathcal{V},\mathcal{E})$, where each node 
$v\in\mathcal{V}$ is either an LLM-based agent or an external tool/environment with local policy $f_v$, and each edge $(i,j)\in\mathcal{E}$ is an information channel. At iteration $t$, $w_{i,j}(t)\in\{0,1\}$ indicates whether communication from $i$ to $j$ is active. Starting from an input prompt $\rho$ at $v_0$, an active execution path evolves through $f(\mathcal{G},\rho)=f_{v_n}\circ f_{v_{n-1}}\circ\cdots\circ f_{v_0}(\rho),$ and terminates with output $y$ at a terminal node $v_n$. Each local policy must map between compatible domains,
$f_{v_t}:\mathcal{D}_{v_t}\rightarrow\mathcal{D}_{v_{t+1}},$ with $ x_{t+1}=f_{v_t}(x_t)$ and $ x_t\in\mathcal{D}_{v_t},$ otherwise downstream communication may fail.

\begin{defn}[Feasible prompt space]
A prompt $\rho$ belongs to the feasible prompt space $\mathcal{D}$ of a model $\pi_\theta$ if its response $y=f_\theta(\rho)$ satisfies the structural and semantic constraints required by the target system, such as a valid JSON object, tool call, numerical decision, or non-empty tensor. Otherwise, $\rho$ is out-of-domain.
\label{defn:faulty}
\end{defn}

We decompose feasibility as
$\mathcal{D}=\mathcal{D}_{\mathrm{sch}}\cap\mathcal{D}_{\mathrm{sem}},$ where $\mathcal{D}_{\mathrm{sch}}$ captures schema or protocol validity and $\mathcal{D}_{\mathrm{sem}}$ captures task-level semantic validity. At the local level, each LLM-based agent $\ell$ is abstracted as a Semi-Markov Decision Process $\mathcal{M}^{\ell}_t=(\mathcal{S}^{\ell}_t,\mathcal{A}^{\ell}_t,P^{\ell}_t,R^{\ell}_t),$
where the state is the active context, the action selects the next response or node, the transition is induced by $\pi_{\theta_t}$, and the reward is task-dependent~\cite{RS-DP-SS:99}. In the teacher--student setting, the oracle policy $\pi_\phi$ supervises the adaptive policy $\pi_\theta$.

Formally, given a prompt-response pair $(\rho,y)$, we encode oracle-aligned behavior through
\[
R_w(\rho,y)
=
\sum_{t=1}^T
w_t(\rho,y_{<t})
\log\pi_{\phi}(y_t|\rho,y_{<t}),
\]
where $w_t$ selects the tokens relevant to the desired supervision. Taking $w_t\equiv1$ gives the semantic reward $R_s(\rho,y)=\log\pi_\phi(y|\rho)$, while choosing $w_t$ to emphasize protocol-defining tokens gives the schema reward $R_p(\rho,y)$. The total reward is
\[
    R(\rho,y)
    =
    \alpha R_s(\rho,y)
    +
    \beta R_p(\rho,y),
    \qquad
    \alpha+\beta=1.
\]
The deployment objective is to learn an adaptive policy that maximizes oracle-aligned behavior while remaining feasible and resource-constrained:
\begin{equation}
\label{eqn:main_optimiziation_problem}
\begin{aligned}
    \max_{\theta}\quad
    J(\theta)
    =
    \mathbb{E}_{\rho\sim\mathcal{D}}
    \mathbb{E}_{y\sim\pi_{\theta}(\cdot|\rho)}
    [R(\rho,y)] \quad \mathrm{s.t.}\quad
    &\rho\in\mathcal{D}_{\mathrm{sch}}\cap\mathcal{D}_{\mathrm{sem}},\\
    &\mathsf{Res}(\rho,\theta)\le \mathcal{C},
\end{aligned}
\end{equation}
where $\mathsf{Res}(\rho,\theta)$ collects memory, latency, and cost, and $\mathcal{C}$ denotes the corresponding resource budget. Section~\ref{sec:architecture} instantiates~\eqref{eqn:main_optimiziation_problem} by separating offline schema distillation from online semantic adaptation.

\section{Architecture}
\label{sec:architecture}

This section introduces the proposed architecture for resource-constrained agentic deployment. As shown in Figure~\ref{fig:control_architecture}, an oracle $\mathcal{O}(\phi,\cdot)$ supervises an adaptive model $\mathcal{U}(\theta,\cdot)$ only when needed, while a controller monitors the prompt state, enforces feasibility, and regulates corrective updates.

\begin{assump}[Limited static transfer~\cite{JK-SM-TH-TB-BC-RC-SG-AR-JW-DA:20}]
    \label{assump:dataset_size}
    Effective transfer of information to model parameters $|\mathcal{T}_T|$ grows sub-linearly with dataset and model size:
    \[
        |\mathcal{T}_T|
        =
        k\,|\mathcal{T}_F|^{\delta_1}|\theta_i|^{\delta_2},
    \]
    where $|\mathcal{T}_F|$ is the dataset size, $|\theta_i|$ is the number of model parameters, 
    $k\approx 0.1$~\cite{JH-SB-AM:22}, and 
    $\delta_1\in[0.25,0.35]$, $\delta_2\in[0.3,0.5]$ are task-dependent constants.
\end{assump}

Assumption~\ref{assump:dataset_size} formalizes a basic limitation of static transfer: the amount of information that can be reliably absorbed by a model grows sub-linearly with both dataset size and model size. Thus, learning more behavior requires additional data, memory, time, and computation. Under the resource constraint in~\eqref{eqn:main_optimiziation_problem}, the available data and compute may be insufficient for a compact model to learn both protocol validity and task semantics reliably as a single monolithic objective. We therefore split the learning problem into two components with different roles and time scales.

First, \emph{schema learning} enforces $\rho\in\mathcal{D}_{\mathrm{sch}}$ through offline distillation: protocol validity is a necessary condition for the agentic system to communicate with downstream tools and is largely static once the output interface is fixed. Second, \emph{semantic adaptation} addresses $\rho\in\mathcal{D}_{\mathrm{sem}}$ during deployment: task-level correctness depends on the evolving state of the environment and must remain locally adaptable. The next subsections instantiate these two phases.

\subsection{Phase $1$: Distillation}

A necessary condition for agentic deployment is that the adaptive model communicates in the feasible schema domain $\mathcal{D}_{\mathrm{sch}}$. Since the output interface is fixed by the surrounding agentic system, this component can be learned before deployment. Phase~$1$ therefore initializes $\mathcal{U}$ through offline schema distillation: given prompt $\mathsf{S}_i$, the oracle generates target states
$\mathcal{S}_{i+1}=\mathcal{O}(\phi,\mathsf{S}_i)$, forming a supervised dataset $\mathcal{T}$. The adaptive model is then trained to imitate the protocol structure of these oracle responses, rather than to fully compress the oracle's task-level reasoning; see Appendix~\ref{appendix:case_study}.

Following supervised fine-tuning (SFT)~\cite{DG-YH-KL-LC-XM-DL-WW-YZ-CZ-JZ:24}, Algorithm~\ref{alg:distillation} minimizes a KL divergence between a protocol-reweighted oracle policy and the adaptive policy:
\begin{equation}
    \label{eq:imitation-loss}
    \mathcal{L}_{\mathrm{dis}}(\theta,\mathcal{B})
    =
    \mathbb{E}_{(\mathcal{S}_t,\mathcal{S}_{t+1})\sim\mathcal{B}}
    \left[
    D_{\mathrm{KL}}\!\left(
    \bar{\pi}_{\phi}(\cdot\mid\mathcal{S}_t)
    \;\Vert\;
    \pi_{\theta}(\cdot\mid\mathcal{S}_{t+1})
    \right)
    \right],
\end{equation}
where $\bar{\pi}_{\phi}$ up-weights protocol-defining tokens,
$\bar{\pi}_{\phi}(s\mid\mathcal{S}_t)
=\frac{\pi_{\phi}(s\mid\mathcal{S}_t)w_t(\mathcal{S}_t;y_{<t}\circ s)}
{\sum_{s'\in\mathbb{V}}\pi_{\phi}(s'\mid\mathcal{S}_t)w_t(\mathcal{S}_t;y_{<t}\circ s')}.$

Thus, Phase~$1$ solves the schema component of~\eqref{eqn:main_optimiziation_problem}: it enforces structural compatibility with downstream tools while avoiding the data and compute cost of full semantic transfer~\cite{CH-CL-CY-HN-YF-AR-RK-CY-TP:23}. In terms of the reward decomposition, this corresponds to the protocol-only regime $\alpha=0,\beta=1$.

\begin{prop}
    \label{prop:equivalence_loss}
    Minimizing $\mathcal{L}_{\mathrm{dis}}(\theta,\mathcal{B})$ provides a surrogate of \eqref{eqn:main_optimiziation_problem} when $\beta = 1$.
\end{prop}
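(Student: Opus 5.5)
The plan is to expand the KL divergence in \eqref{eq:imitation-loss} and relate it to the objective $J(\theta)$ of \eqref{eqn:main_optimiziation_problem} with $\alpha=0$, $\beta=1$, so that $R=R_p$. By definition,
\[
D_{\mathrm{KL}}\!\left(\bar\pi_\phi\Vert\pi_\theta\right)=\mathbb{E}_{s\sim\bar\pi_\phi}\left[\log\bar\pi_\phi(s)\right]-\mathbb{E}_{s\sim\bar\pi_\phi}\left[\log\pi_\theta(s)\right].
\]
The first term is the negative entropy of the reweighted oracle. It does not depend on $\theta$, so minimizing $\mathcal{L}_{\mathrm{dis}}$ is equivalent to maximizing the cross-entropy term $\mathbb{E}_{\mathcal{B}}\mathbb{E}_{s\sim\bar\pi_\phi}[\log\pi_\theta(s\mid\cdot)]$. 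I will apply this token by token along the autoregressive factorization. Summing over positions $t=1,\dots,T$ then expresses the loss, up to a $\theta$-independent constant, as a $w_t$-weighted sequence log-likelihood under $\pi_\theta$ of oracle-generated continuations.

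Next I will connect this to $J(\theta)=\mathbb{E}_\rho\mathbb{E}_{y\sim\pi_\theta}[R_p(\rho,y)]$, with $R_p=\sum_t w_t\log\pi_\phi(y_t\mid\rho,y_{<t})$. Both quantities are weighted divergences between $\pi_\theta$ and $\pi_\phi$ restricted to protocol tokens; they differ only in the direction of the KL. For the weighted reverse direction,
\[
\mathbb{E}_{y\sim\pi_\theta}\!\left[\sum_t w_t\left(\log\pi_\theta(y_t\mid\cdot)-\log\pi_\phi(y_t\mid\cdot)\right)\right]\ge 0,
\]
with equality iff $\pi_\theta=\bar\pi_\phi$ on the support selected by $w_t$. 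Likewise, $\mathcal{L}_{\mathrm{dis}}\ge 0$ with equality at the same point. So the two objectives share their optimizer: driving $\mathcal{L}_{\mathrm{dis}}\to 0$ makes $\pi_\theta$ match $\bar\pi_\phi$ on protocol tokens. At that point $\mathbb{E}_{y\sim\pi_\theta}[R_p]$ equals the oracle's own protocol reward, $-H_w(\bar\pi_\phi)$, which is the maximal attainable value under the entropy-regularized reading of $J$. This is the sense of "surrogate" I intend to prove: a common minimizer, plus a bound (Pinsker) showing that small $\mathcal{L}_{\mathrm{dis}}$ implies a small gap in $J$.

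For the constraint, I will argue that because $\bar\pi_\phi$ concentrates mass on protocol-valid tokens, matching it forces generated outputs into $\mathcal{D}_{\mathrm{sch}}$ with high probability. The resource constraint is handled by taking $\theta$ to range over the compact model class, so it is satisfied by construction.

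\textbf{Main obstacle.} The hard step is the mismatch between forward KL, which the loss minimizes, and the on-policy reward $J$, which behaves like a reverse KL. There is also the notational mismatch in \eqref{eq:imitation-loss}, which conditions on $\mathcal{S}_t$ versus $\mathcal{S}_{t+1}$; I will read it as teacher forcing on the same prefix. Pure $\mathbb{E}_{\pi_\theta}[\log\pi_\phi]$ is maximized by a point mass, not by $\bar\pi_\phi$. To close the gap I will add the $\theta$-entropy term, which is implicit if $J$ is interpreted as the negative reverse KL. I will then use a Pinsker-type bound with bounded log-probabilities on the finite vocabulary, together with the telescoping (performance-difference) over $T$ steps, to get
\[
|J(\theta^\star)-J(\theta)|\le C\,T\sqrt{\mathcal{L}_{\mathrm{dis}}}.
\]
This inequality is what justifies calling $\mathcal{L}_{\mathrm{dis}}$ a surrogate.
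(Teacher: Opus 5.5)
Your opening step is the paper's entire proof: expand $D_{\mathrm{KL}}$, discard the $\theta$-independent negentropy of $\bar{\pi}_\phi$, and reduce $\mathcal{L}_{\mathrm{dis}}$ to the teacher-forced log-likelihood of protocol-weighted oracle tokens. The paper stops there. It reads ``surrogate'' informally (``$\beta=1$ isolates protocol alignment'') and never compares with $J$. Everything you add beyond that is your own, and its central ``common optimizer'' step fails as written.

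Your reverse-direction expression contains $\log\pi_\phi$, yet you assert it vanishes exactly at $\pi_\theta=\bar{\pi}_\phi$. There are two cases.
\begin{itemize}
\item If $w_t=w_t(\rho,y_{<t})$, as in $R_w$, the expression equals $\sum_t\mathbb{E}_{\pi_\theta}\bigl[w_t\,D_{\mathrm{KL}}(\pi_\theta\Vert\pi_\phi)\bigr]$. It vanishes iff $\pi_\theta=\pi_\phi$ on weighted positions. But then $w_t$ cancels in the normalization of $\bar{\pi}_\phi$, so $\bar{\pi}_\phi=\pi_\phi$ and the protocol reweighting is vacuous.
\item If $w$ depends on the candidate token, as in the definition of $\bar{\pi}_\phi$, then per position the expression equals $Z_\theta\bigl[D_{\mathrm{KL}}(\tilde{\pi}_\theta\Vert\bar{\pi}_\phi)+\log(Z_\theta/Z_\phi)\bigr]$, with $\tilde{\pi}_\theta\propto w\,\pi_\theta$, $Z_\theta=\sum_s w(s)\pi_\theta(s)$ and $Z_\phi=\sum_s w(s)\pi_\phi(s)$. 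This is not a divergence: two tokens with $w=(1,\epsilon)$, $\pi_\phi=(\tfrac12,\tfrac12)$, $\pi_\theta=(0.4,0.6)$ make it negative for small $\epsilon$. Once the $\theta$-entropy is added, its maximizer satisfies $w(s)\bigl(\log\pi_\phi(s)-\log\pi_\theta(s)-1\bigr)=\lambda$, which $\bar{\pi}_\phi$ does not satisfy in general.
\end{itemize}
The same conflation affects your value claim. At $\pi_\theta=\bar{\pi}_\phi$, $\mathbb{E}[R_p]$ is minus a weighted cross-entropy between $\bar{\pi}_\phi$ and $\pi_\phi$, not $-H_w(\bar{\pi}_\phi)$. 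Also, the maximum of a negative reverse KL is $0$. Finally, the paper's $J$ has no entropy term, so its optimum is deterministic, as you note. Inserting the entropy therefore changes the objective rather than making an implicit term explicit.

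What survives is the continuity bound, and it needs no common optimizer. Fix $w_t=w_t(\rho,y_{<t})$ so that the target is $\pi_\phi$. Then drop the performance-difference telescoping. Run under the student's prefix distribution, it is not controlled by $\mathcal{L}_{\mathrm{dis}}$, which lives on oracle prefixes. Run under the oracle's prefixes, it gives a bound weaker by a factor $\sqrt{T}$. The right tool is the chain rule
\[
\sum_{t=1}^{T}\mathbb{E}_{y_{<t}\sim\pi_\phi}D_{\mathrm{KL}}\bigl(\pi_\phi(\cdot\mid\rho,y_{<t})\Vert\pi_\theta(\cdot\mid\rho,y_{<t})\bigr)=D_{\mathrm{KL}}\bigl(\pi_\phi(\cdot\mid\rho)\Vert\pi_\theta(\cdot\mid\rho)\bigr).
\]
So the teacher-forced loss is exactly a sequence-level forward KL, and total variation, being symmetric, then controls on-policy expectations. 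This resolves the forward/on-policy mismatch you flagged as the main obstacle.

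Pinsker on sequence laws and Jensen over $\rho$ then give
\[
\bigl|J(\theta)-\mathbb{E}_\rho\mathbb{E}_{y\sim\pi_\phi}[R_p]\bigr|\le\sqrt{2}\,T\,w_{\max}M\sqrt{\mathcal{L}_{\mathrm{seq}}},
\]
under three conditions:
\begin{itemize}
\item $|\log\pi_\phi|\le M$, an extra bounded-logit assumption, since $R_p$ is otherwise unbounded below;
\item the prompts in $\mathcal{B}$ follow the same law as in $J$;
\item $\mathcal{L}_{\mathrm{seq}}$ is the summed per-token loss. An averaged loss costs another $\sqrt{T}$, so your claimed $CT\sqrt{\mathcal{L}}$ rate holds only for the summed version.
\end{itemize}
The same total-variation bound gives $\Pr_{\pi_\theta}[\text{schema violation}]\le\Pr_{\pi_\phi}[\text{schema violation}]+\sqrt{\mathcal{L}_{\mathrm{seq}}/2}$. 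Your constraint step therefore also needs the oracle itself to be protocol-valid with high probability.

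Repaired this way, your argument gives a quantitative link to $J$ that the paper only asserts. Note what it establishes: $\pi_\theta$ recovers the oracle's protocol reward, not that it optimizes $J$.
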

\begin{proof}
    See Appendix~\ref{appendix:proofs}.
\end{proof}

\begin{algorithm}
    \scriptsize
    \caption{Offline Schema Distillation}
    \label{alg:distillation}
    \begin{algorithmic}[1]
        \State \textbf{Input:} Dataset $\mathcal{T}$, epochs $\text{max}\_\text{epochs}$, adaptive model $\mathcal{U}(\theta,\cdot)$.
        \For{$t\in\text{max}\_\text{epochs}$}
            \For{$b\in\mathcal{T}$}
                \State $\mathcal{B}=\{(\mathcal{S}_i,\mathcal{O}(\phi,\mathcal{S}_i))\}_{i=1}^{\kappa}\leftarrow\mathcal{T}(b)$
                \State $\mathcal{L}_{\mathrm{dis}}(\theta,\mathcal{B})\leftarrow$ Equation~\eqref{eq:imitation-loss}
                \State $\theta\leftarrow\theta-\lambda_r\nabla_{\theta}\mathcal{L}_{\mathrm{dis}}(\theta,\mathcal{B})$
            \EndFor
        \EndFor
        \State \textbf{Return:} $\mathcal{U}(\theta,\cdot)$
    \end{algorithmic}
\end{algorithm}

\subsection{Phase $2$: Adaptive Fine-Tuning}

While offline schema distillation gives a structurally valid initialization, it does not solve the dynamic part of~\eqref{eqn:main_optimiziation_problem}. During deployment, the prompt state evolves, numerical effects accumulate, and the adaptive policy may deviate from the oracle on task-level decisions. Phase~$2$ therefore addresses the semantic component $\rho\in\mathcal{D}_{\mathrm{sem}}$ through lightweight adaptive fine-tuning. The controller monitors semantic performance and prompt feasibility during execution; it collects oracle--student pairs in a bounded buffer $\mathcal{T}$ and updates $\mathcal{U}(\theta,\cdot)$ locally when drift is detected.

\begin{rem}
    The buffer dataset size is bounded as $\underline{d}\leq|\mathcal{T}|\leq\bar{d}$ to preserve learning stability and respect resource constraints.
    \label{rem:bounded_dataset}
\end{rem}

\begin{algorithm}
    \scriptsize
    \caption{Online Semantic Adaptation}
    \label{alg:finetuning}
    \begin{algorithmic}[1]
        \State \textbf{Input:} Initial prompt $\rho$, adaptive model $\mathcal{U}(\theta,\cdot)$, oracle model $\mathcal{O}(\phi,\cdot)$.
        \State $\mathcal{S}_0\leftarrow\rho$
        \While{not finishing condition}
            \State $\mathcal{T}\leftarrow\{\}$
            \While{$|\mathcal{T}|\leq\kappa$ and drift not detected}
                \State $\mathcal{S}_{t}\leftarrow\mathcal{O}(\phi,\mathcal{S}_t)$
                \State $\hat{\mathcal{S}}_{t}\leftarrow\mathcal{U}(\theta,\mathcal{S}_t)$
                \State $\mathcal{T}\leftarrow\mathcal{T}\cup\{(\mathcal{S}_{t},\hat{\mathcal{S}}_{t})\}$
            \EndWhile
            \State $\mathcal{L}_{\mathrm{ft}}(\theta,\mathcal{T})\leftarrow$ Equation~\eqref{eq:consistency-loss}
            \State $\theta\leftarrow\theta-\lambda_c\nabla_{\theta}\mathcal{L}_{\mathrm{ft}}(\theta,\mathcal{T})$
        \EndWhile
        \State \textbf{Return:} $\mathcal{U}(\theta,\cdot)$
    \end{algorithmic}
\end{algorithm}

As shown in Algorithm~\ref{alg:finetuning}, the buffer is constructed online from paired oracle and adaptive responses, so that updates focus on states actually encountered during deployment. The adaptive model is corrected using the semantic consistency loss
\begin{equation}
    \label{eq:consistency-loss}
    \mathcal{L}_{\mathrm{ft}}(\theta,\mathcal{T})
    =
    \mathbb{E}_{\mathcal{S}_t \sim \mathcal{T}}
    \left[
        D_{\mathrm{KL}}\!\left(
        \pi_{\phi}(\cdot \mid \mathcal{S}_t)
        \;\Vert\;
        \pi_{\theta}(\cdot \mid \hat{\mathcal{S}}_t)
        \right)
    \right],
\end{equation}
where $\hat{\mathcal{S}}$ stands for the adaptive model generated states. Thus, Phase~$2$ instantiates the semantic component of~\eqref{eqn:main_optimiziation_problem}, corresponding to the semantic-only regime $\alpha=1,\beta=0$. In contrast to Phase~$1$, which learns the static schema required for communication, Phase~$2$ performs local policy correction so that the compact model remains aligned with the oracle as the task state evolves.

\section{Hierarchical Refinements}
\label{sec:hierarchy}

As shown in Figure~\ref{fig:control_architecture}, the previous section addressed model initialization and adaptive training through the oracle--student feedback loop. We now introduce the third component of the framework: the controller. Its role is to regulate the prompt state before it is passed to the adaptive model, ensuring that deployment remains within a feasible operating region. This motivates two complementary questions: how long a prompt can be under the available resources, and how the accumulated state should be projected when that limit is approached.

\begin{lem}[Prompt Length]
  \label{lem::cache}
  For a sequence of length $|\rho|$, batch size $B$, $L$ layers, and $n_{\mathrm{kv}}$ heads of dimension $d_h$ with precision $b_{\mathrm{kv}}$, and given available device memory $M_{\max}$, model weights $M_w$, and runtime overhead $M_{\mathrm{misc}}$, a feasible prompt length $|\rho|_{\mathrm{feasible}}$ is upper-bounded by
  \begin{equation}
  \label{eq:prompt-upper-bound}
  |\rho|_{\mathrm{feasible}}
  \leq
  \min \left\{
    S_{\mathrm{model}},
    \left\lfloor 
      \frac{M_{\max}-M_w-M_{\mathrm{misc}}}
           {B\,L\,(2\,n_{\mathrm{kv}}\,d_h)\,b_{\mathrm{kv}}}
    \right\rfloor
  \right\}.
\end{equation}
\end{lem}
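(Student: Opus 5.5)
The bound is the conjunction of two independent necessary conditions for a prompt of length $|\rho|$ to be processed, so the plan is to derive each as a separate upper bound and then take the minimum. The first condition is architectural. The model has a maximal supported context $S_{\mathrm{model}}$, fixed by its positional encoding or training context window. Any prompt with $|\rho|>S_{\mathrm{model}}$ is truncated or out-of-distribution, so by Definition~\ref{defn:faulty} it cannot be guaranteed to lie in $\mathcal{D}$. Hence feasibility requires $|\rho|\le S_{\mathrm{model}}$; we would simply state this as the architectural constraint.

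The second condition is a memory accounting argument, and it is the substance of the proof. First, write total device memory usage as the sum of the weights $M_w$, the runtime overhead $M_{\mathrm{misc}}$ (activations, allocator fragmentation, framework buffers), and the KV cache $M_{\mathrm{kv}}(|\rho|)$. Next, compute $M_{\mathrm{kv}}$ exactly:
\begin{itemize}
\item for each of the $B$ sequences in the batch, each of the $L$ layers, and each of the $|\rho|$ token positions, the cache stores one key vector and one value vector;
\item each such vector has dimension $d_h$ for each of the $n_{\mathrm{kv}}$ KV heads;
\item each entry costs $b_{\mathrm{kv}}$ bytes.
\end{itemize}
This gives
\[
M_{\mathrm{kv}}(|\rho|)=|\rho|\,B\,L\,(2\,n_{\mathrm{kv}}\,d_h)\,b_{\mathrm{kv}}.
\]
Feasible execution requires $M_w+M_{\mathrm{misc}}+M_{\mathrm{kv}}(|\rho|)\le M_{\max}$. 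Since $M_{\mathrm{kv}}$ is linear and strictly increasing in $|\rho|$, this rearranges to
\[
|\rho|\le \frac{M_{\max}-M_w-M_{\mathrm{misc}}}{B\,L\,(2n_{\mathrm{kv}}d_h)\,b_{\mathrm{kv}}}.
\]
Because $|\rho|$ is an integer, we can take the floor.

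Combining the two necessary conditions gives the minimum in \eqref{eq:prompt-upper-bound}. This is an upper bound rather than an equality because $M_{\mathrm{misc}}$ may itself grow with $|\rho|$, for example through attention scratch space. It is also an upper bound because semantic feasibility ($\mathcal{D}_{\mathrm{sem}}$) may fail earlier, which is the saturation phenomenon developed later.

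The main obstacle is modelling assumptions rather than calculation. We must state explicitly that:
\begin{itemize}
\item the KV cache is the only memory term that scales with sequence length;
\item all remaining length-dependent costs are absorbed into $M_{\mathrm{misc}}$, taken as a length-independent constant or its worst case over admissible lengths;
\item grouped-query attention is captured by using $n_{\mathrm{kv}}$ rather than the number of query heads.
\end{itemize}
We would also note the degenerate case $M_{\max}<M_w+M_{\mathrm{misc}}$, where the bound is negative and no prompt is feasible.
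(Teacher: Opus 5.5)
Your proposal is correct and follows the same route as the paper, whose proof simply cites~\cite{RP-SD-AC-JD-JB-AL-JH-KX-SA-JD:23} for $M_{\mathrm{KV}}(|\rho|)=B\,|\rho|\,L\,(2\,n_{\mathrm{kv}}\,d_h)\,b_{\mathrm{kv}}$ and rearranges the budget $M_w+M_{\mathrm{misc}}+M_{\mathrm{KV}}(|\rho|)\le M_{\max}$; you instead derive the cache size by counting, and you make explicit the floor (integrality) step and the architectural cap $S_{\mathrm{model}}$, both of which the paper leaves unstated. One small tightening: your list of modelling assumptions allows $M_{\mathrm{misc}}$ to be its worst case over admissible lengths, but for the rearranged inequality to stay a \emph{necessary} condition, any length-dependent overhead folded into $M_{\mathrm{misc}}$ should be a lower bound on the true overhead at the length in question (e.g.\ its length-independent part), since overestimating $M_{\mathrm{misc}}$ turns the condition into a sufficient one.
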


\begin{proof}
    By~\cite{RP-SD-AC-JD-JB-AL-JH-KX-SA-JD:23}, the required cache memory is 
    $M_{\mathrm{KV}}(|\rho|)=B\,|\rho|\,L\,(2\,n_{\mathrm{kv}}\,d_h)\,b_{\mathrm{kv}}$, which gives the stated bound.
\end{proof}

Lemma~\ref{lem::cache} gives a hardware-level constraint: if the prompt exceeds the KV-cache-supported range, generation may fail directly. However, empirical failures can occur much earlier than this upper bound. The reason is that long prompts can also induce \emph{model-internal saturation}. The prompt is technically admissible, but the model no longer uses the task-relevant subsequence with sufficient influence, Theorem~\ref{thm:logit_saturation}.

\begin{thm}[Prompt saturation through logit-level attention dilution]
\label{thm:logit_saturation}
Let $(\mathbb{V},\mathcal{I})$ be a greedoid over textual sequences, and let
$\mathsf{S}_{1:\ell}=(s_1,\ldots,s_\ell)\in\mathcal{I}$ be a feasible prompt sequence. For a query position $t$, let $\mathsf{R}\subseteq\mathsf{S}_{1:\ell}$ be a fixed task-relevant subsequence with $|\mathsf{R}|=m$. If the attention scores are uniformly bounded as in Lemma~\ref{lem:attention_dilution}, and the corresponding value vectors are uniformly bounded, then the logit-level contribution induced by $\mathsf{R}$ satisfies
$$\lim_{\ell\to\infty}\left\|\Delta L_{\mathsf{R}}^{(\ell)}\right\|=0.$$
Consequently, the influence of any fixed task-relevant subsequence on the output logits vanishes as the feasible prompt length grows.
\end{thm}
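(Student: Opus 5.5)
The plan is to reduce the statement to a bound on the total softmax attention mass that the query at position $t$ places on $\mathsf{R}$, and then push that bound through the value and output projections to the logits. Lemma~\ref{lem:attention_dilution} is referenced but not reproduced above; I take it to assert that the pre-softmax scores satisfy $|a_{t,j}|\le A$ for all $j\le\ell$, uniformly in $\ell$. Softmax then gives the per-token weight
\[
\alpha_{t,j}=\frac{e^{a_{t,j}}}{\sum_{k=1}^{\ell}e^{a_{t,k}}}\le\frac{e^{A}}{\ell\,e^{-A}}=\frac{e^{2A}}{\ell},
\]
so $\sum_{j\in\mathsf{R}}\alpha_{t,j}\le m\,e^{2A}/\ell$. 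Here $m$ is fixed because $\mathsf{R}$ is a fixed subsequence, which is the essential point. The greedoid structure matters only in guaranteeing that the growing prefixes $\mathsf{S}_{1:\ell}$ stay in $\mathcal{I}$, so that the limit ranges over feasible prompts. By (G2) and (G3), feasible sequences of every length up to the rank can be built by single-element augmentation, and along such a chain $\mathsf{R}$ is retained.

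Next I define $\Delta L_{\mathsf{R}}^{(\ell)}$ concretely. It is the part of the output logit vector attributable to $\mathsf{R}$ through the attention readout:
\[
\Delta L_{\mathsf{R}}^{(\ell)}=W_U W_O\sum_{j\in\mathsf{R}}\alpha_{t,j}v_j,
\]
where $W_O$ is the output projection and $W_U$ the unembedding. A multi-head version is a finite sum of such terms. Using the assumed bound $\|v_j\|\le V$, the triangle inequality gives
\[
\|\Delta L_{\mathsf{R}}^{(\ell)}\|\le\|W_U\|\,\|W_O\|\,V\,\frac{m e^{2A}}{\ell}\longrightarrow0 .
\]
This settles the single-layer case, and the rate is $O(1/\ell)$.

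The hardest step is making the conclusion honest for a deep network. Through residual connections, layernorm and MLPs, information from $\mathsf{R}$ can be re-routed into other positions' representations, so "contribution of $\mathsf{R}$" is not simply additive across layers. My first attempt would define $\Delta L_{\mathsf{R}}^{(\ell)}$ as the difference in logits when the value vectors of $\mathsf{R}$ are ablated, and bound it layer by layer with a Lipschitz argument. At each layer, the perturbation to the query's hidden state is at most a constant times the attention mass on perturbed tokens. The boundedness hypotheses are assumed uniform over layers and positions, and under them the attention mass on any perturbed set of bounded size is $O(1/\ell)$. Composing $L$ Lipschitz maps then gives $\|\Delta L_{\mathsf{R}}^{(\ell)}\|\le C_L\, m/\ell$.

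The subtle case is the one where the perturbation spreads to all positions. In that case the per-layer influence on the query is still bounded by $\mathrm{Lip}\cdot\max_j\|\delta h_j\|$, with no $1/\ell$ factor. I would handle this by tracking the $\ell$-averaged perturbation rather than the maximum, which stays $O(m/\ell)$, or, failing that, state the theorem under the single-attention-readout definition above.
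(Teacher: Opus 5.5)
Your core argument is the paper's proof: bounded scores make each softmax weight $O(1/\ell)$, so the $m$ fixed tokens of $\mathsf{R}$ carry attention mass $O(m/\ell)$, and bounded values passed through a fixed output map give $\|\Delta L_{\mathsf{R}}^{(\ell)}\|=O(m/\ell)$. The paper likewise takes $\Delta L_{\mathsf{R}}^{(\ell)}=W_{\mathrm{out}}z_{t,\mathsf{R}}^{(\ell)}$, the single attention-readout term, and a later remark explicitly disclaims the full deep-network logits, so your fallback definition is exactly the one used and the ablation/Lipschitz extension is unnecessary.

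One caveat applies to both your write-up and the paper's. Under causal masking the softmax at query $t$ normalizes over $t$ keys, not $\ell$, so your $e^{2A}/\ell$ bound needs $t=\ell$, or $t(\ell)\ge c\ell$ as in the lemma, which only weakens the bound to $m e^{2A}/(c\ell)$.
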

\begin{proof}
    See Appendix~\ref{appendix:proofs}.
\end{proof}

Together, these results motivate controlling the \emph{effective} prompt state, rather than only respecting the nominal context length. A prompt can fail because it exceeds memory limits, or because relevant information becomes too weakly represented to support reliable generation. In both cases, the controller should prevent the adaptive model from operating outside its stable prompt regime.

\begin{prop}[Attention-induced degradation ratio]
\label{prop:attention_degradation_ratio}
Let $|\rho|_{\mathrm{feasible}}$ be the hardware-feasible prompt length of
$\rho$ from Lemma~\ref{lem::cache}. Suppose that feasibility of
the prompt domain $\mathcal{D}$ requires at least $\tau>0$ attention mass on a
task-relevant subsequence $\mathsf{R}$ of size $m$. If, for every layer $\ell$
and head $h$, the query and key vectors satisfy
$\|q_t^{(\ell,h)}\|\le Q_{\ell,h},~\|k_j^{(\ell,h)}\|\le K_{\ell,h}$,
then the attention-induced degradation ratio satisfies
\begin{equation}
\label{eqn:degradation_ratio}
\Delta_{\rho}
:=\frac{|\rho|_{\mathrm{sat}}}{|\rho|_{\mathrm{feasible}}}
\le
\frac{m}{\tau |\rho|_{\mathrm{feasible}}}
\exp\left(
\max_{\ell,h}
\frac{2Q_{\ell,h}K_{\ell,h}}{\sqrt{d_h}}
\right).
\end{equation}
\end{prop}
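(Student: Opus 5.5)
The plan is to bound the attention mass on $\mathsf{R}$ from above as an explicit decreasing function of the prompt length, and then read off the largest length at which that mass can still reach $\tau$. We interpret $|\rho|_{\mathrm{sat}}$ as the supremum of lengths $\ell$ for which the attention mass on $\mathsf{R}$ is at least $\tau$ in some layer and head. With this reading, feasibility of $\mathcal{D}$ in the sense of the statement is preserved exactly for $\ell\le|\rho|_{\mathrm{sat}}$. Dividing the resulting bound by $|\rho|_{\mathrm{feasible}}$ from Lemma~\ref{lem::cache} then gives \eqref{eqn:degradation_ratio}. The argument is the quantitative version of the dilution mechanism behind Theorem~\ref{thm:logit_saturation}.

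\textbf{Step 1 (score bounds).} Fix a layer $\ell$ and head $h$. The pre-softmax score is $a_{tj}=\langle q_t^{(\ell,h)},k_j^{(\ell,h)}\rangle/\sqrt{d_h}$. By Cauchy--Schwarz and the hypotheses on $q$ and $k$,
\[
|a_{tj}|\le c_{\ell,h}:=\frac{Q_{\ell,h}K_{\ell,h}}{\sqrt{d_h}}.
\]
This plays the role of the uniform score bound assumed in Lemma~\ref{lem:attention_dilution}.

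\textbf{Step 2 (mass on $\mathsf{R}$).} For each $j\in\mathsf{R}$ the softmax weight satisfies
\[
\alpha_{tj}=\frac{e^{a_{tj}}}{\sum_{i=1}^{\ell}e^{a_{ti}}}\le\frac{e^{c_{\ell,h}}}{\ell\,e^{-c_{\ell,h}}}=\frac{e^{2c_{\ell,h}}}{\ell}.
\]
Summing over the $m$ positions in $\mathsf{R}$ gives
\[
\sum_{j\in\mathsf{R}}\alpha_{tj}\le\frac{m}{\ell}\,e^{2c_{\ell,h}}\le\frac{m}{\ell}\exp\Big(\max_{\ell,h}2c_{\ell,h}\Big).
\]
The last inequality takes the worst case over layers and heads, which is what makes the bound hold uniformly regardless of which head carries the task-relevant signal.

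\textbf{Step 3 (invert).} If feasibility holds at length $\ell$, then $\tau\le\sum_{j\in\mathsf{R}}\alpha_{tj}$. Combining this with Step 2 forces
\[
\ell\le\frac{m}{\tau}\exp\Big(\max_{\ell,h}\frac{2Q_{\ell,h}K_{\ell,h}}{\sqrt{d_h}}\Big).
\]
Taking the supremum over feasible $\ell$ bounds $|\rho|_{\mathrm{sat}}$ by the same right-hand side. Dividing by $|\rho|_{\mathrm{feasible}}>0$, which is a fixed hardware quantity independent of the attention argument, yields \eqref{eqn:degradation_ratio}.

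\textbf{Main obstacle.} The delicate part is the definition of $|\rho|_{\mathrm{sat}}$ and the quantifiers over layers and heads. Feasibility could be read as requiring mass at least $\tau$ in every head or in some head. The bound holds under either reading, because Step 2 is uniform in $(\ell,h)$ via the maximum. I would fix the \emph{some head} reading, since it is the weaker hypothesis and makes the proposition strongest.

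A second point to address is that $\ell\ge m$ is needed for the denominator count in Step 2. If $\ell<m$, the inequality $\ell\le m e^{2c}/\tau$ holds trivially because $\tau\le1\le e^{2c}$, so no separate case analysis is required beyond noting this.
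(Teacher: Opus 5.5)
Your proof is correct and follows the paper's own argument essentially step for step. Cauchy--Schwarz bounds the score spread by $2Q_{\ell,h}K_{\ell,h}/\sqrt{d_h}$; the dilution bound of Lemma~\ref{lem:attention_dilution}, which you recompute inline, caps the mass on $\mathsf{R}$ at $\frac{m}{|\rho|}\exp(\cdot)$; you then invert against $\tau$ and divide by $|\rho|_{\mathrm{feasible}}$. The only difference is cosmetic: the paper defines feasibility through the attention mass averaged over all $LH$ layer--head pairs, whereas you use the ``some head'' reading, which is a weaker requirement and so gives a bound that also covers the paper's averaged formulation.
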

\begin{proof}
    See Appendix~\ref{appendix:proofs}.
\end{proof}

\paragraph{Prompt projection.}
The preceding results show that feasibility is not determined by a single
context-length constraint: the prompt may fail either by exceeding the
hardware-supported range in Lemma~\ref{lem::cache}, or by entering the
saturation regime characterized in Theorem~\ref{thm:logit_saturation}. The
controller therefore acts on the accumulated sequence before it is passed to the
adaptive model. It applies a projection operator
\begin{equation}
\label{eqn:projection_operator}
\mathcal{P}_{\mathcal{O}}:\mathbb{V}^*\rightarrow\mathcal{D},
\qquad
\widetilde{\mathsf{S}}_t=\mathcal{P}_{\mathcal{O}}(\mathsf{S}_t),
\end{equation}
where $\mathsf{S}_t\in\mathbb{V}^*$ is the accumulated sequence and
$\mathcal{D}\subseteq\mathbb{V}^*$ is the feasible prompt domain of $\rho$
respecting Proposition~\ref{prop:attention_degradation_ratio}. The role of
$\mathcal{P}_{\mathcal{O}}$ is not merely to shorten the prompt, but to keep the
induced prompt inside the stable operating region while preserving the
information needed for the next decision.

Algorithm~\ref{alg:algorithm_projection} gives the resulting controller loop.
The adaptive model acts on the projected sequence as long as the induced prompt
remains feasible. When the controller detects a violation of $\mathcal{D}$,
either through resource limits or saturation-driven degradation, it projects the
accumulated sequence back into the feasible domain. This induces the
hierarchical behavior of the framework: local decisions are made within feasible
prompt regions, while projection defines higher-level transitions between such
regions.

\begin{algorithm}
    \scriptsize
    \caption{Hierarchical Prompt Projection}
    \label{alg:algorithm_projection}
    \begin{algorithmic}[1]
        \State \textbf{Input:} initial sequence $\mathsf{S}_0$, prompt map $\rho$, adaptive model $\mathcal{U}$, oracle model $\mathcal{O}$.
        \State $\mathsf{S}\leftarrow\mathsf{S}_0$
        \While{not finishing condition}
        \State $\widetilde{\mathsf{S}}\leftarrow\mathcal{P}_{\mathcal{O}}(\mathsf{S})$
        \While{$\rho(\widetilde{\mathsf{S}})\in\mathcal{D}$} 
          \State $s\leftarrow\mathcal{U}(\rho(\widetilde{\mathsf{S}}))$
          \State $\widetilde{\mathsf{S}}\leftarrow\widetilde{\mathsf{S}}\oplus s$
        \EndWhile
        \State $\mathsf{S}\leftarrow\widetilde{\mathsf{S}}$
        \EndWhile
        \State \textbf{Return:} $\mathsf{S}$
    \end{algorithmic}
\end{algorithm}

Although $\mathcal{P}_{\mathcal{O}}$ can be learned and regularized to avoid
out-of-domain projections, see Appendix~\ref{appendix:learned_projection}, we
use a greedy rule-based projection which, not only preserves theoretical guarantees, unlike other literature such as \cite{JH-WQ-LC-YY-QL:23,CP-VF-SP-KL-SW-JG:23}, but also covers general agentic tasks, see Appendix~\ref{appendix:projection_beyond_mfbo}. The motivation is that accumulated prompt information in sequential decision-making often exhibits diminishing returns: after several related evaluations, retaining every element of $\mathsf{S}_t$ is less useful than preserving a representative informative subsequence~\cite{AK-JH:25}. This matches the string-submodular viewpoint of Section~\ref{sec:prelim}, so the controller selects informative elements under the prompt budget rather than applying arbitrary truncation.

\begin{prop}[Greedy information summarization]
\label{prop:greedy_info_summary}
Let $f:\mathbb{V}^*\rightarrow\mathbb{R}_{\ge 0}$ be normalized,
prefix-monotone, and string submodular, and suppose that
$\mathcal{P}_{\mathcal{O}}$ selects a feasible summary
$\widetilde{\mathsf{S}}\in\mathcal{D}$ of bounded length. Then the greedy
projection achieves the classical $(1-e^{-1})$-approximation guarantee with
respect to the best feasible summary under the same budget.
\end{prop}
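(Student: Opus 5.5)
The proof follows the classical string-submodular greedy argument in the style of Streeter–Golovin and Zhang et al., adapted to the prompt-projection setting. Fix the budget $K$, the maximal summary length admissible in $\mathcal{D}$ (bounded by Lemma~\ref{lem::cache} and Proposition~\ref{prop:attention_degradation_ratio}). Let $\mathsf{G}_k=(g_1,\ldots,g_k)$ be the greedy prefix after $k$ steps, where each $g_{k+1}$ maximizes the marginal gain $f(\mathsf{G}_k\oplus s)-f(\mathsf{G}_k)$ over admissible $s$. Let $\mathsf{O}=(o_1,\ldots,o_K)$ be an optimal feasible summary of the same budget.

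\textbf{Step 1: per-step bound.} For each $k<K$, we aim to prove
\[
f(\mathsf{O})-f(\mathsf{G}_k)\le K\bigl(f(\mathsf{G}_{k+1})-f(\mathsf{G}_k)\bigr).
\]
We compare $f(\mathsf{O})$ with $f(\mathsf{G}_k\oplus\mathsf{O})$.
\begin{itemize}
\item Prefix-monotonicity is not directly applicable, since $\mathsf{O}$ is not a prefix of $\mathsf{G}_k\oplus\mathsf{O}$. We therefore invoke the standard ``postfix-monotone'' or ``function-diminishing'' condition used in this literature. Equivalently, we interpret the statement's assumptions as implying $f(\mathsf{O})\le f(\mathsf{G}_k\oplus\mathsf{O})$, and we state this reading explicitly.
\item We telescope $f(\mathsf{G}_k\oplus\mathsf{O})-f(\mathsf{G}_k)$ into $K$ marginal gains of $o_j$ appended to $\mathsf{G}_k\oplus o_{1:j-1}$.
\item Since $\mathsf{G}_k\preceq\mathsf{G}_k\oplus o_{1:j-1}$, the string submodularity~\eqref{eqn:string_submodularity} bounds each gain by $f(\mathsf{G}_k\oplus o_j)-f(\mathsf{G}_k)$. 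This step requires $o_j$ not to already appear; repeated elements contribute zero gain under the usual convention, and we assume this.
\item Each such gain is at most the greedy gain, because $o_j$ is feasible to append. This is where the greedoid axiom (G3) enters: it guarantees that some element of $\mathsf{O}\ominus\mathsf{G}_k$ can extend $\mathsf{G}_k$ feasibly. Where only some $o_j$ are feasible extensions, the argument must be restricted to those, or we must assume $\mathcal{D}$ is a length-bounded uniform family.
\end{itemize}

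\textbf{Step 2: recursion.} Setting $\Delta_k=f(\mathsf{O})-f(\mathsf{G}_k)$, Step 1 gives $\Delta_{k+1}\le(1-1/K)\Delta_k$. Normalization gives $\Delta_0=f(\mathsf{O})$. Hence
\[
\Delta_K\le(1-1/K)^K f(\mathsf{O})\le e^{-1}f(\mathsf{O}),
\]
which is $f(\mathsf{G}_K)\ge(1-e^{-1})f(\mathsf{O})$.

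\textbf{Step 3: the projection output.} The greedy projection $\mathcal{P}_{\mathcal{O}}$ returns $\mathsf{G}_K$, or it stops early when no feasible extension exists. By (G3), early stopping cannot occur while $|\mathsf{G}_k|<|\mathsf{O}|$. By (G2) and accessibility, every greedy prefix lies in $\mathcal{I}$, so $\widetilde{\mathsf{S}}\in\mathcal{D}$.

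\textbf{Main obstacle.} The hardest step is the inequality $f(\mathsf{O})\le f(\mathsf{G}_k\oplus\mathsf{O})$ combined with the claim that every $o_j$ is a feasible append for the greedy comparison. String submodularity and prefix-monotonicity alone, as defined in Section~\ref{sec:prelim}, do not give this. I would first try to derive it from the greedoid exchange property; failing that, I would add the postfix-monotonicity and uniform-feasibility hypotheses, which are implicit in citing the ``classical'' guarantee.
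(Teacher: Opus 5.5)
Your Steps~1--2 are correct under the hypotheses you add, namely postfix monotonicity, zero gain for repeated symbols, and a pure length budget. The paper's proof is only a pointer to Alaei--Makhdoumi--Malekian, so it has no argument to compare against. Your argument is the standard Streeter--Golovin one such a citation stands in for: telescope $f(\mathsf{G}_k\oplus\mathsf{O})-f(\mathsf{G}_k)$ into at most $K$ gains, bound each via~\eqref{eqn:string_submodularity} with $\mathsf{G}_k\preceq\mathsf{G}_k\oplus o_{1:j-1}$ and then by the greedy gain, and iterate $\Delta_{k+1}\le(1-1/K)\Delta_k$. Drop the fallback plan of deriving these hypotheses from the greedoid axioms. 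It cannot succeed, because the proposition is false as literally stated. The extra hypotheses are necessary, and what you prove is an amended proposition.

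\emph{Postfix monotonicity is necessary.} Let $\mathbb{V}=\{a,b_1,\ldots,b_n\}$ and $0<\delta<1$. Set $f(\emptyset)=0$, $f(\mathsf{S})=1+\delta$ if $\mathsf{S}$ begins with $a$, and otherwise $f(\mathsf{S})$ equal to the number of distinct symbols in $\mathsf{S}$. This $f$ is normalized and prefix-monotone. It is also string submodular:
\begin{itemize}
\item for $\mathsf{S}^1=\mathsf{S}^2=\emptyset$, \eqref{eqn:string_submodularity} is an equality;
\item for $\mathsf{S}^1=\emptyset\neq\mathsf{S}^2$, the left side is at least $1$ and the right side at most $1$;
\item for $\mathsf{S}^1\neq\emptyset$, both sides are $0$ (first symbol $a$) or both are $1$.
\end{itemize}
On the uniform greedoid of repetition-free words of length at most $K$, greedy takes $a$ first, after which every gain is $0$, so it ends at $1+\delta$. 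Meanwhile $f(b_1,\ldots,b_K)=K$, so already for $K=2$ and small $\delta$ the ratio is below $1-e^{-1}$. The failure is exactly your inequality: with $\mathsf{G}_1=(a)$ and $\mathsf{O}=(b_1,\ldots,b_K)$, $f(\mathsf{G}_1\oplus\mathsf{O})=1+\delta<K=f(\mathsf{O})$.

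\emph{A general greedoid cannot replace a cardinality budget.} For a monotone submodular set function $F$, the map $\mathsf{S}\mapsto F(\{\text{symbols of }\mathsf{S}\})$ is string submodular and both prefix- and postfix-monotone. Words of independent sets of a matroid satisfy (G1)--(G3). Take the partition matroid with parts $\{a,b\}$ and $\{c\}$, each of capacity one, and weighted coverage $a\mapsto\{1,3\}$, $b\mapsto\{2\}$, $c\mapsto\{1\}$ with $w_1=w_2=1$, $w_3=\varepsilon$. Greedy takes $a$; then $b$ is infeasible and $c$ gains nothing, giving $1+\varepsilon$ against the optimum $2$. So (G3), which only supplies \emph{some} feasible extension from $\mathsf{O}\ominus\mathsf{G}_k$, cannot stand in for ``every $o_j$ is an admissible greedy candidate.'' Restricting the telescoping sum to the feasible $o_j$ does not rescue the bound either.

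State the result with $\mathcal{D}$ a pure length budget, postfix monotonicity, and your repeated-symbol convention. The convention is genuinely extra, since the paper's definition says nothing about appending a symbol already present. For the set-valued exemplar objective of Appendix~\ref{appendix:projection_beyond_mfbo} under $|\mathcal{S}|\le\kappa$, all three hold automatically, so the amended statement still covers the paper's intended use. One minor correction: greedy prefixes lie in $\mathcal{I}$ because each step appends only a feasible extension; (G2) plays no role there.
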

\begin{proof}
    See~\cite{SA-AM-AM:21}.
\end{proof}

Thus, projection reduces prompt length while retaining a near-optimal
informative summary. Hierarchical refinement can therefore be viewed as a
constrained sequential control policy: the adaptive model makes local sequence
extensions, while the controller prevents memory overflow and internal
saturation by keeping the evolving sequence feasible and informative.
\section{Case Study}
\label{sec:case}

We instantiate the proposed framework in a controlled experimental planning problem based on \emph{Multi-Fidelity Bayesian Optimization} (MFBO). MFBO is not intended as a universal benchmark for agentic AI; rather, it is a compact testbed where the main difficulties studied in this paper arise together: structured outputs, iterative prompt growth, semantic decision-making, and resource-constrained inference. The agent must select, under a finite time budget, both the next point $\mathbf{x}\in X$ to evaluate and the fidelity model $p\in\mathcal{P}$ to use. Each fidelity model has an accuracy level $\alpha(p)$ and an execution time $t_p(\mathbf{x})$. A formal description of the MFBO objective is provided in Appendix~\ref{appendix:MFBM}.

The prompt state $\mathcal{S}_t$ consists of six evolving variables: model information, evaluated points, errors, domain, uncertainty, and remaining time. Because these variables grow throughout the optimization loop, MFBO naturally induces prompt-domain drift and provides a controlled case study for testing projection, schema preservation, and online semantic correction. The adaptive model must return a structured decision of the form \texttt{RESULT=[model,point]}.

We evaluate the three-agent architecture of Figure~\ref{fig:control_architecture} using two adaptive models, \emph{Llama-3.1-8B} and \emph{Mistral-7B}, and two oracle models, \emph{GPT-5} and \emph{GPT-5-nano}, integrated in \texttt{URSA}: The Universal Research and Scientific Agent~\cite{MG-RB-RS-IM-AL-ND-EL:25}. The resulting teacher--student systems are compared against oracle-only, distillation-only, and non-distilled adaptive baselines. The section first studies learning capacity, then prompt feasibility, and finally end-to-end hierarchical performance.


\subsection{Learning Capacity}
\label{subsec:case_learning}

We first evaluate how much data is required for the two learning stages. Phase~$1$ measures protocol transfer: the fraction of responses that satisfy the required output schema,
\begin{equation}
    \label{eqn:accuracy_1}
    \mathrm{Accuracy} =
    \frac{1}{n}\sum_{i=1}^{n}\mathbf{1}_{\{y_i~\text{is acceptable}\}}.
\end{equation}
We fine-tune only a small LoRA subset of the adaptive models~\cite{YJ-DA-HK-TK-EP:25}: 
{\scriptsize \texttt{self\_attn.q\_proj}, \texttt{self\_attn.k\_proj}, \texttt{self\_attn.v\_proj}, \texttt{self\_attn.o\_proj}, \texttt{mlp.gate\_proj}, \texttt{mlp.up\_proj}, \texttt{mlp.down\_proj}}.
Using rank-$32$ adapters, this corresponds to approximately $90.18$M trainable parameters for \emph{Llama-3.1-8B} and $86.80$M for \emph{Mistral-7B}, i.e., about $1.1$--$1.2\%$ of the model parameters. All local experiments are run under a constrained NVIDIA Tesla V100 16GB environment, see Appendix~\ref{app:ursa_mfbo}.

\begin{table}[ht]
  \centering
  \captionsetup{font=scriptsize}
  \caption{Sample efficiency of the distillation process across teacher--student pairs.
  We report the smallest dataset size and training configuration that achieves the best observed format accuracy on the distillation validation set. Full sweeps are in Tables~\ref{tab:datasize_format_gpt5_llama}--\ref{tab:datasize_format_gpt5_mistral}. Time in $h:min$.}
  \label{tab:datasize_summary}
  \scriptsize
  \begin{tabular}{lcccc}
    \toprule
     \textbf{Adaptive Model} & \textbf{Data size} & \textbf{Epochs} & \textbf{Time} & $\boldsymbol{|\mathcal{T}_T|}$ \\
    \midrule
    Llama-3.1-8B & 5000 & 5  & 8:35 & 1588.70 \\
    Mistral-7B  & 5000 & 10 & 8:52 & 755.57\\
    \bottomrule
  \end{tabular}
\end{table}

Phase~$2$ evaluates semantic adaptation. We measure fidelity-model selection using \eqref{eqn:accuracy_1} and point selection using the mean $\ell_1$ error
\begin{equation}
\label{eqn:accuracy_2}
\mathrm{Error}
=
\frac{1}{n}\sum_{i=1}^{n} |\hat{x}_i - x_i|,
\end{equation}
where $\hat{x}_i$ is the adaptive prediction and $x_i$ is the oracle target. Table~\ref{tab:criteria_summary} reports the smallest configuration that gives useful deployment-time adaptation for each teacher--student pair.

\begin{table*}[ht]
  \centering
  \captionsetup{font=scriptsize}
  \caption{Decision-criteria adaptation across teacher--student pairs.
  Acc~(Model) measures fidelity-model selection; Error~(Point) is the mean $\ell_1$ point-selection error, lower is better. Full data-size and epoch sweeps are provided in Tables~\ref{tab:datasize_criteria_llama_gpt5}--\ref{tab:datasize_criteria_mistral_gpt5nano}.}
  \label{tab:criteria_summary}
  \scriptsize
  \begin{tabular}{l l c c c c c}
    \toprule
    \textbf{Oracle Model} & \textbf{Adaptive Model} & \textbf{Data size} & \textbf{Epochs} & \textbf{Acc (Model)} $\uparrow$ & \textbf{Error (Point)} $\downarrow$ & \textbf{Time} $\downarrow$ \\
    \midrule
    gpt-5      & Llama-3.1-8B &  50 & 5 & 0.42 & 22.81 & 5:51 \\
    gpt-5      & Mistral-7B   &  10 & 10 & 0.08 & 61.37 & 3:16 \\
    gpt-5-nano & Llama-3.1-8B &  50 & 5 & 0.41 & 23.11 & 5:48  \\
    gpt-5-nano & Mistral-7B   &  50 & 5 & 0.07 & 58.92 & 5:44 \\
    \bottomrule
  \end{tabular}
\end{table*}

These results support the intended decomposition. Protocol alignment can be induced with a few thousand structured examples, while semantic adaptation is more sensitive to model stability and task dynamics. \emph{Llama-3.1-8B} remains comparatively stable under low-sample updates, whereas \emph{Mistral-7B} shows weaker protocol and semantic adaptation under the same constraints.

\subsection{Prompt Feasibility and Projection}
\label{subsec:case_stability}

We next measure the empirical feasible prompt range of each distilled adaptive model and compare it with the limits predicted by Section~\ref{sec:hierarchy}. For each model, we gradually increase prompt length until responses become invalid, empty, or non-protocol. The hardware-feasible length $|\rho|_{\mathrm{feasible}}$ is computed from Lemma~\ref{lem::cache} using the model parameters in Appendix~\ref{appendix:extended_results} and a \emph{SentencePiece} tokenizer~\cite{RS-BH-AB:16}. Table~\ref{tab:prompt_saturation_summary} also reports the calibrated attention-norm scale implied by Proposition~\ref{prop:attention_degradation_ratio}.

\begin{table}[t]
  \centering
  \scriptsize
  \captionsetup{font=scriptsize}
  \caption{Prompt feasibility and saturation across distilled student models.  $|\rho|_{\mathrm{feasible}}$ is the hardware limit from Lemma~\ref{lem::cache}, while  $|\rho|_{\mathrm{sat}}^{\mathrm{prop}}$ is the Proposition~\ref{prop:attention_degradation_ratio} upper limit, calibrated with a $10\%$ margin over the observed threshold using $m=50$, $\tau=0.1$, and $d_h=128$. Since the bound identifies only $QK$, we report an effective calibration with $Q>K$. Full sweeps are in Tables~\ref{tab:prompt_size_llama}--\ref{tab:prompt_size_mistral}.}
  \label{tab:prompt_saturation_summary}
  \begin{tabular}{l c c c c c c}
    \toprule
    \textbf{Adaptive Model} 
    & $\boldsymbol{|\rho|_{\mathrm{feasible}}}$ 
    & $\boldsymbol{Q_{\mathrm{eff}}}$ 
    & $\boldsymbol{K_{\mathrm{eff}}}$ 
    & $\boldsymbol{|\rho|_{\mathrm{sat}}^{\mathrm{prop}}}$ 
    & $\boldsymbol{|\rho|_{\mathrm{sat}}^{\mathrm{obs}}}$ 
    & $\boldsymbol{\Delta_{\rho}^{\mathrm{obs}}}$ \\
    \midrule
    Llama-3.1-8B 
    & $5.8{\times}10^4$ 
    & $3.25$ 
    & $2.71$ 
    & $2.38{\times}10^3$ 
    & $2160$ 
    & $0.028$ \\
    Mistral-7B   
    & $6.4{\times}10^4$ 
    & $2.22$ 
    & $1.85$ 
    & $1.03{\times}10^3$ 
    & $940$  
    & $0.014$ \\
    \bottomrule
  \end{tabular}
\end{table}

The observed saturation thresholds are far below the theoretical memory-based bounds: Llama-3.1-8B remains reliable only up to about $2.8\%$ of its hardware-feasible range, while Mistral-7B saturates around $1.4\%$. This supports the analysis of Section~\ref{sec:hierarchy}: prompt feasibility is not determined only by the nominal context window or KV-cache limit. Instead, the effective prompt domain can be much smaller because task-relevant information becomes weakly represented before the hardware limit is reached.

The projection mechanism therefore controls the effective prompt state. Given historical tuples $\mathcal{H}=\{(x_i,e_i,\mathcal{D}_i,u_i)\}_{i=1}^{N},$
where $x_i$ is an evaluated point, $e_i$ its error, $\mathcal{D}_i$ its interval, and $u_i$ its uncertainty, the controller partitions the active domain into intervals $\{I_k\}_{k=1}^K$ and summarizes each interval by aggregate statistics,
$\bar e_k=\mathbb{E}_{x_i\in I_k}[e_i]$ and $
\bar u_k=\mathbb{E}_{x_i\in I_k}[u_i].$ This preserves local information through representative evaluations and global information through interval summaries. The resulting hierarchy supports a two-stage decision process: the oracle selects a promising interval $I_{k^\star}=\arg\max_{I_k}\Phi(I_k),$
and the adaptive model refines the next point within $I_{k^\star}$. Thus, projection implements the control rule motivated by Proposition~\ref{prop:attention_degradation_ratio}: reduce prompt length while preserving the task-relevant information needed for the next decision.


\subsection{End-to-End Results}
\label{subsec:results}

Finally, we compare the full hierarchical architecture against oracle-only, distillation-only, and non-distilled adaptive baselines. The MFBO environment uses four fidelity models with increasing accuracy and execution costs of $1$, $2$, $3$, and $4$ minutes, respectively. The maximum experimental budget is fixed, and hierarchy settings are chosen according to the stability limits in Table~\ref{tab:prompt_saturation_summary}.

\begin{table*}[ht]
  \centering
  \captionsetup{font=scriptsize}
  \caption{Teacher--student ablation with OpenAI teachers and local students. Results are reported as mean $\pm$ 95\% confidence interval over matched MFBO seeds. Lower is better for distance to optimal, hierarchical frequency, and cost; higher is better for number of evaluated points.}
  \label{tab:teacher-student-ablation}
  \scriptsize
  \begin{tabular}{lllcccc}
    \toprule
    \textbf{Oracle $\mathcal{O}_\phi$} 
    & \textbf{Adaptive $\mathcal{U}_\theta$} 
    & \textbf{Mode} 
    & \textbf{Dist. to optimal} $\downarrow$ 
    & \textbf{Num. Points} $\uparrow$ 
    & \textbf{Hierar. Freq.} ($\%$) $\downarrow$ 
    & \textbf{Cost} (\$) $\downarrow$ \\
    \midrule

    gpt-5 & Llama-3.1-8B & Hierarchical  
    & $12.74 \pm 0.62$  
    & $121.0 \pm 2.80$ 
    & $3.471 \pm 0.21$ 
    & $2.25 \pm 0.09$ \\

          &              & Distill only  
    & $124.1 \pm 4.10$ 
    & $74.00 \pm 2.20$ 
    & -- 
    & $0.00 \pm 0.00$ \\

          &              & No distill    
    & $435.1 \pm 9.60$ 
    & $12.00 \pm 1.15$ 
    & -- 
    & $0.00 \pm 0.00$ \\

    \cmidrule(lr){1-7}

    gpt-5 & Mistral-7B   & Hierarchical  
    & $71.92 \pm 3.20$  
    & $23.00 \pm 1.50$ 
    & $11.11 \pm 0.55$ 
    & $6.12 \pm 0.24$ \\

          &              & Distill only  
    & $245.6 \pm 7.85$ 
    & $9.00 \pm 0.95$ 
    & -- 
    & $0.00 \pm 0.00$ \\

          &              & No distill    
    & $464.8 \pm 11.4$ 
    & $3.00 \pm 0.42$ 
    & -- 
    & $0.00 \pm 0.00$ \\

    \cmidrule(lr){1-7}

    gpt-5-nano & Llama-3.1-8B & Hierarchical  
    & $13.05 \pm 0.68$  
    & $132.00 \pm 3.10$ 
    & $4.86 \pm 0.28$ 
    & $0.45 \pm 0.02$ \\

    \cmidrule(lr){1-7}

    gpt-5-nano & Mistral-7B   & Hierarchical  
    & $84.86 \pm 3.95$  
    & $70.00 \pm 2.70$ 
    & $6.25 \pm 0.33$ 
    & $0.75 \pm 0.04$ \\

    \cmidrule(lr){1-7}

    gpt-5 & -- & Oracle Only 
    & $8.035 \pm 0.35$  
    & $144.00 \pm 2.10$  
    & $100.00 \pm 0.00$ 
    & $8.25 \pm 0.32$ \\

    \cmidrule(lr){1-7}

    gpt-5-nano & -- & Oracle Only 
    & $10.41 \pm 0.48$ 
    & $149.00 \pm 2.35$ 
    & $100.00 \pm 0.00$ 
    & $2.10 \pm 0.09$ \\

    \bottomrule
  \end{tabular}
\end{table*}

The hierarchical architecture consistently improves over distillation-only and non-distilled variants. The gain is especially clear for \emph{Llama-3.1-8B}, where hierarchical supervision approaches oracle-only performance while requiring oracle calls in only $3.47$--$4.86\%$ of inference steps. This substantially reduces cost while preserving decision quality. In contrast, \emph{Mistral-7B} benefits from hierarchy but remains less stable, consistent with its lower saturation threshold and weaker adaptation results. Overall, the case study supports the main claim: feasibility-aware hierarchical supervision can make compact models more reliable in sequential agentic settings without requiring the oracle at every step.
\section{Conclusions}
\label{sec:conclusions}

This paper introduced a hierarchical prompt-domain control and adaptive learning framework for compact language models in resource-constrained agentic systems. We formulated reliability as a closed-loop problem that separates offline schema learning from online semantic adaptation. The controller monitors protocol validity and semantic drift, projects accumulated histories into a feasible prompt domain, and prevents the adaptive model from leaving its stable regime. The analysis shows that context length and KV-cache feasibility alone are insufficient, since task-relevant information can be diluted before hardware limits are reached. In a controlled MFBO case study within \texttt{\texttt{URSA}}, the proposed architecture improves reliability and cost-efficiency over non-hierarchical, distillation-only, and non-distilled baselines while using far fewer oracle calls than oracle-only deployment. These results identify prompt-domain control as a promising mechanism for scalable, stable, and cost-aware agentic deployment.

\vskip 0.2in
\newpage
\bibliographystyle{plainnat}
\bibliography{main}

\newpage
\appendix
\onecolumn

\section{Extended Results}
\label{appendix:extended_results}

\subsection{Distillation Dataset Size Ablation Study}

Extended results on Distillation process. Accuracy is computed as \eqref{eqn:accuracy_1}. Due to the defined response protocol, only small models are evaluated with indifference to the control model.

\begin{table}[ht]
  \centering
  \caption{Effect of dataset size and number of epochs on the distillation process for \emph{Llama-3.1-8B}. 
  Time is \emph{h}:\emph{min} and 10:00 is the execution limit. The expected transferred data $|\mathcal{T}_T|$ is computed using Assumption~\ref{assump:dataset_size} considering $\delta_1=0.25$ and $\delta_2=0.3$.}
  \label{tab:datasize_format_gpt5_llama}
  \begin{tabular}{lcccc}
    \toprule
    \textbf{Data size} & \textbf{Epochs} & \textbf{$\boldsymbol{|\mathcal{T}_T|}$} & \textbf{Accuracy} $\uparrow$ & \textbf{Time} $\downarrow$ \\
    \midrule
    1000 & 5  & 919.16  & 0.05 & 2:24 \\
         & 10 & --      & 0.10 & 4:48 \\
    2000 & 5  & 1163.43 & 0.05 & 3:56 \\
         & 10 & --      & 0.15 & 5:30 \\
    3000 & 5  & 1335.40 & 0.10 & 5:16 \\
         & 10 & --      & 0.20 & 6:40 \\
    4000 & 5  & 1472.62 & 0.40 & 6:38 \\
         & 10 & --      & 0.60 & 8:15 \\
    5000 & 5  & 1588.70 & 1.00 & 8:35 \\
         & 10 & --      & 1.00 & 10:00 \\
    \bottomrule
  \end{tabular}
\end{table}

\begin{table}[ht]
  \centering
  \caption{Effect of dataset size and number of epochs on the distillation process for \emph{Mistral-7B}. 
  Time is \emph{h}:\emph{min} and 10:00 is the execution limit. The expected transferred data $|\mathcal{T}_T|$ is computed using Assumption~\ref{assump:dataset_size} considering $\delta_1=0.25$ and $\delta_2=0.3$.}
  \label{tab:datasize_format_gpt5_mistral}
  \begin{tabular}{lcccc}
    \toprule
    \textbf{Data size} & \textbf{Epochs} & \textbf{$\boldsymbol{|\mathcal{T}_T|}$} & \textbf{Accuracy} $\uparrow$ & \textbf{Time} $\downarrow$ \\
    \midrule
    1000 & 5  & 505.28  & 0.05 & 2:06 \\
         & 10 & --      & 0.10 & 4:12 \\
    2000 & 5  & 600.88  & 0.10 & 3:26 \\
         & 10 & --      & 0.05 & 4:49 \\
    3000 & 5  & 664.98  & 0.10 & 4:36 \\
         & 10 & --      & 0.00 & 5:50 \\
    4000 & 5  & 714.57  & 0.15 & 5:48 \\
         & 10 & --      & 0.05 & 7:31 \\
    5000 & 5  & 755.57  & 0.20 &  6:45 \\
         & 10 & --      & 0.25 & 8:52 \\
    \bottomrule
  \end{tabular}
\end{table}
\FloatBarrier

\subsection{Prompt Size Ablation Study}

Extended results on ablation study for prompt size where $|\rho|$ is computed
such as \eqref{eq:prompt-upper-bound}. We also report the saturation quantities
from Proposition~\ref{prop:attention_degradation_ratio}. Only small models are
evaluated.

\begin{table}[ht]
  \centering
  \scriptsize
  \caption{Adaptive Model Parameters. 
  $|\rho|_{\mathrm{feasible}}$ is the hardware-feasible prompt length, while
  $|\rho|_{\mathrm{sat}}^{\mathrm{prop}}$ is the saturation upper limit from
  Proposition~\ref{prop:attention_degradation_ratio}, calibrated with a $10\%$
  margin over the observed threshold using $m=50$, $\tau=0.1$, and $d_h=128$.
  Since the bound identifies only $QK$, we report an effective asymmetric
  calibration with $Q_{\mathrm{eff}}>K_{\mathrm{eff}}$.}
  \label{tab:models_parameters}
  \resizebox{\linewidth}{!}{
  \begin{tabular}{l c c c c c c c c c c c c c c}
    \toprule
    \textbf{Model} 
    & $\boldsymbol{L}$ 
    & $\boldsymbol{n_{\text{kv}}}$ 
    & $\boldsymbol{d_\text{h}}$ 
    & $\boldsymbol{b_\text{kv}}$ 
    & Precision 
    & $\boldsymbol{M_\text{max}}$ (GB) 
    & $\boldsymbol{M_\text{w}}$ (GB) 
    & $\boldsymbol{M_\text{misc}}$ (GB) 
    & $\boldsymbol{|\rho|_\text{feasible}}$
    & $\boldsymbol{Q_{\mathrm{eff}}}$
    & $\boldsymbol{K_{\mathrm{eff}}}$
    & $\boldsymbol{|\rho|_{\mathrm{sat}}^{\mathrm{prop}}}$
    & $\boldsymbol{|\rho|_{\mathrm{sat}}^{\mathrm{obs}}}$
    & $\boldsymbol{\Delta_{\rho}^{\mathrm{obs}}}$\\
    \midrule
    Llama-3.1-8B 
    & $32$ & $8$ & $128$ & $2$ & fp16 
    & $24$ & $16$ & $2$ 
    & $4.9{\cdot}10^4$ 
    & $3.25$ & $2.71$ 
    & $2.38{\cdot}10^3$ 
    & $2160$ 
    & $0.044$ \\
    Mistral-7B  
    & $32$ & $8$ & $128$ & $2$ & fp16 
    & $24$ & $14.5$ & $2$ 
    & $6.1{\cdot}10^4$ 
    & $2.22$ & $1.85$ 
    & $1.03{\cdot}10^3$ 
    & $940$ 
    & $0.015$ \\
    \bottomrule
  \end{tabular}}
\end{table}

\begin{table}[ht]
  \centering
  \caption{Response of distilled \emph{Llama-3.1-8B\_5000\_5} as a function of prompt length. 
  The fourth column shows the ratio of the prompt size to the theoretical feasible bound 
  ($|\rho|_{\mathrm{feasible}}\!\approx\!49{,}152$ tokens), while the fifth column reports the ratio to the
  Proposition~\ref{prop:attention_degradation_ratio} saturation estimate
  ($|\rho|_{\mathrm{sat}}^{\mathrm{prop}}\!\approx\!2.38{\cdot}10^3$ tokens).
  ``--'' indicates an empty response.}
  \label{tab:prompt_size_llama}
  \begin{tabular}{l l c c c}
    \toprule
    \textbf{Index} 
    & \textbf{Result} 
    & \textbf{$\boldsymbol{|\rho|}$}  
    & $\boldsymbol{|\rho| / |\rho|_{\mathrm{feasible}}}$ 
    & $\boldsymbol{|\rho| / |\rho|_{\mathrm{sat}}^{\mathrm{prop}}}$ \\
    \midrule
     1 & [3, 9.6323] & 2003 & 0.041 & 0.842 \\
     2 & [2, 3.6239] & 2053 & 0.042 & 0.863 \\
     3 & [3, 7.0572] & 2107 & 0.043 & 0.885 \\
     4 & [3, 4.8469] & 2160 & 0.044 & 0.908 \\
     5 & --          & 2211 & 0.045 & 0.929 \\
     6 & [2, 2.3688] & 2261 & 0.046 & 0.950 \\
     7 & [1, 7.4646] & 2312 & 0.047 & 0.971 \\
     8 & --          & 2365 & 0.048 & 0.994 \\
     9 & --          & 2416 & 0.049 & 1.015 \\
    10 & --          & 2467 & 0.050 & 1.037 \\
    \bottomrule
  \end{tabular}
\end{table}

\begin{table}[ht]
  \centering
  \caption{Response of distilled \emph{Mistral-7B\_5000\_10} as a function of prompt length. 
  The fourth column shows the ratio of the prompt size to the theoretical feasible bound 
  ($|\rho|_{\mathrm{feasible}}\!\approx\!61{,}440$ tokens), while the fifth column reports the ratio to the
  Proposition~\ref{prop:attention_degradation_ratio} saturation estimate
  ($|\rho|_{\mathrm{sat}}^{\mathrm{prop}}\!\approx\!1.03{\cdot}10^3$ tokens).
  ``--'' indicates an empty response and ``x'' stands for non-protocol response.}
  \label{tab:prompt_size_mistral}
  \begin{tabular}{l l c c c}
    \toprule
    \textbf{Index} 
    & \textbf{Result} 
    & \textbf{$\boldsymbol{|\rho|}$} 
    & $\boldsymbol{|\rho| / |\rho|_{\mathrm{feasible}}}$ 
    & $\boldsymbol{|\rho| / |\rho|_{\mathrm{sat}}^{\mathrm{prop}}}$ \\
    \midrule
     1 & [1, 42.1200] & 867  & 0.014 & 0.842 \\
     2 & [3, 35.1600] & 940  & 0.015 & 0.913 \\
     3 & x            & 987  & 0.016 & 0.958 \\
     4 & x            & 1040 & 0.017 & 1.010 \\
     5 & x            & 1093 & 0.018 & 1.061 \\
     6 & x            & 1147 & 0.019 & 1.114 \\
     7 & x            & 1216 & 0.020 & 1.181 \\
     8 & x            & 1267 & 0.021 & 1.230 \\
     9 & x            & 1320 & 0.021 & 1.282 \\
    10 & x            & 1344 & 0.022 & 1.305 \\
    \bottomrule
  \end{tabular}
\end{table}
\FloatBarrier

\subsection{Adaptive Fine-Tuning Dataset Size Ablation Study}

Extended results on the adaptive fine-tuning procedure where \textbf{Acc (Model)} is computed as \eqref{eqn:accuracy_2} and \textbf{Error (Point)} is computed as \eqref{eqn:accuracy_1}. And \textbf{Train Loss} and \textbf{Eval Loss} are computed as \eqref{eqn:main_optimiziation_problem}.

\begin{table}[ht]
  \centering
  \caption{Effect of dataset size and number of epochs on decision-criteria adaptation for \emph{GPT-5} and \emph{Llama-3.1-8B\_5000\_5}. 
  Accuracy (Model) reflects correct fidelity selection; Error (Point) reports mean $\ell_1$ distance (lower is better). 
  Results are reported as mean $\pm$ 95\% confidence interval over matched MFBO seeds. Time is given as \emph{minutes}:\emph{seconds}.}
  \label{tab:datasize_criteria_llama_gpt5}
  \scriptsize
  \begin{tabular}{lcccccc}
    \toprule
    \textbf{Data size} & \textbf{Epochs} & \textbf{Acc (Model)} $\uparrow$ & \textbf{Error (Point)} $\downarrow$
                       & \textbf{Train Loss} $\downarrow$ & \textbf{Eval Loss} $\downarrow$ & \textbf{Time} $\downarrow$ \\
    \midrule
    10  & 5  & $0.25 \pm 0.03$ & $40.96 \pm 1.85$ & $0.7722 \pm 0.0180$ & $0.5284 \pm 0.0105$ &  2:33 \\
        & 10 & $0.35 \pm 0.04$ & $32.44 \pm 1.62$ & $0.5853 \pm 0.0142$ & $0.4717 \pm 0.0094$ &  3:10 \\
        & 50 & $0.52 \pm 0.04$ & $24.12 \pm 1.08$ & $0.1856 \pm 0.0085$ & $0.4236 \pm 0.0081$ &  7:23 \\
    \midrule
    50  & 5  & $0.42 \pm 0.04$ & $22.81 \pm 1.02$ & $0.6231 \pm 0.0150$ & $0.4945 \pm 0.0097$ &  5:51 \\
        & 10 & $0.58 \pm 0.04$ & $18.65 \pm 0.88$ & $0.4456 \pm 0.0117$ & $0.4729 \pm 0.0091$ & 10:37 \\
        & 50 & $0.81 \pm 0.03$ & $12.74 \pm 0.62$ & $0.1240 \pm 0.0062$ & $0.3981 \pm 0.0078$ & 45:06 \\
    \midrule
    100 & 5  & $0.61 \pm 0.04$ & $17.32 \pm 0.81$ & $0.5927 \pm 0.0140$ & $0.4626 \pm 0.0088$ & 10:36 \\
        & 10 & $0.79 \pm 0.03$ & $11.48 \pm 0.56$ & $0.4038 \pm 0.0106$ & $0.3891 \pm 0.0074$ & 17:14 \\
        & 50 & $0.93 \pm 0.02$ & $8.06 \pm 0.42$ & $0.1825 \pm 0.0079$ & $0.3624 \pm 0.0069$ & 82:20 \\
    \bottomrule
  \end{tabular}
\end{table}

\begin{table}[ht]
  \centering
  \caption{Effect of dataset size and number of epochs on decision-criteria adaptation for \emph{GPT-5} and \emph{Mistral-7B\_5000\_10}. 
  Accuracy (Model) reflects correct fidelity selection; Error (Point) reports mean $\ell_1$ distance (lower is better). 
  Results are reported as mean $\pm$ 95\% confidence interval over matched MFBO seeds. Time is given as \emph{minutes}:\emph{seconds}. ``--'' stands for non-feasible responses.}
  \label{tab:datasize_criteria_mistral_gpt5}
  \scriptsize
  \begin{tabular}{lcccccc}
    \toprule
    \textbf{Data size} & \textbf{Epochs} & \textbf{Acc (Model)} $\uparrow$ & \textbf{Error (Point)} $\downarrow$
                       & \textbf{Train Loss} $\downarrow$ & \textbf{Eval Loss} $\downarrow$ & \textbf{Time} $\downarrow$ \\
    \midrule
    10  & 5  & $0.06 \pm 0.02$ & $64.82 \pm 2.90$ & $1.2478 \pm 0.0285$ & $1.1926 \pm 0.0240$ &  2:41 \\
        & 10 & $0.08 \pm 0.02$ & $61.37 \pm 2.74$ & $1.1834 \pm 0.0260$ & $1.1561 \pm 0.0225$ &  3:16 \\
        & 50 & --   &  --   &   --   &   --   &   --  \\
    \midrule
    50  & 5  & --   &  --   &   --   &   --  &   --  \\
        & 10 & --   &  --   &   --   &   --  &   --  \\
        & 50 & $0.13 \pm 0.03$ & $45.63 \pm 2.18$ & $0.8819 \pm 0.0205$ & $0.9648 \pm 0.0196$ & 45:52 \\
    \midrule
    100 & 5  & $0.09 \pm 0.02$ & $52.71 \pm 2.44$ & $1.0124 \pm 0.0232$ & $0.9986 \pm 0.0207$ & 10:52 \\
        & 10 & --   &  --   &   --   &   --   &   --  \\
        & 50 & $0.14 \pm 0.03$ & $41.92 \pm 2.04$ & $0.8427 \pm 0.0191$ & $0.9311 \pm 0.0188$ & 82:48 \\
    \bottomrule
  \end{tabular}
\end{table}

\begin{table}[ht]
  \centering
  \caption{Effect of dataset size and number of epochs on decision-criteria adaptation for \emph{GPT-5-nano} and \emph{Llama-3.1-8B\_5000\_5}. 
  Accuracy (Model) reflects correct fidelity selection; Error (Point) reports mean $\ell_1$ distance (lower is better). 
  Results are reported as mean $\pm$ 95\% confidence interval over matched MFBO seeds. Time is given as \emph{minutes}:\emph{seconds}.}
  \label{tab:datasize_criteria_llama_gpt5nano}
  \scriptsize
  \begin{tabular}{lcccccc}
    \toprule
    \textbf{Data size} & \textbf{Epochs} & \textbf{Acc (Model)} $\uparrow$ & \textbf{Error (Point)} $\downarrow$
                       & \textbf{Train Loss} $\downarrow$ & \textbf{Eval Loss} $\downarrow$ & \textbf{Time} $\downarrow$ \\
    \midrule
    10  & 5  & $0.24 \pm 0.03$ & $41.10 \pm 1.88$ & $0.7750 \pm 0.0182$ & $0.5320 \pm 0.0108$ & 2:31 \\
        & 10 & $0.34 \pm 0.04$ & $33.43 \pm 1.61$ & $0.5900 \pm 0.0143$ & $0.4750 \pm 0.0095$ & 3:08 \\
        & 50 & $0.51 \pm 0.04$ & $24.51 \pm 1.12$ & $0.1900 \pm 0.0088$ & $0.4270 \pm 0.0083$ & 7:18 \\
    \midrule
    50  & 5  & $0.41 \pm 0.04$ & $23.11 \pm 1.05$ & $0.6280 \pm 0.0152$ & $0.4980 \pm 0.0099$ & 5:48 \\
        & 10 & $0.57 \pm 0.04$ & $18.92 \pm 0.91$ & $0.4500 \pm 0.0119$ & $0.4760 \pm 0.0092$ & 10:33 \\
        & 50 & $0.80 \pm 0.03$ & $13.05 \pm 0.68$ & $0.1280 \pm 0.0064$ & $0.4020 \pm 0.0080$ & 44:50 \\
    \midrule
    100 & 5  & $0.60 \pm 0.04$ & $17.60 \pm 0.84$ & $0.5970 \pm 0.0141$ & $0.4660 \pm 0.0090$ & 10:28 \\
        & 10 & $0.78 \pm 0.03$ & $11.71 \pm 0.58$ & $0.4080 \pm 0.0108$ & $0.3920 \pm 0.0075$ & 17:02 \\
        & 50 & $0.92 \pm 0.02$ & $8.26 \pm 0.44$ & $0.1860 \pm 0.0081$ & $0.3660 \pm 0.0070$ & 81:55 \\
    \bottomrule
  \end{tabular}
\end{table}

\begin{table}[ht]
  \centering
  \caption{Effect of dataset size and number of epochs on decision-criteria adaptation for \emph{GPT-5-nano} and \emph{Mistral-7B\_5000\_10}. 
  Accuracy (Model) reflects correct fidelity selection; Error (Point) reports mean $\ell_1$ distance (lower is better). 
  Results are reported as mean $\pm$ 95\% confidence interval over matched MFBO seeds. Time is given as \emph{minutes}:\emph{seconds}. ``--'' stands for non-feasible responses.}
  \label{tab:datasize_criteria_mistral_gpt5nano}
  \scriptsize
  \begin{tabular}{lcccccc}
    \toprule
    \textbf{Data size} & \textbf{Epochs} & \textbf{Acc (Model)} $\uparrow$ & \textbf{Error (Point)} $\downarrow$
                       & \textbf{Train Loss} $\downarrow$ & \textbf{Eval Loss} $\downarrow$ & \textbf{Time} $\downarrow$ \\
    \midrule
    10  & 5  & $0.04 \pm 0.01$ & $71.36 \pm 3.12$ & $1.3142 \pm 0.0302$ & $1.2879 \pm 0.0278$ &  2:18 \\
        & 10 & $0.06 \pm 0.02$ & $67.84 \pm 2.96$ & $1.2561 \pm 0.0284$ & $1.2314 \pm 0.0260$ &  2:55 \\
        & 50 & --   &  --   &   --   &   --   &   --  \\
    \midrule
    50  & 5  & $0.07 \pm 0.02$ & $58.92 \pm 2.70$ & $1.1086 \pm 0.0251$ & $1.1437 \pm 0.0243$ &  5:44 \\
        & 10 & --   &  --   &   --   &   --   &   --  \\
        & 50 & $0.11 \pm 0.03$ & $48.37 \pm 2.26$ & $0.9624 \pm 0.0220$ & $1.0312 \pm 0.0212$ & 46:10 \\
    \midrule
    100 & 5  & $0.08 \pm 0.02$ & $55.18 \pm 2.52$ & $1.0347 \pm 0.0238$ & $1.0679 \pm 0.0225$ & 10:21 \\
        & 10 & --   &  --   &   --   &   --   &   --  \\
        & 50 & $0.13 \pm 0.03$ & $44.86 \pm 2.10$ & $0.8893 \pm 0.0202$ & $0.9827 \pm 0.0194$ & 83:05 \\
    \bottomrule
  \end{tabular}
\end{table}

\FloatBarrier

\subsection{Hierarchical Learning Results}

Graphical comparison of the proposed architecture over MFBM problem against non-hierarchical architecture. 

\begin{figure}[ht]
    \centering

    \begin{subfigure}{0.48\linewidth}
        \centering
        \includegraphics[width=0.92\linewidth,
        height=3.2cm,
        keepaspectratio]{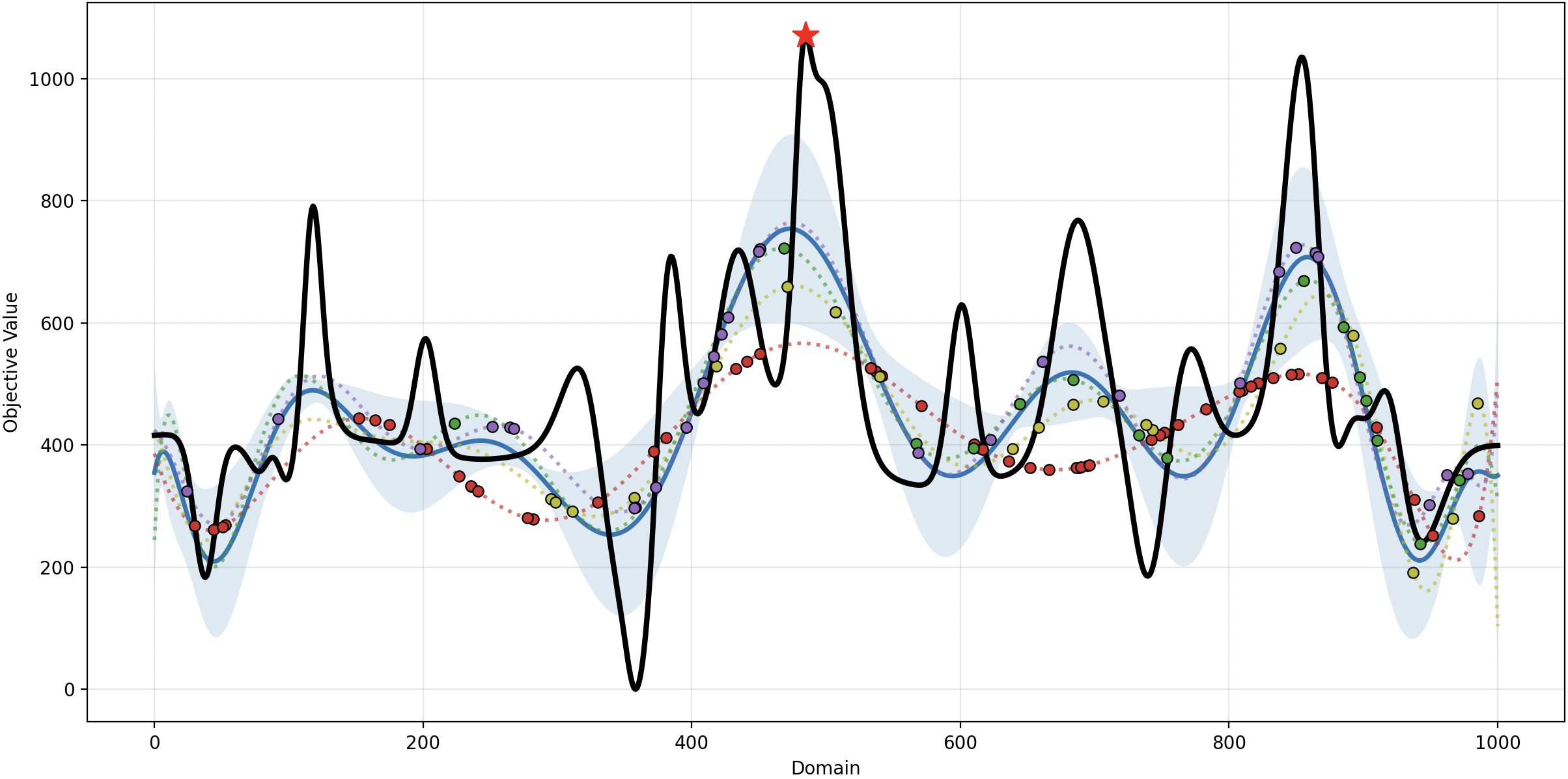}
        \caption{\emph{GPT-5} with \emph{Llama-3.1-8B}}
        \label{fig:case1}
    \end{subfigure}
    \begin{subfigure}{0.51\linewidth}
        \centering
        \includegraphics[width=\linewidth]{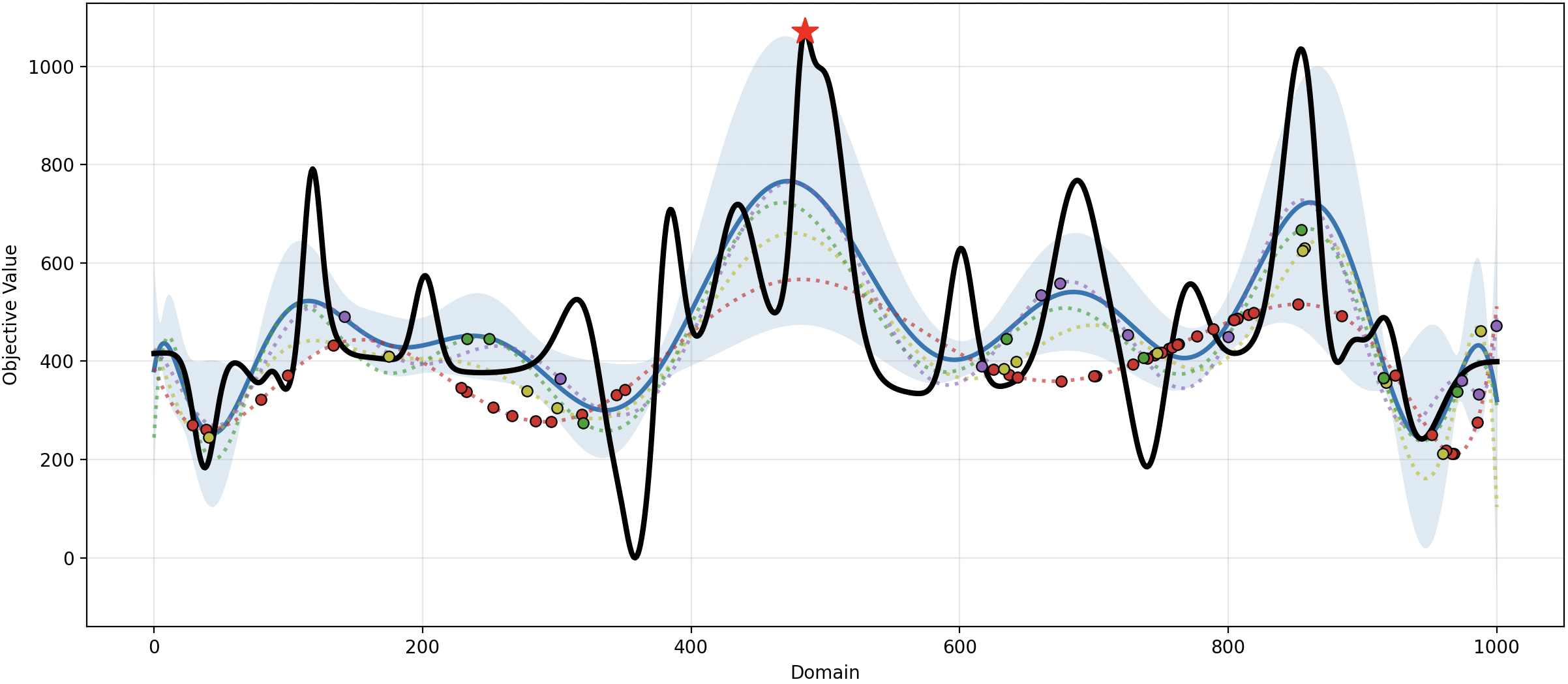}
        \caption{Distilled \emph{Llama-3.1-8B}}
        \label{fig:case2}
    \end{subfigure}

    \vspace{0.8em}

    \begin{subfigure}{0.50\linewidth}
        \centering
        \includegraphics[width=0.92\linewidth,
        height=3.2cm,
        keepaspectratio]{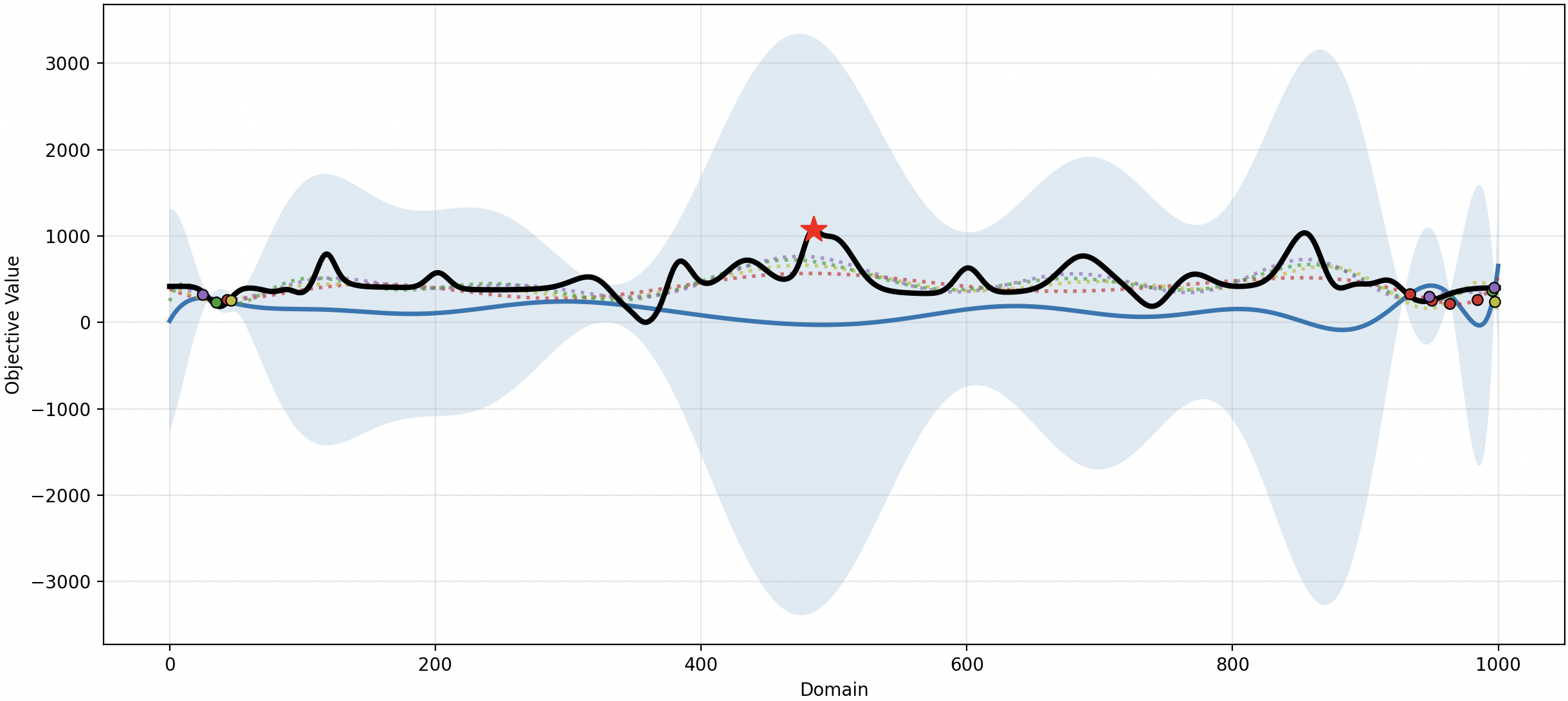}
        \caption{\emph{Llama-3.1-8B}}
        \label{fig:case3}
    \end{subfigure}
    \begin{subfigure}{0.48\linewidth}
        \centering
        \includegraphics[width=\linewidth]{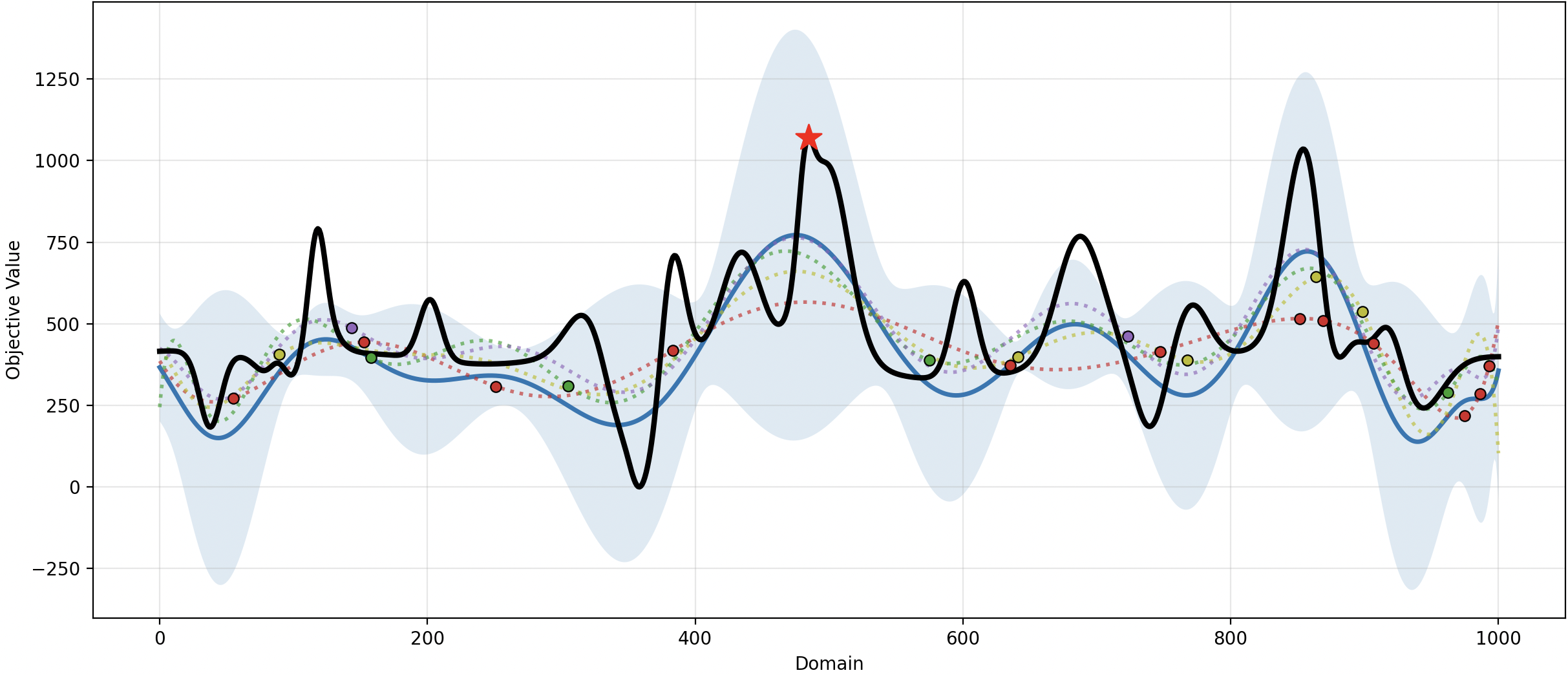}
        \caption{\emph{GPT-5} with \emph{Mistral-7B}}
        \label{fig:case4}
    \end{subfigure}

    \vspace{0.8em}

    \begin{subfigure}{0.48\linewidth}
        \centering
        \includegraphics[width=\linewidth]{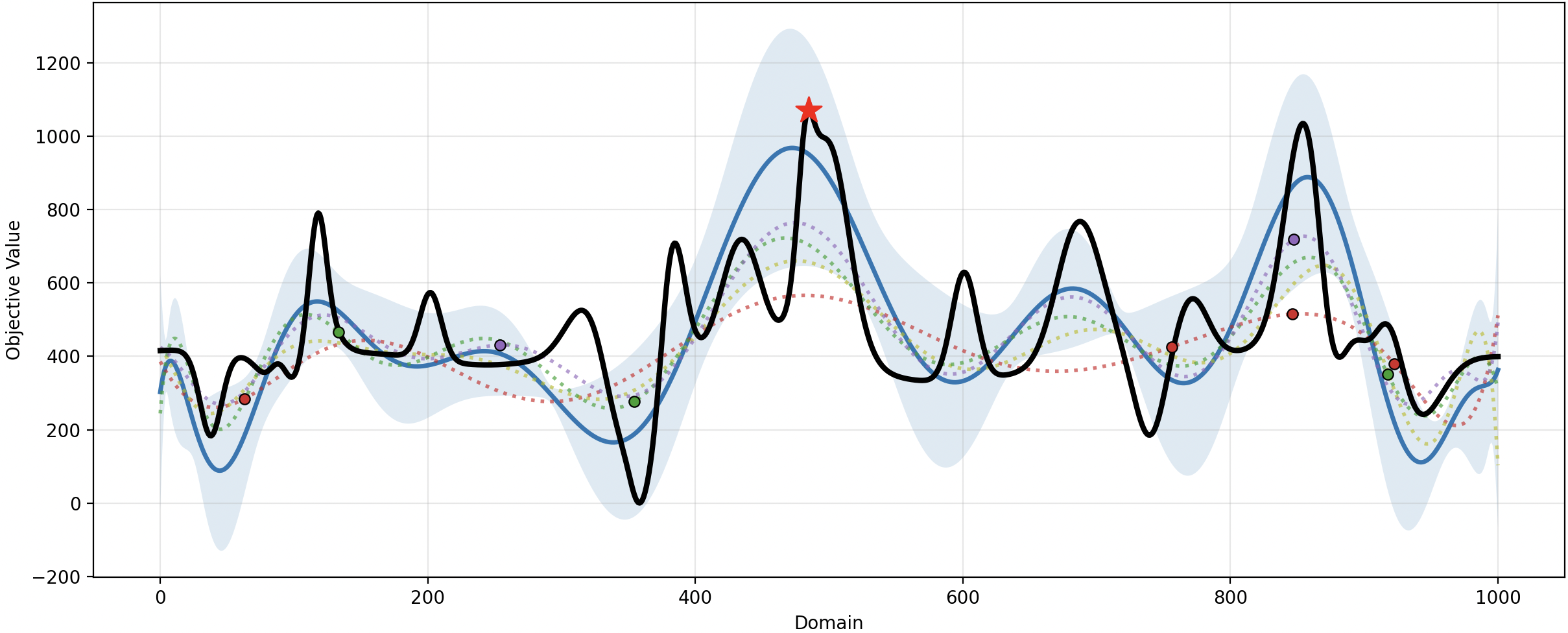}
        \caption{Distilled \emph{Mistral-7B}}
        \label{fig:case5}
    \end{subfigure}

    \caption{Visualization of the results obtained by each considered model in the ablation study represented in Table~\ref{tab:teacher-student-ablation}. The ground-truth model is shown in black, and the approximation models are shown as dashed colored lines: red, yellow, green, and purple correspond to fidelity levels $1,2,3,4$, respectively. In blue, the obtained approximation using each model is plotted with its uncertainty area.}
    \label{fig:comparison_a}
\end{figure}

\FloatBarrier

\begin{figure}[ht]
    \centering

    \begin{subfigure}{0.50\linewidth}
        \centering
        \includegraphics[width=\linewidth]{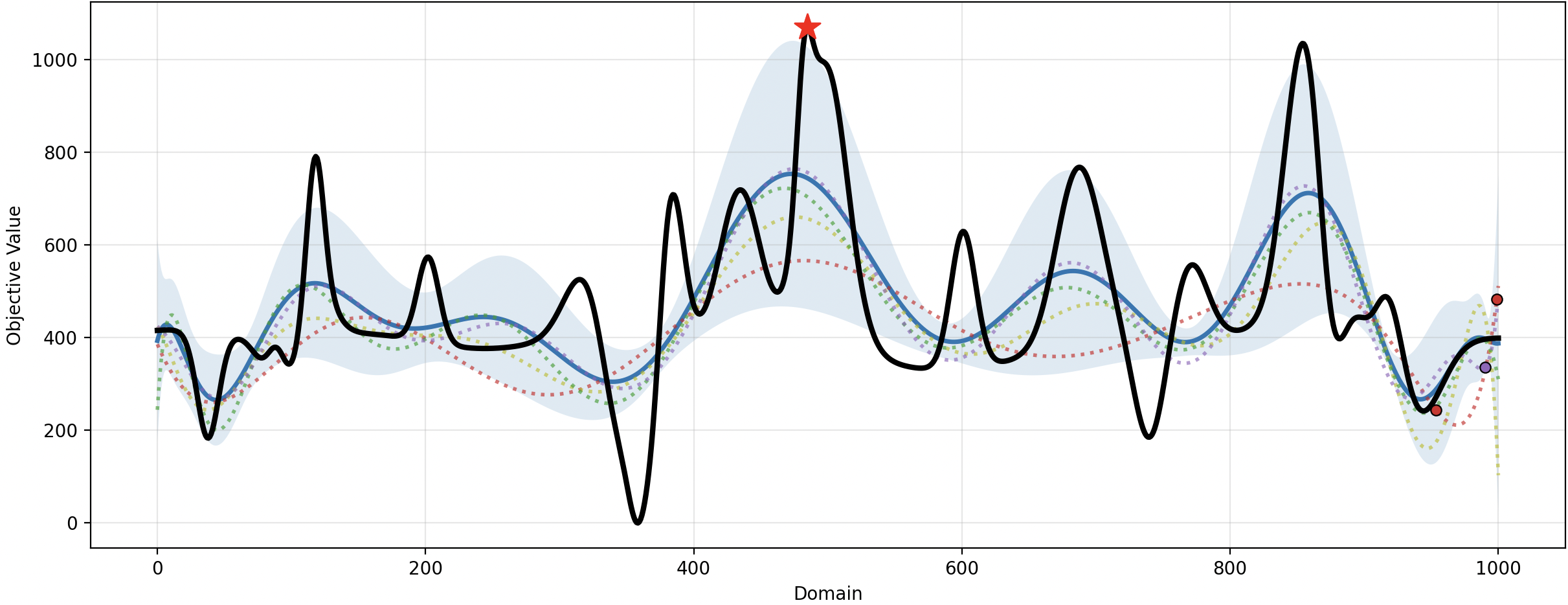}
        \caption{\emph{Mistral-7B}}
        \label{fig:case6}
    \end{subfigure}
    \begin{subfigure}{0.47\linewidth}
        \centering
        \includegraphics[width=\linewidth]{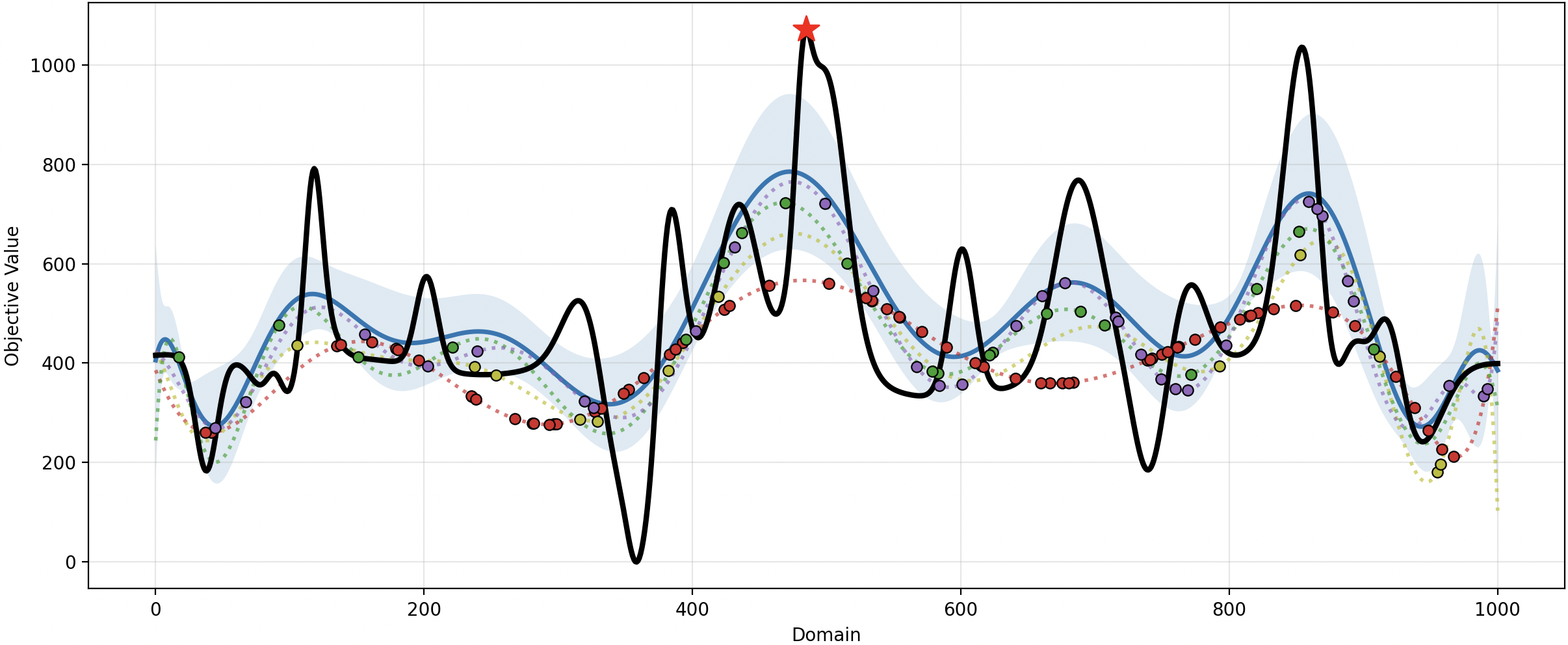}
        \caption{\emph{GPT-5-nano} with \emph{Llama-3.1-8B}}
        \label{fig:case7}
    \end{subfigure}

    \vspace{0.8em}

    \begin{subfigure}{0.46\linewidth}
        \centering
        \includegraphics[width=\linewidth]{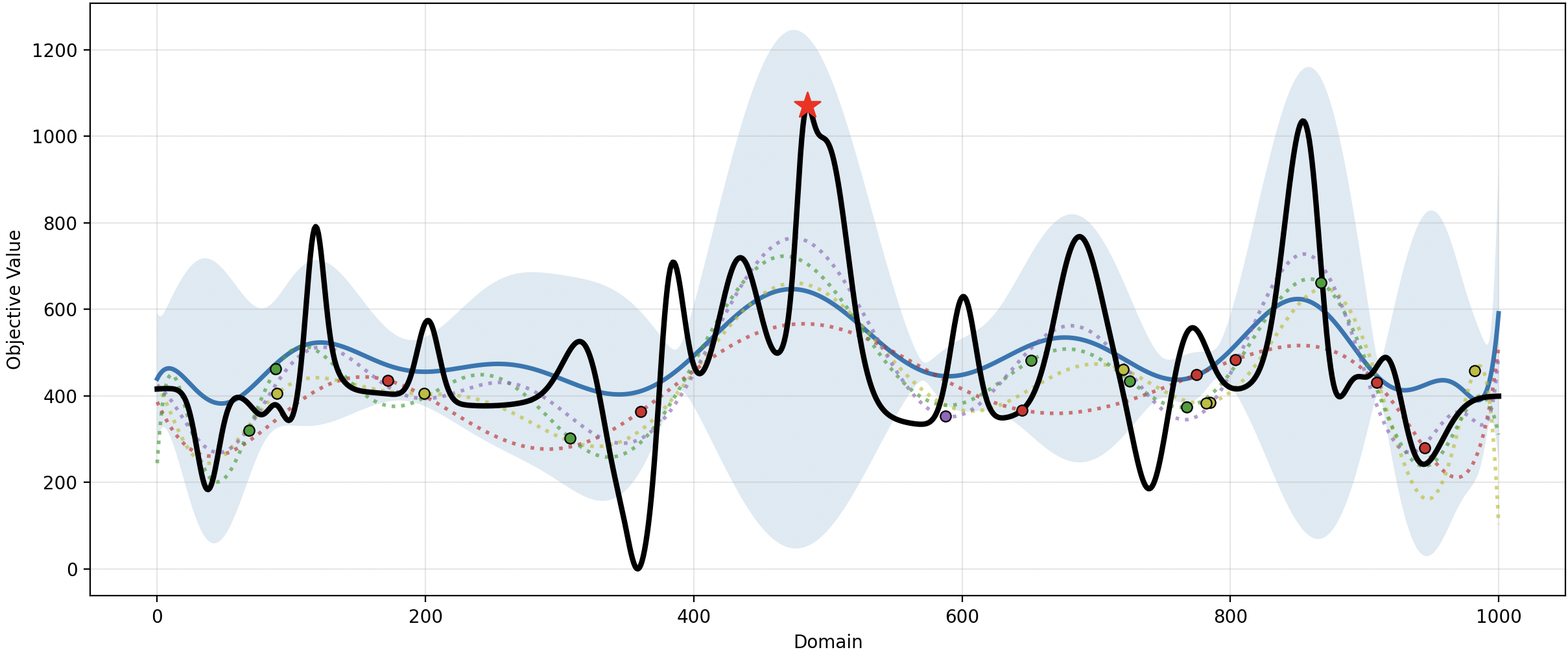}
        \caption{\emph{GPT-5-nano} with \emph{Mistral-7B}}
        \label{fig:case10}
    \end{subfigure}
    \begin{subfigure}{0.50\linewidth}
        \centering
        \includegraphics[width=\linewidth]{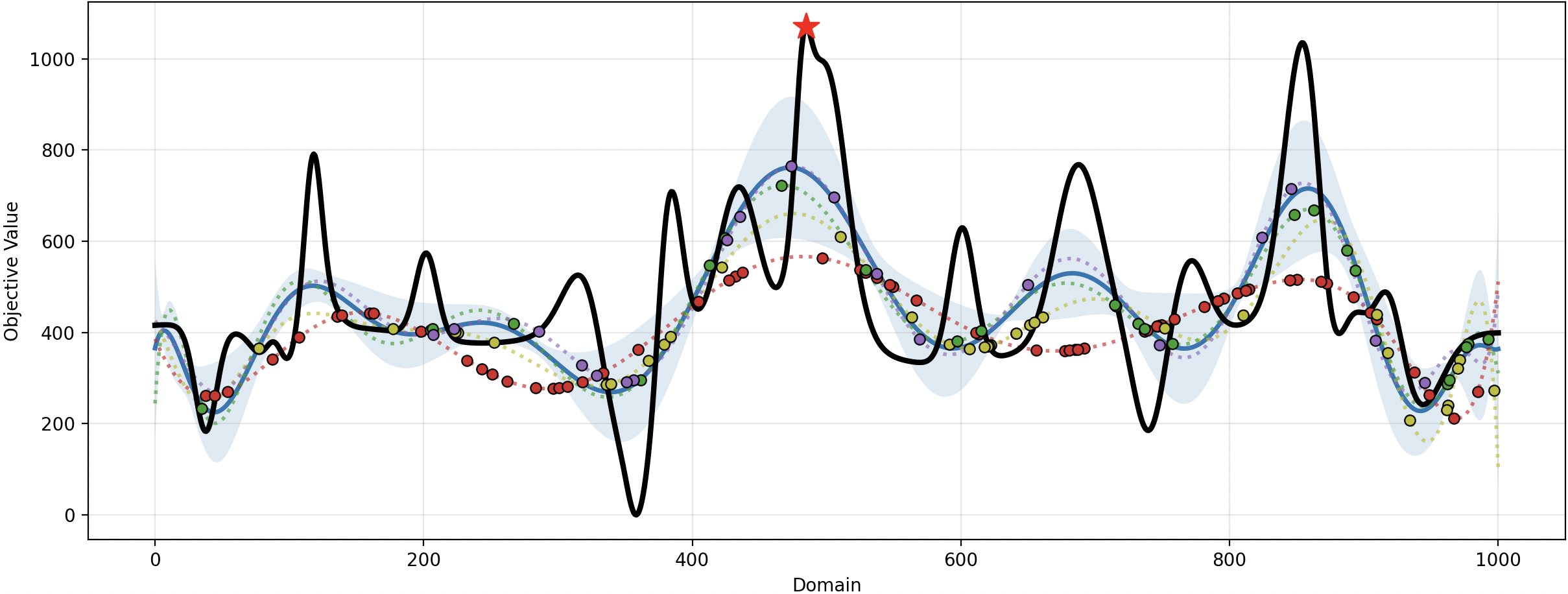}
        \caption{\emph{GPT-5}}
        \label{fig:case13}
    \end{subfigure}

    \vspace{0.8em}

    \begin{subfigure}{0.48\linewidth}
        \centering
        \includegraphics[width=\linewidth]{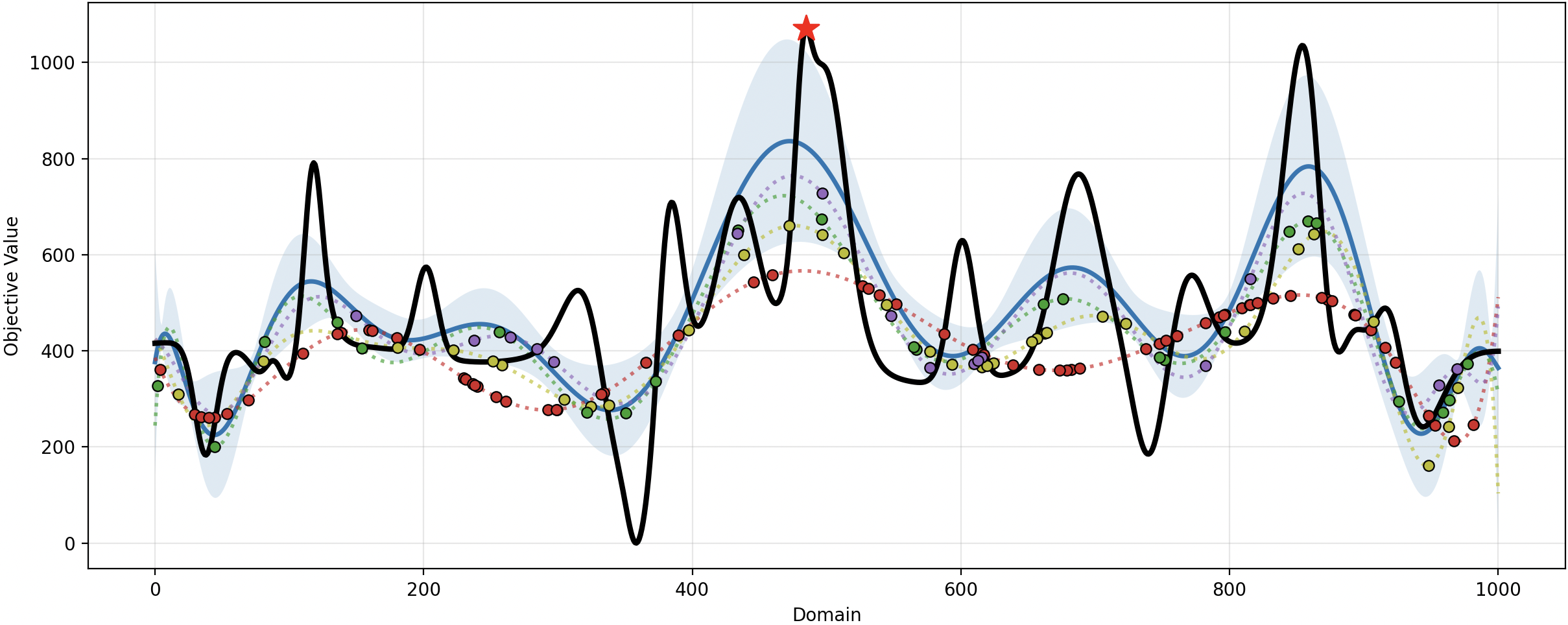}
        \caption{\emph{GPT-5-nano}}
        \label{fig:case14}
    \end{subfigure}

    \caption{Visualization of the results obtained by each considered model in the ablation study represented in Table~\ref{tab:teacher-student-ablation}. The ground-truth model is shown in black, and the approximation models are shown as dashed colored lines: red, yellow, green, and purple correspond to fidelity levels $1,2,3,4$, respectively. In blue, the obtained approximation using each model is plotted with its uncertainty area (Part 2).}
    \label{fig:comparison_b}
\end{figure}
\FloatBarrier

\newpage
\subsection{Loss curves}

Training and Validation Loss curves for each of the trained combinations on the hierarchical structure. The loss is computed such as \eqref{eqn:main_optimiziation_problem}.

\begin{figure}[t]
\centering

\begin{minipage}{0.48\textwidth}
\centering
\includegraphics[width=\linewidth]{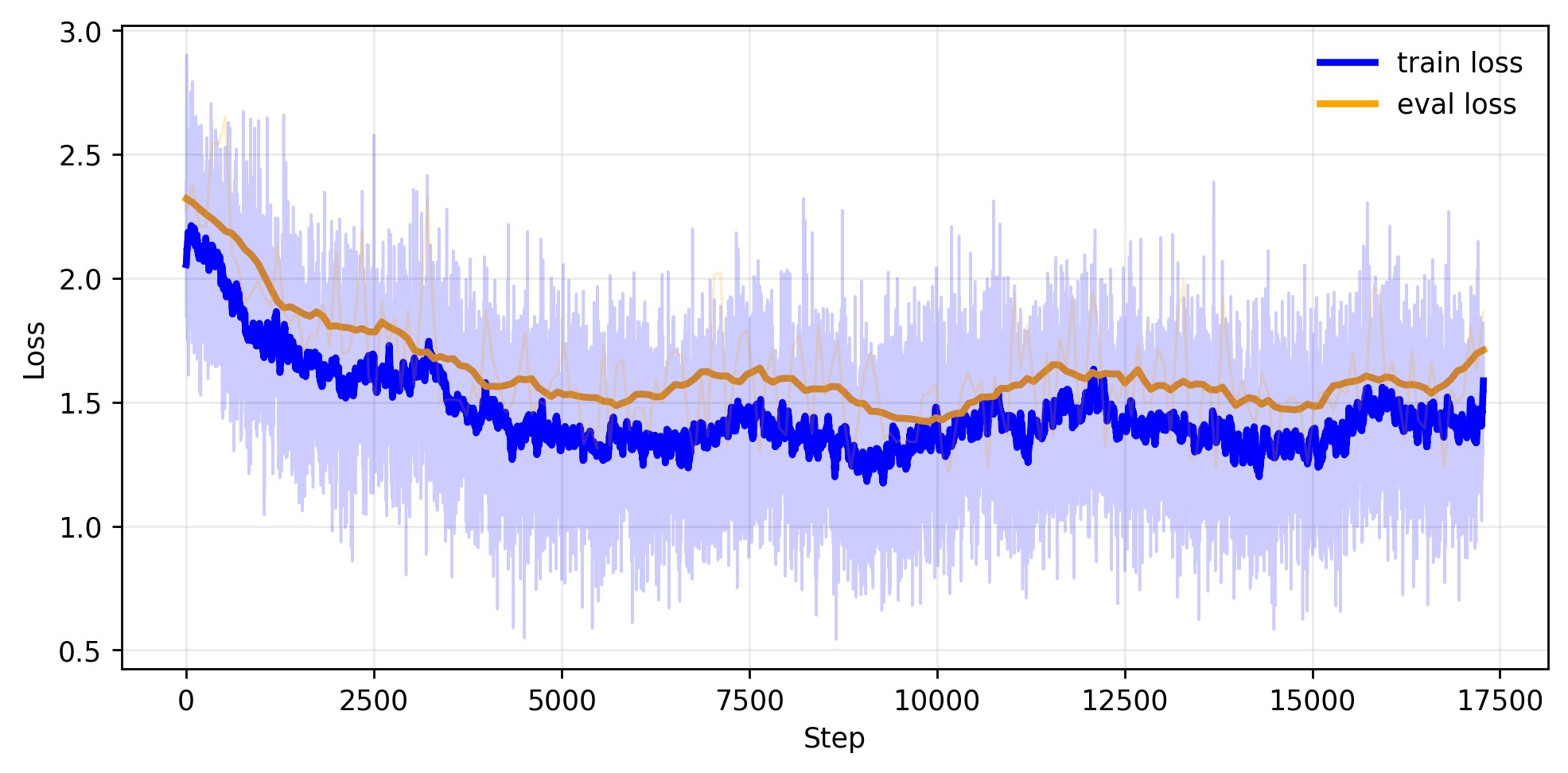}\\
{\small (a) Data=1000, Epochs=5}
\end{minipage}\hfill
\begin{minipage}{0.48\textwidth}
\centering
\includegraphics[width=\linewidth]{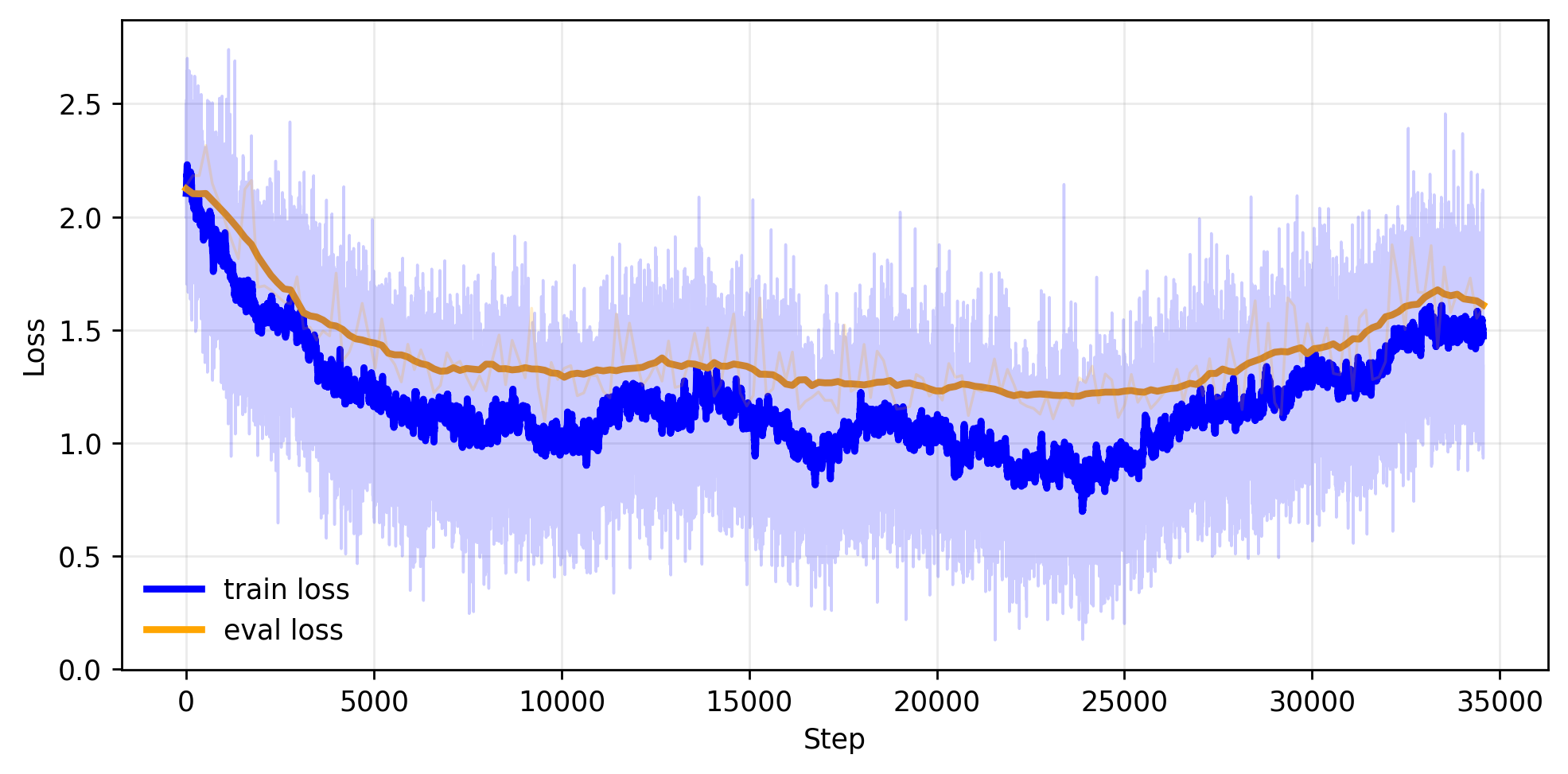}\\
{\small (b) Data=1000, Epochs=10}
\end{minipage}

\vspace{2mm}

\begin{minipage}{0.48\textwidth}
\centering
\includegraphics[width=\linewidth]{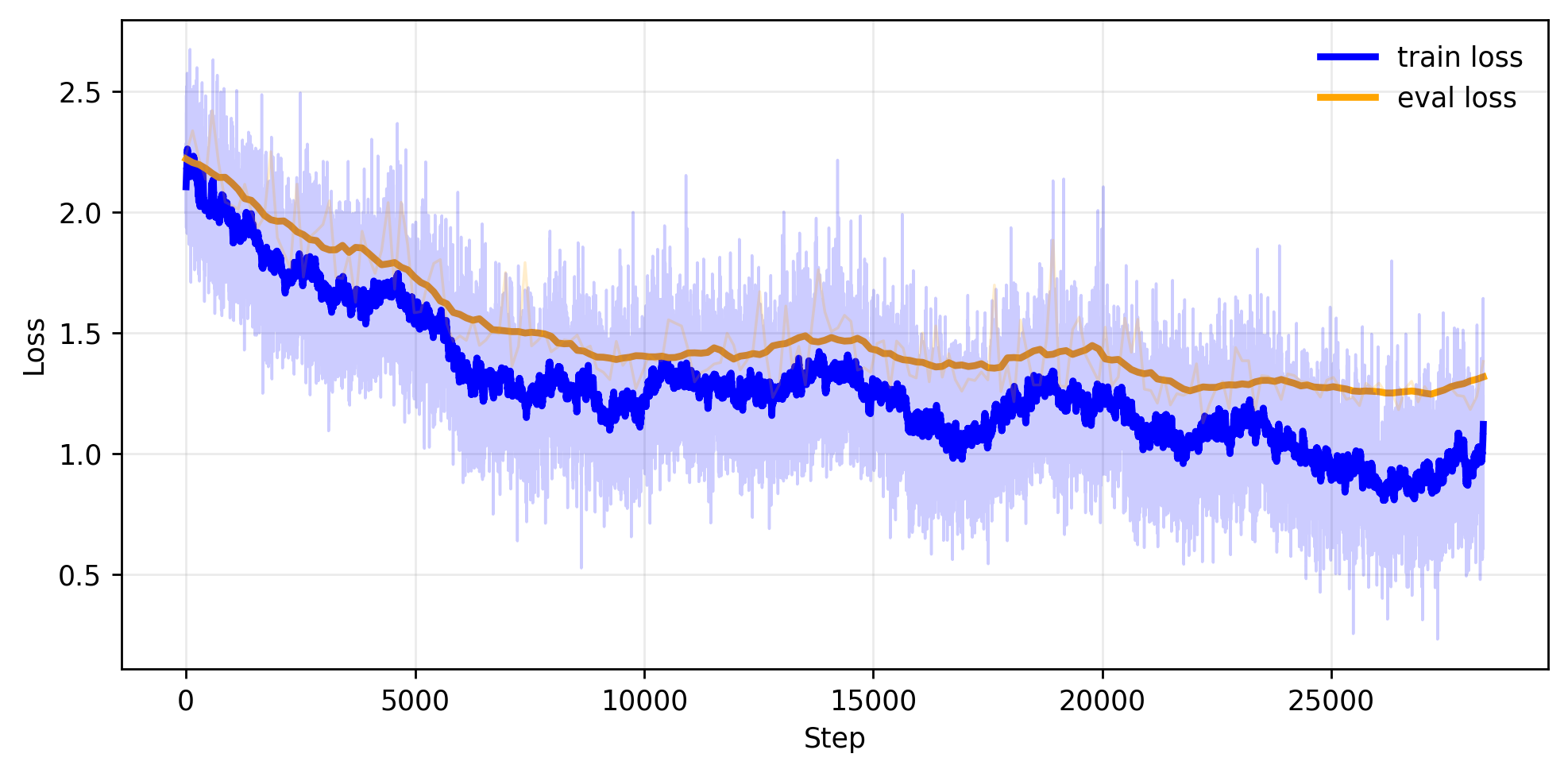}\\
{\small (c) Data=2000, Epochs=5}
\end{minipage}\hfill
\begin{minipage}{0.48\textwidth}
\centering
\includegraphics[width=\linewidth]{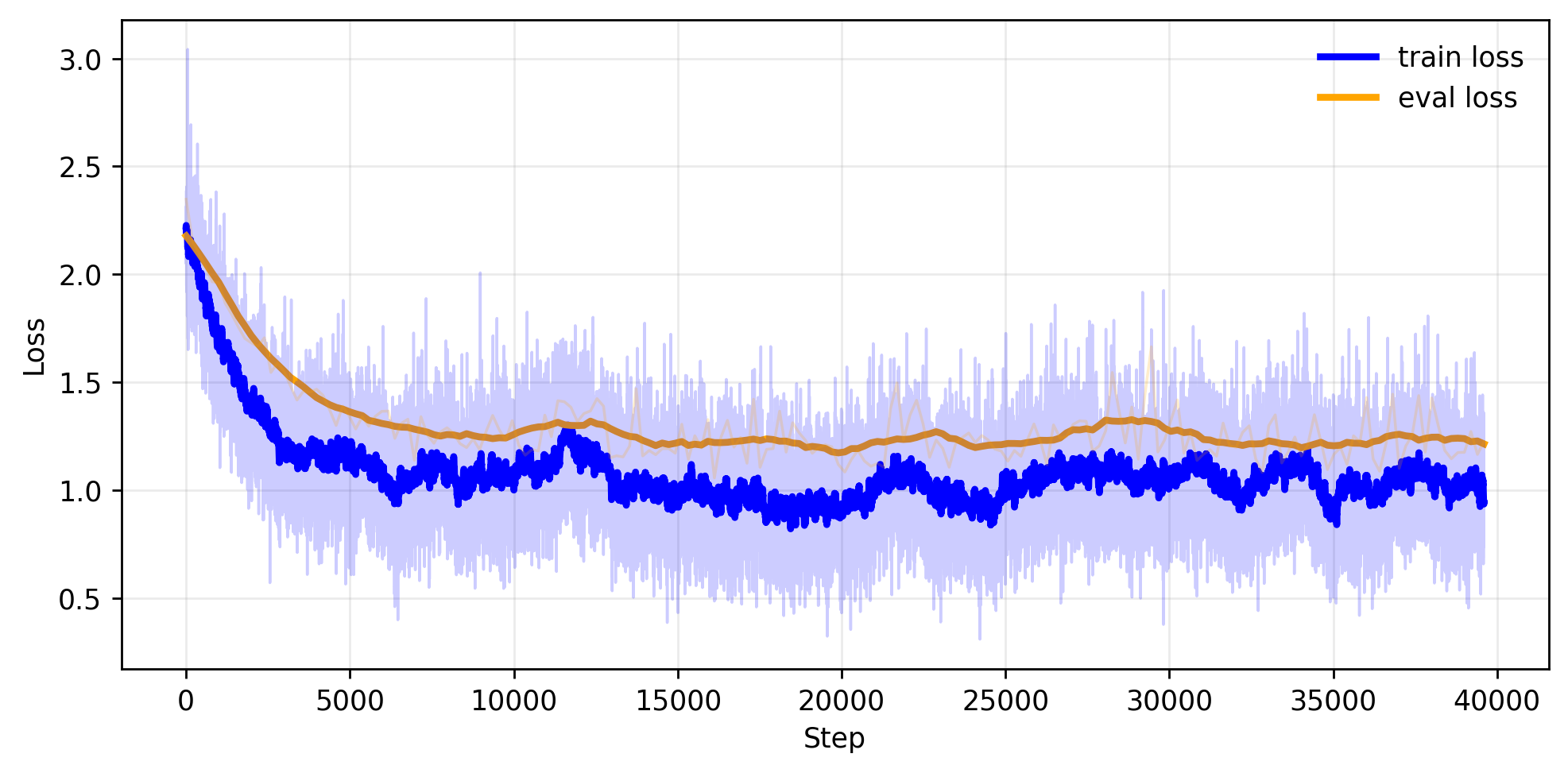}\\
{\small (d) Data=2000, Epochs=10}
\end{minipage}

\caption{Training and evaluation loss curves for Table~\ref{tab:datasize_format_gpt5_llama}, (Part 1).}
\label{fig:loss_grid_5x2}
\end{figure}
\FloatBarrier

\begin{figure}[t]
\centering

\begin{minipage}{0.48\textwidth}
\centering
\includegraphics[width=\linewidth]{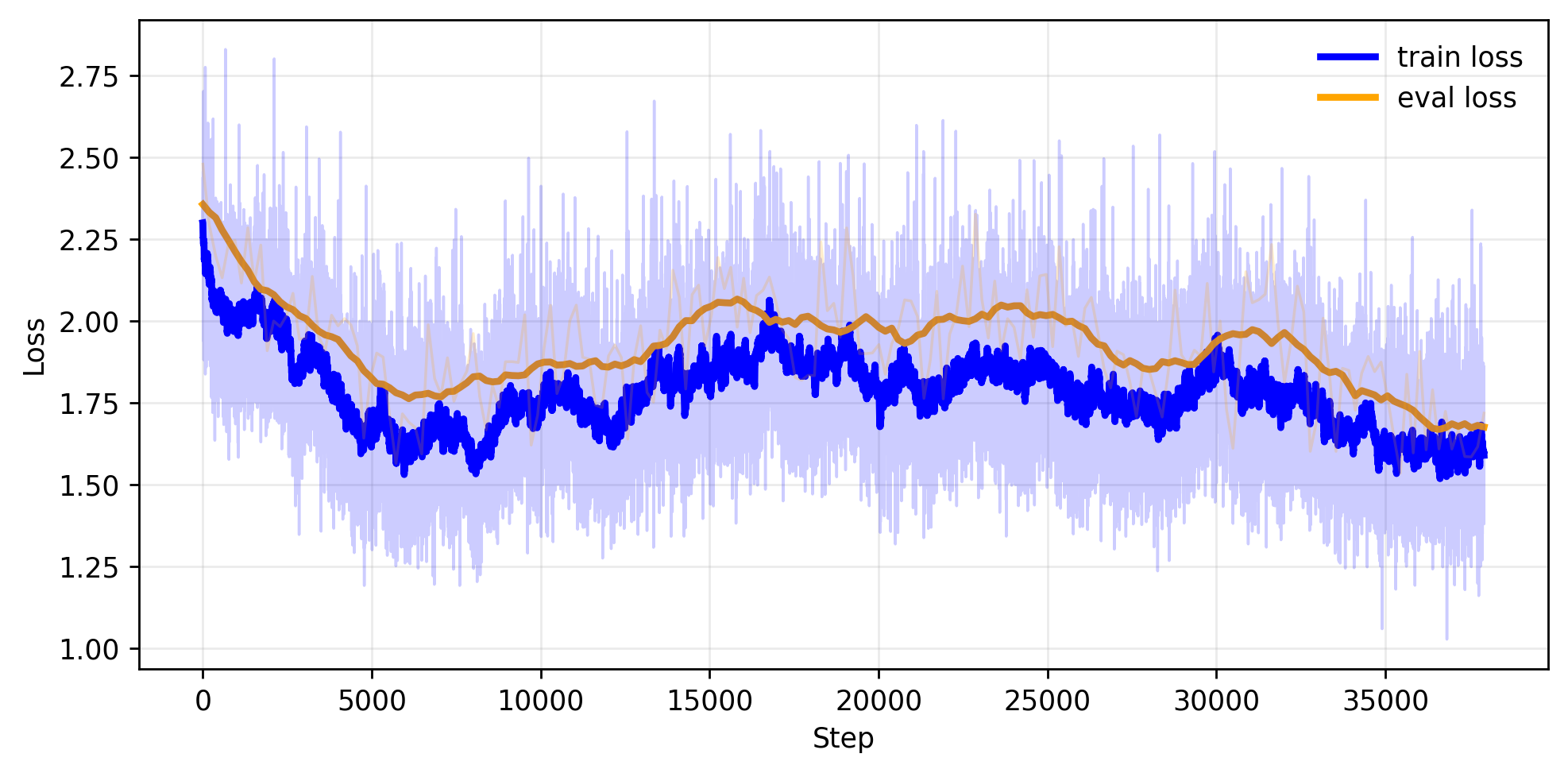}\\
{\small (e) Data=3000, Epochs=5}
\end{minipage}\hfill
\begin{minipage}{0.48\textwidth}
\centering
\includegraphics[width=\linewidth]{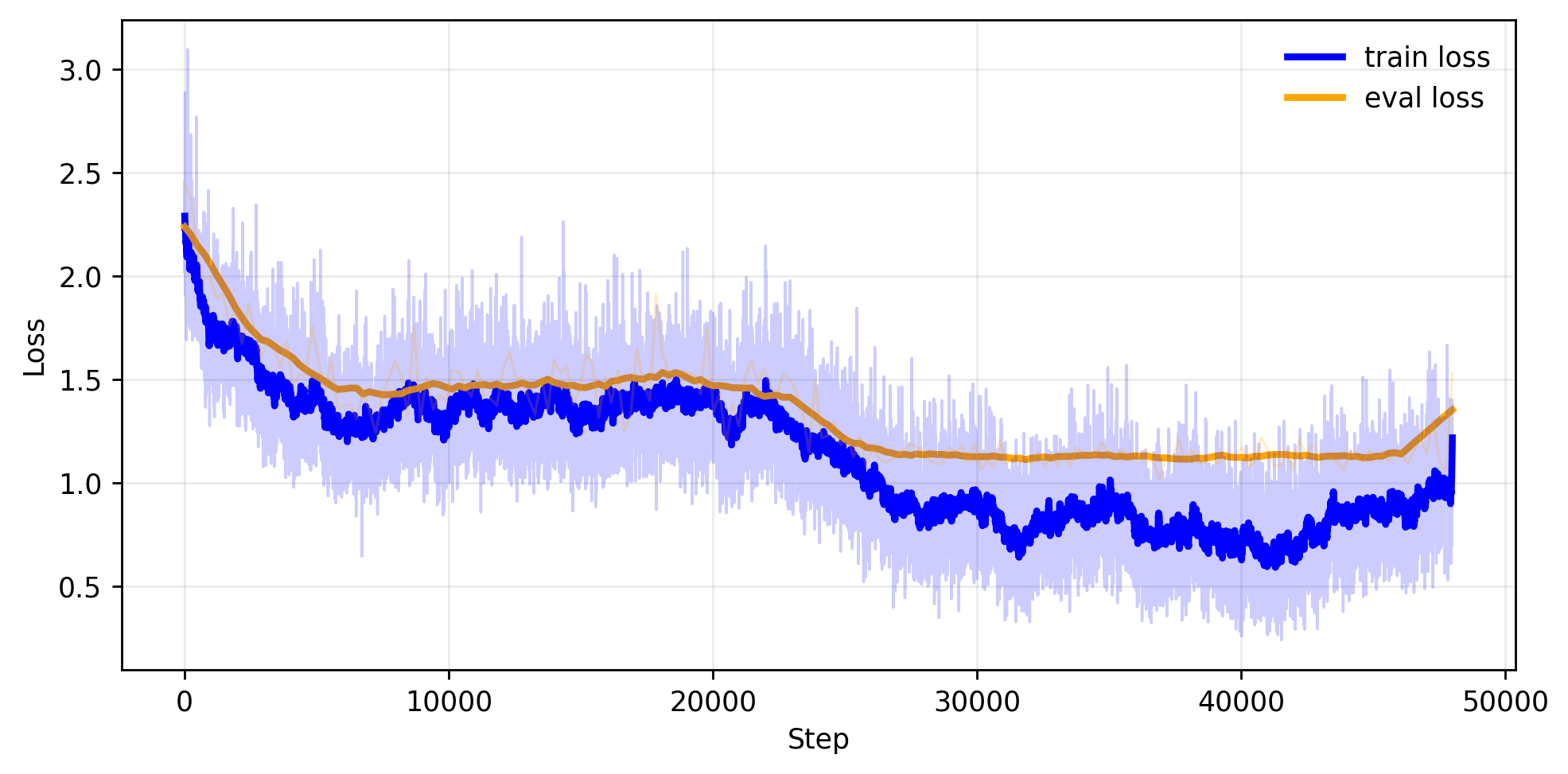}\\
{\small (f) Data=3000, Epochs=10}
\end{minipage}

\vspace{2mm}

\begin{minipage}{0.48\textwidth}
\centering
\includegraphics[width=\linewidth]{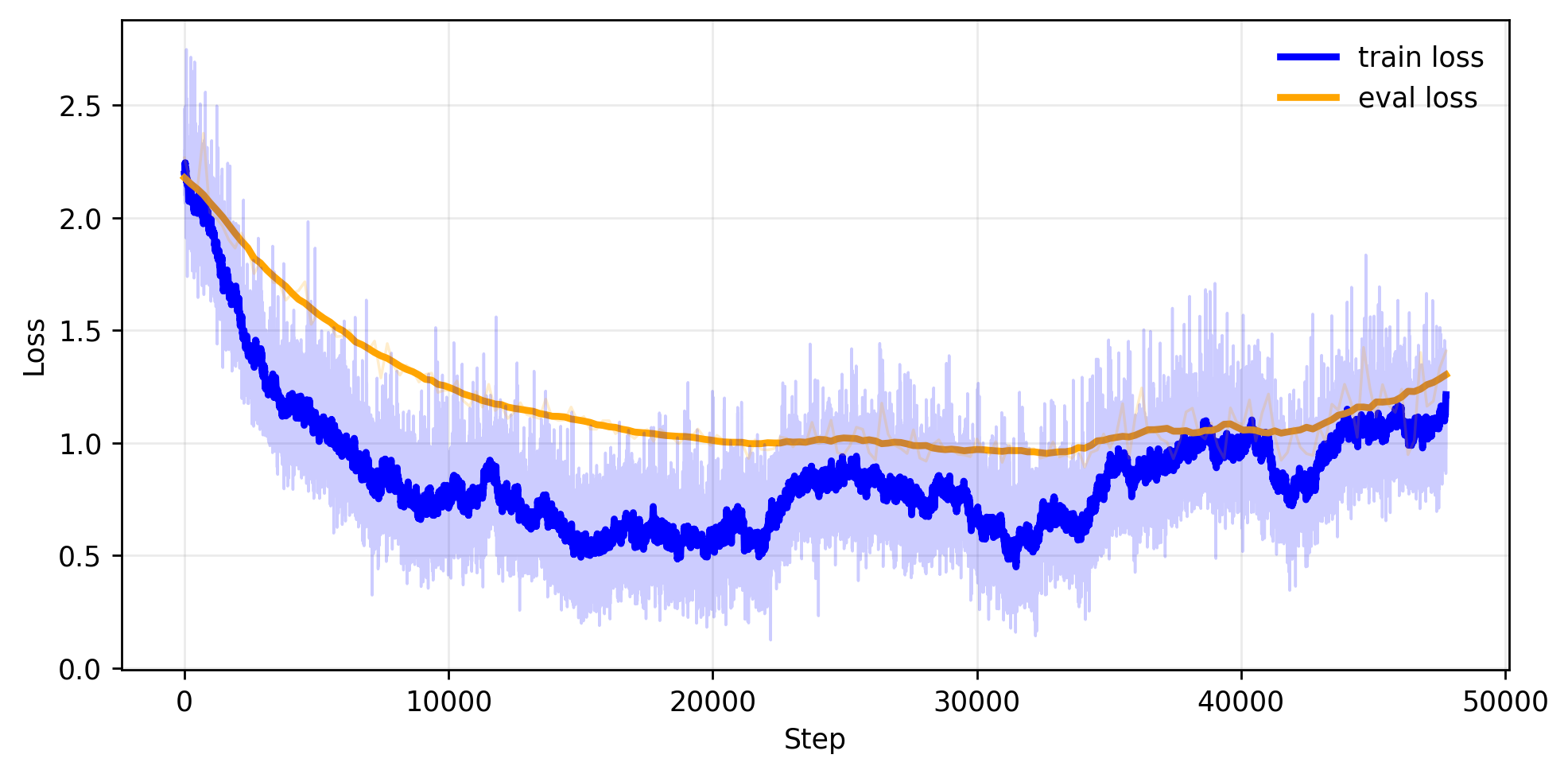}\\
{\small (g) Data=4000, Epochs=5}
\end{minipage}\hfill
\begin{minipage}{0.48\textwidth}
\centering
\includegraphics[width=\linewidth]{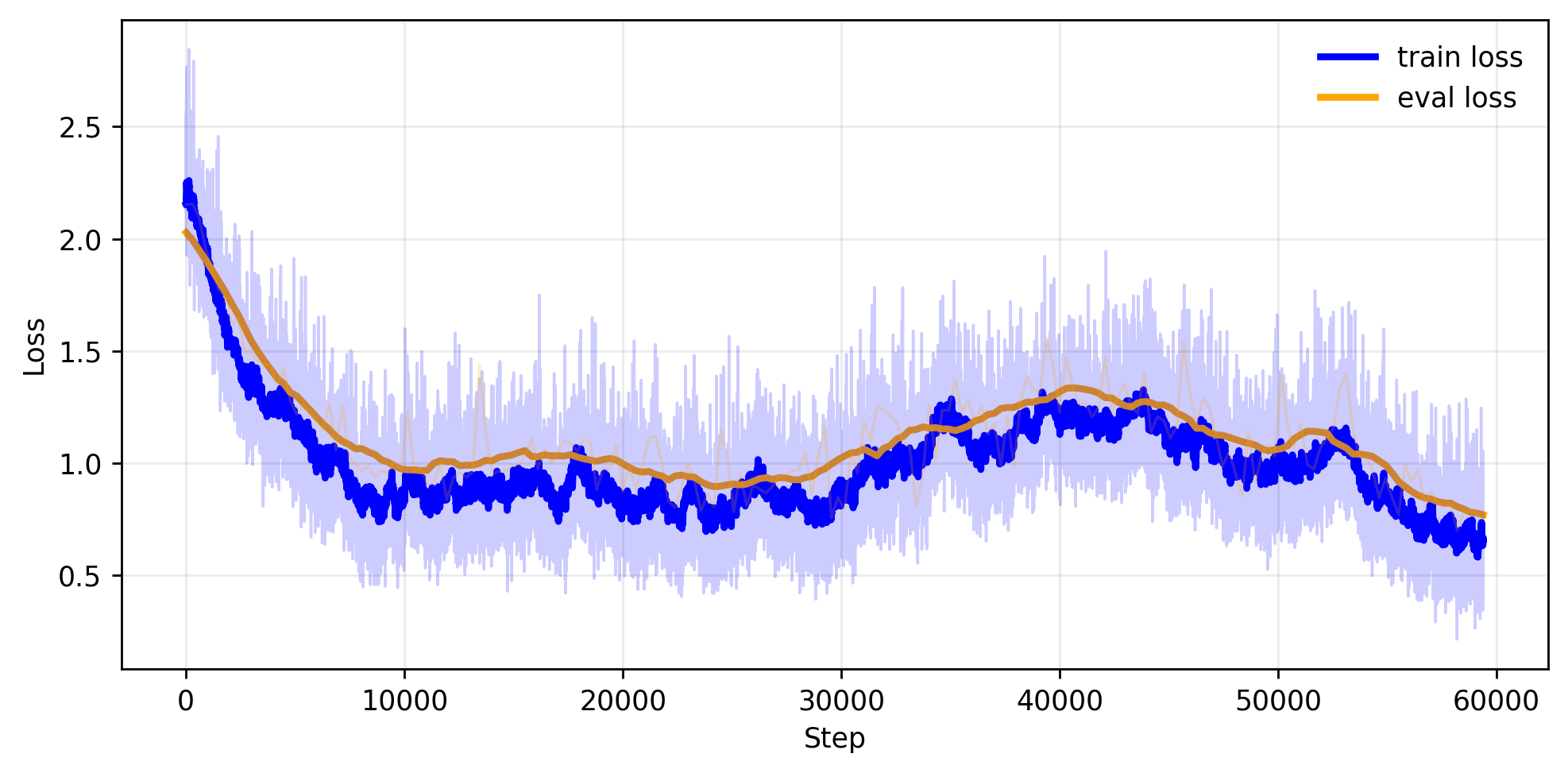}
\\
{\small (h) Data=4000, Epochs=10}
\end{minipage}

\vspace{2mm}

\begin{minipage}{0.48\textwidth}
\centering
\includegraphics[width=\linewidth]{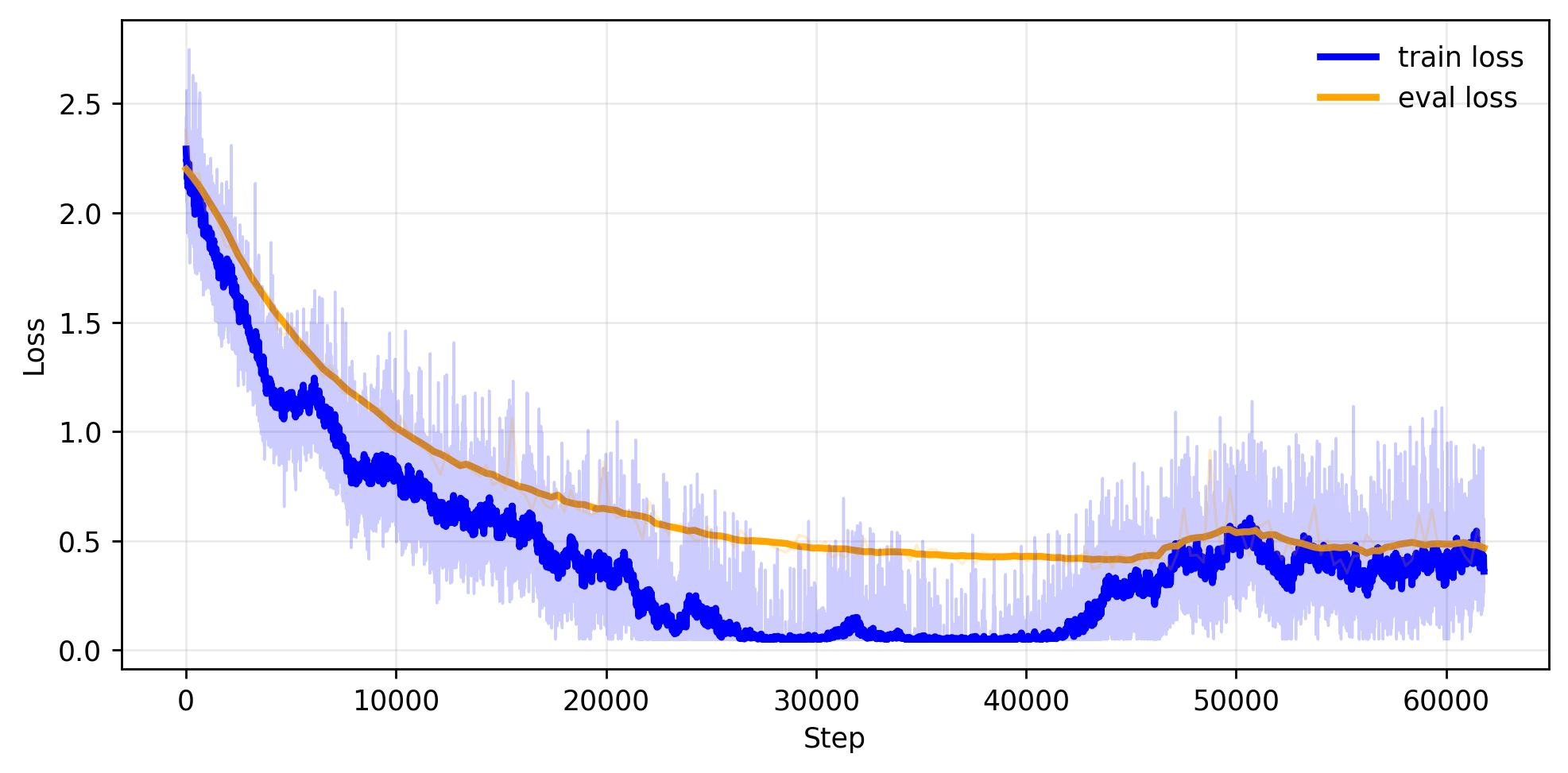}\\
{\small (i) Data=5000, Epochs=5}
\end{minipage}\hfill
\begin{minipage}{0.48\textwidth}
\centering
\includegraphics[width=\linewidth]{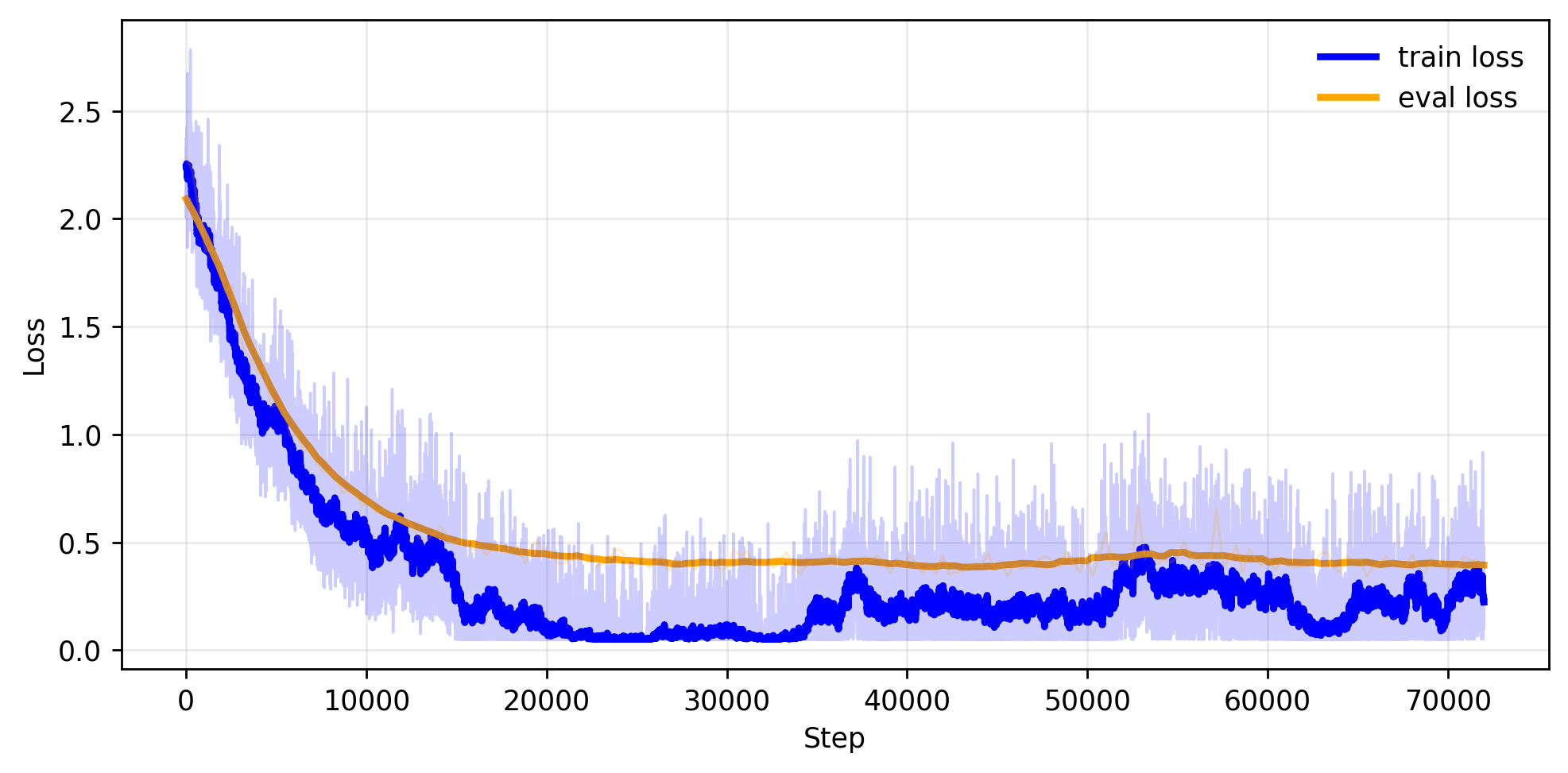}\\
{\small (j) Data=5000, Epochs=10}
\end{minipage}

\caption{Training and evaluation loss curves for Table~\ref{tab:datasize_format_gpt5_llama}, (Part 2).}
\label{fig:loss_grid_5x2}
\end{figure}
\FloatBarrier

\begin{figure}[t]
\centering

\begin{minipage}{0.48\textwidth}
\centering
\includegraphics[width=\linewidth]{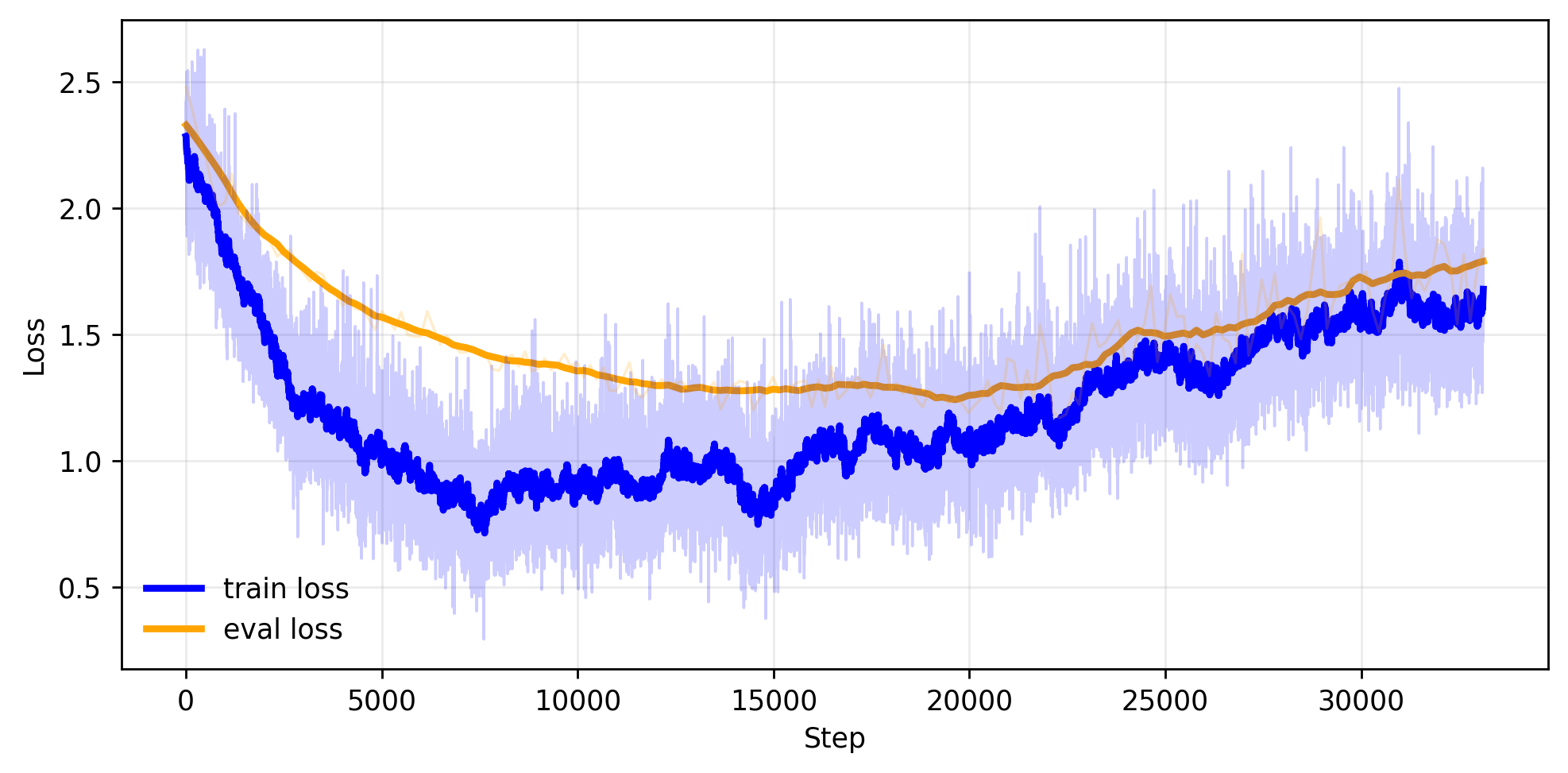}\\
{\small (e) Data=3000, Epochs=5}
\end{minipage}\hfill
\begin{minipage}{0.48\textwidth}
\centering
\includegraphics[width=\linewidth]{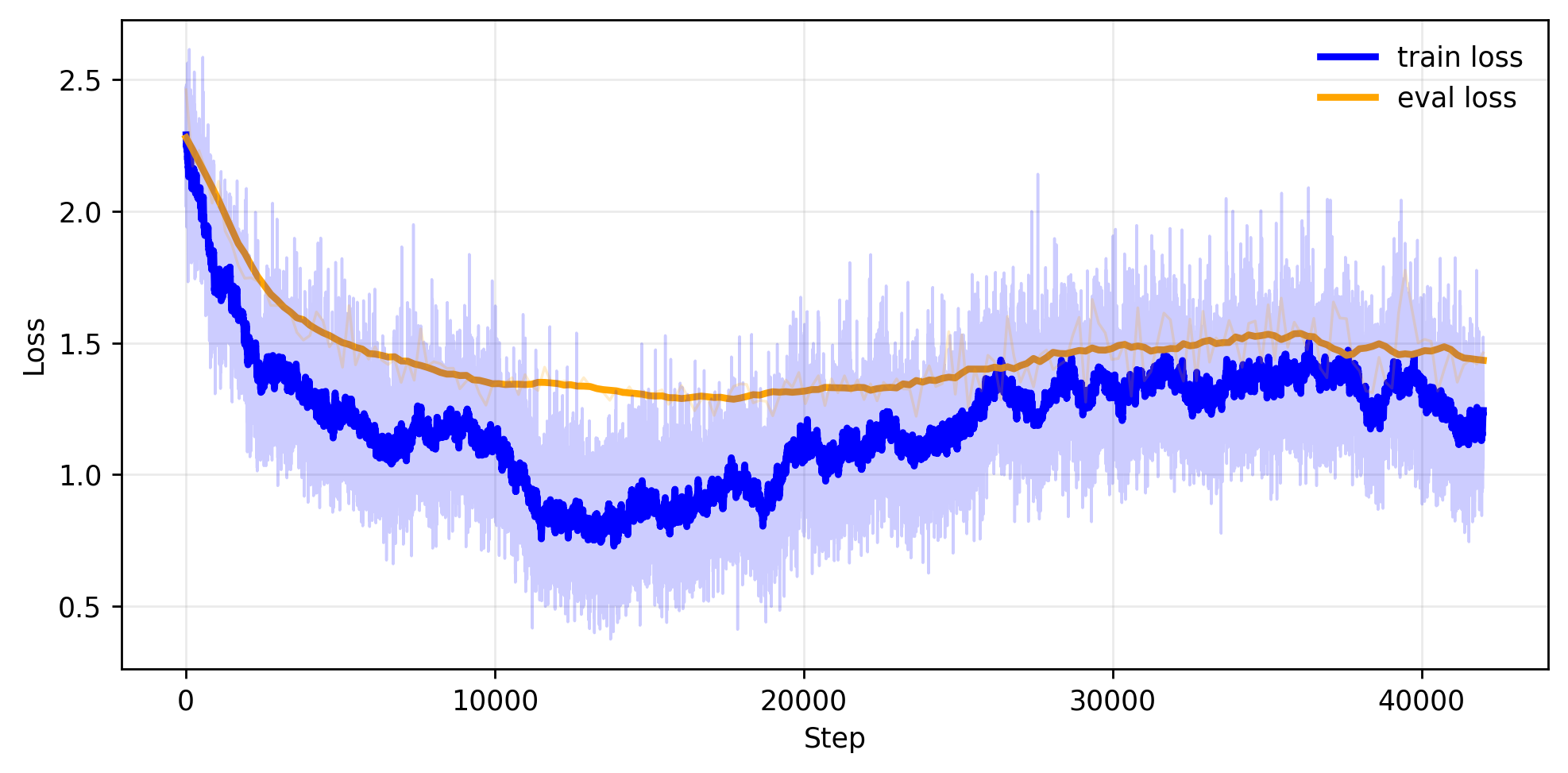}\\
{\small (f) Data=3000, Epochs=10}
\end{minipage}

\vspace{2mm}

\begin{minipage}{0.48\textwidth}
\centering
\includegraphics[width=\linewidth]{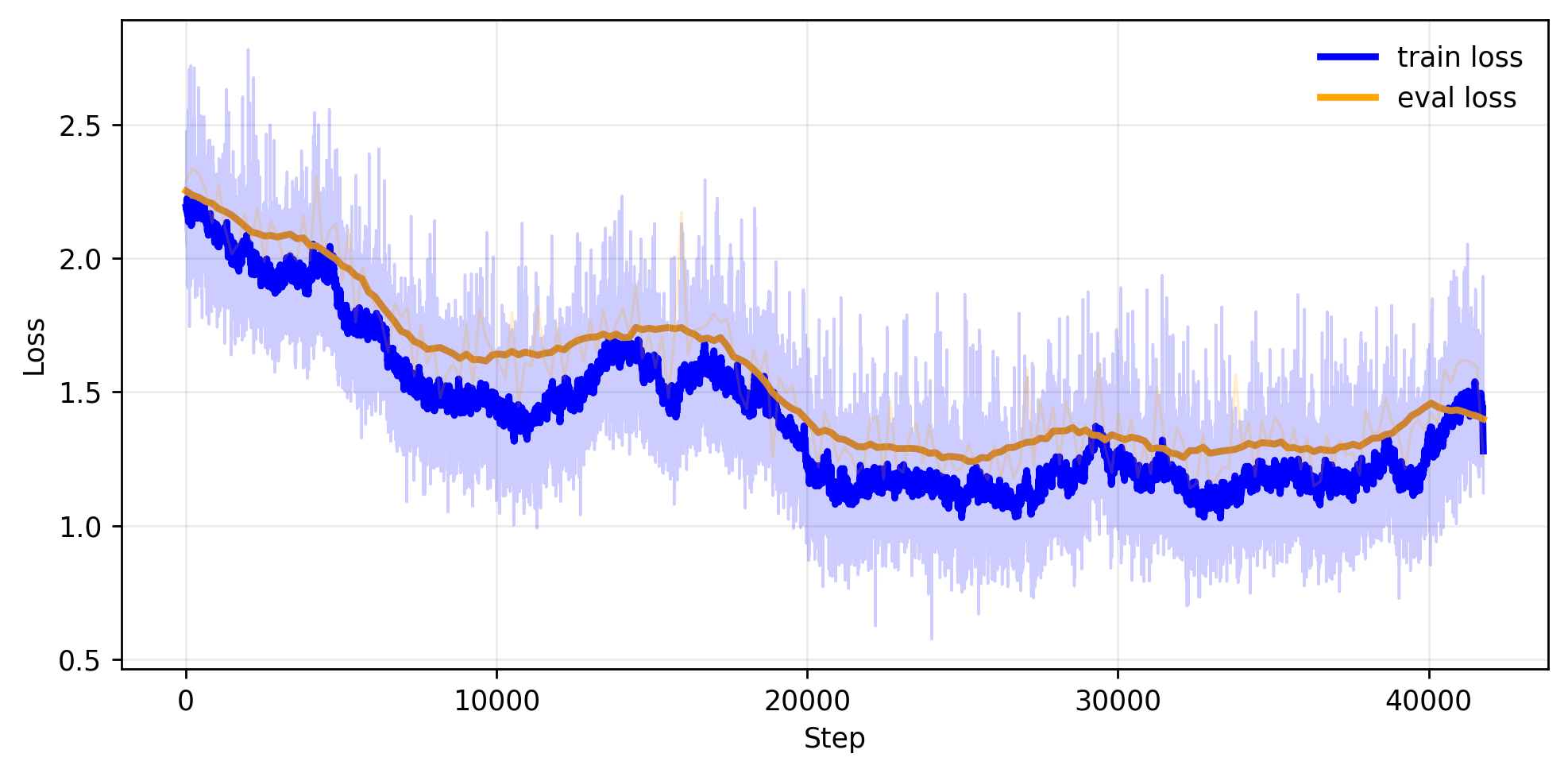}\\
{\small (g) Data=4000, Epochs=5}
\end{minipage}\hfill
\begin{minipage}{0.48\textwidth}
\centering
\includegraphics[width=\linewidth]{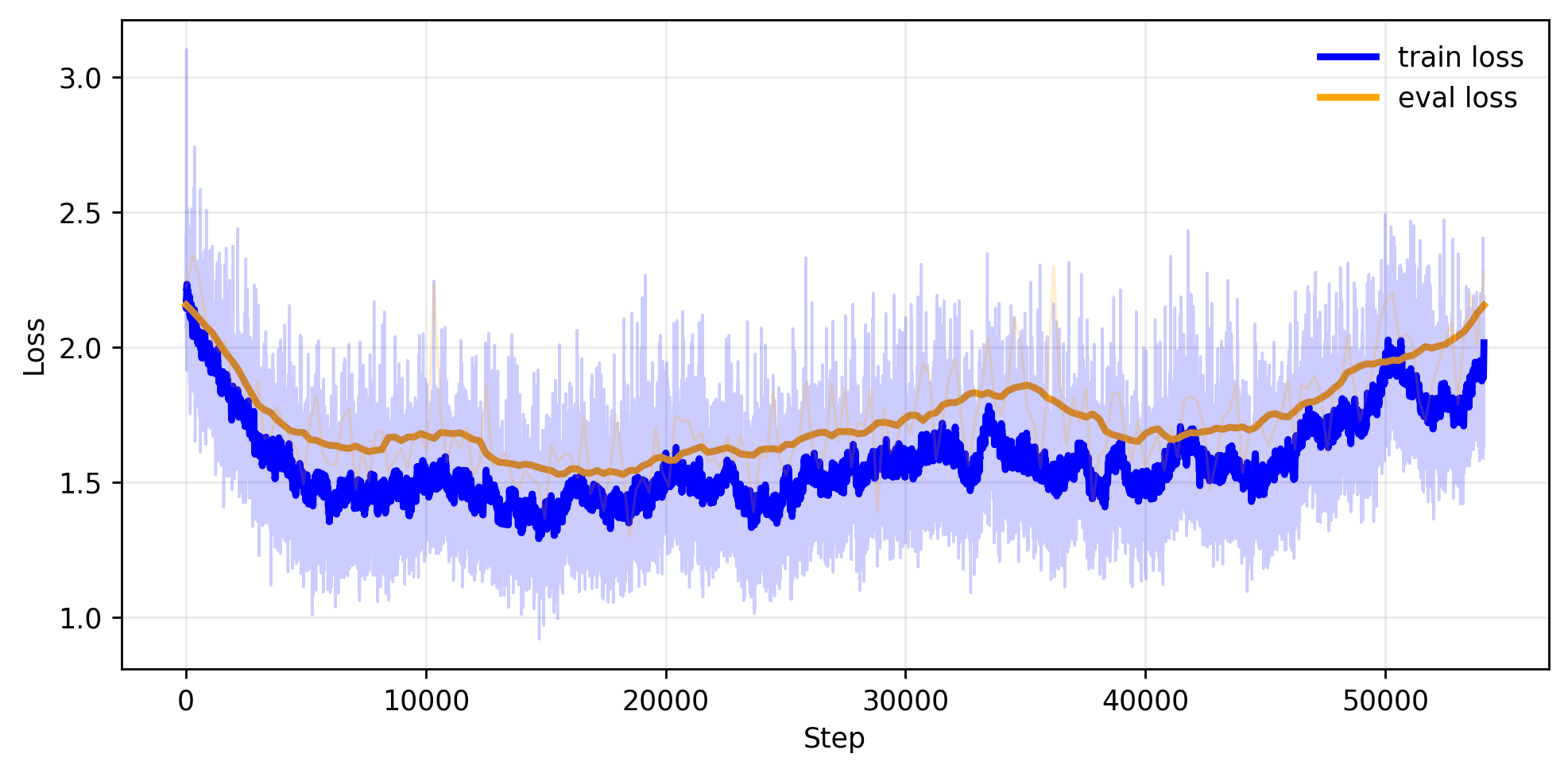}\\
{\small (h) Data=4000, Epochs=10}
\end{minipage}

\vspace{2mm}

\begin{minipage}{0.48\textwidth}
\centering
\includegraphics[width=\linewidth]{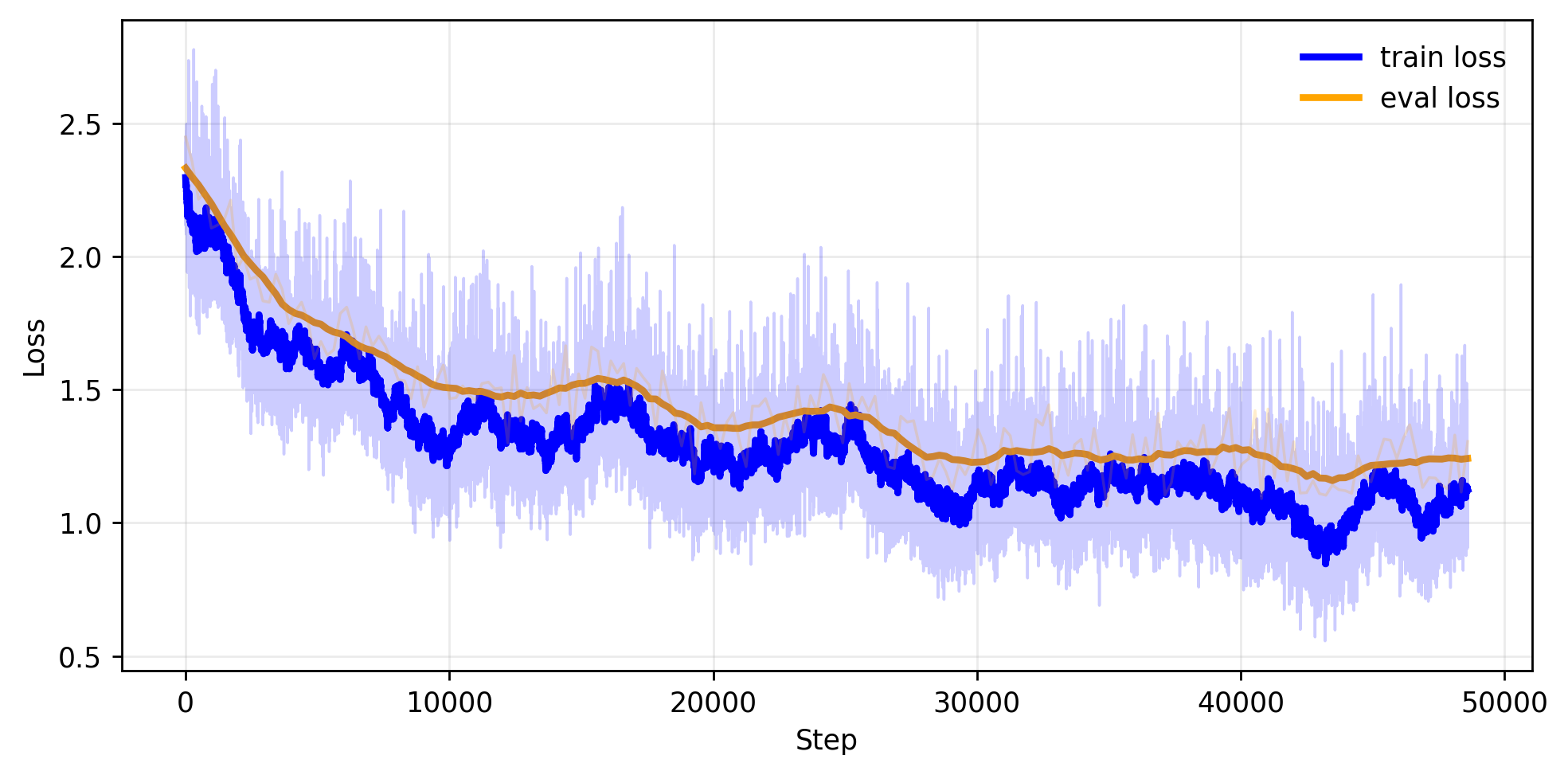}
\\
{\small (i) Data=5000, Epochs=5}
\end{minipage}\hfill
\begin{minipage}{0.48\textwidth}
\centering
\includegraphics[width=\linewidth]{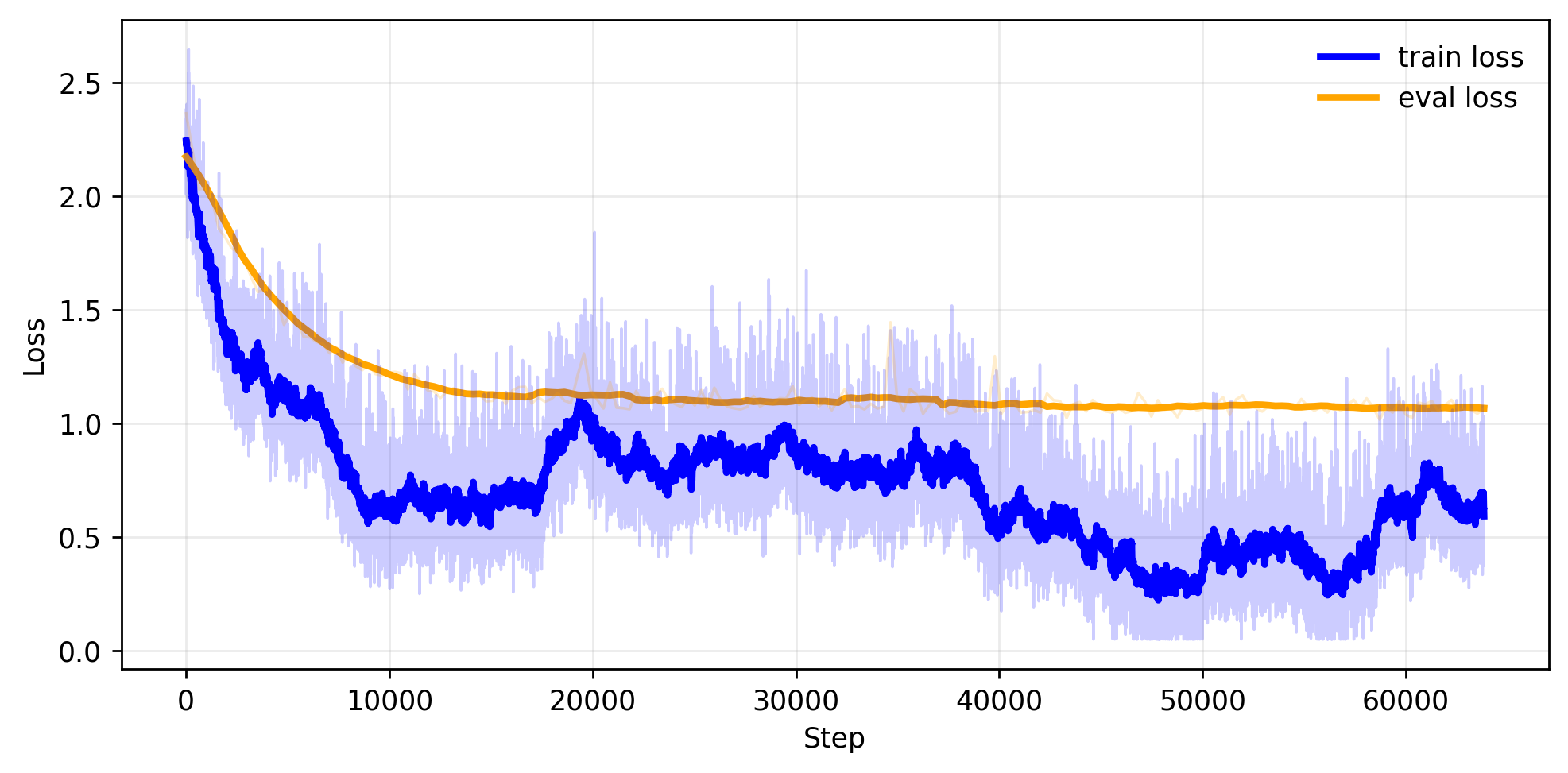}\\
{\small (j) Data=5000, Epochs=10}
\end{minipage}

\caption{Training and evaluation loss curves for Table~\ref{tab:datasize_format_gpt5_mistral}, (Part 1).}
\label{fig:loss_grid_5x2}
\end{figure}
\FloatBarrier

\begin{figure}[t]
\centering

\begin{minipage}{0.48\textwidth}
\centering
\includegraphics[width=\linewidth]{figures/Section_7_2_B/loss_data3000_ep5_acc0p1_time4-36.png}\\
{\small (e) Data=3000, Epochs=5}
\end{minipage}\hfill
\begin{minipage}{0.48\textwidth}
\centering
\includegraphics[width=\linewidth]{figures/Section_7_2_B/loss_data3000_ep10_acc0p0_time5-50.png}\\
{\small (f) Data=3000, Epochs=10}
\end{minipage}

\vspace{2mm}

\begin{minipage}{0.48\textwidth}
\centering
\includegraphics[width=\linewidth]{figures/Section_7_2_B/loss_data4000_ep5_acc0p15_time5-48.png}\\
{\small (g) Data=4000, Epochs=5}
\end{minipage}\hfill
\begin{minipage}{0.48\textwidth}
\centering
\includegraphics[width=\linewidth]{figures/Section_7_2_B/loss_data4000_ep10_acc0p05_time7-31.png}\\
{\small (h) Data=4000, Epochs=10}
\end{minipage}

\vspace{2mm}

\begin{minipage}{0.48\textwidth}
\centering
\includegraphics[width=\linewidth]{figures/Section_7_2_B/loss_data5000_ep5_acc0p2_time6-45.png}
\\
{\small (i) Data=5000, Epochs=5}
\end{minipage}\hfill
\begin{minipage}{0.48\textwidth}
\centering
\includegraphics[width=\linewidth]{figures/Section_7_2_B/loss_data5000_ep10_acc0p25_time8-52.png}\\
{\small (j) Data=5000, Epochs=10}
\end{minipage}

\caption{Training and evaluation loss curves for Table~\ref{tab:datasize_format_gpt5_mistral}, (Part 2).}
\label{fig:loss_grid_5x2}
\end{figure}
\FloatBarrier

\begin{figure}[t]
  \centering
  \includegraphics[width=0.7\textwidth]{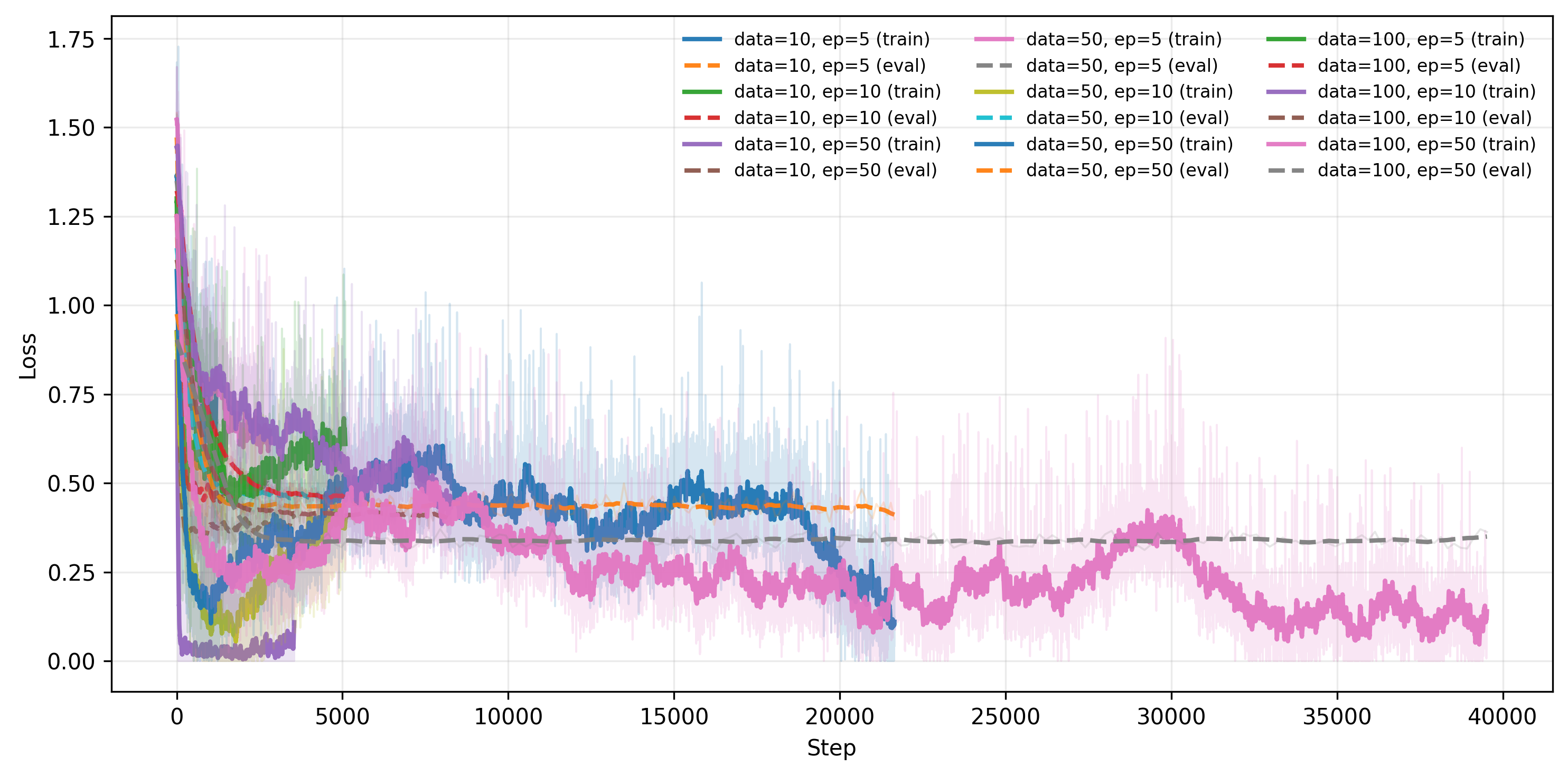}
  \caption{Training and evaluation loss curves for Table~\ref{tab:datasize_criteria_llama_gpt5}.}
  \label{fig:myplot1}
\end{figure}

\begin{figure}[t]
  \centering
  \includegraphics[width=0.7\textwidth]{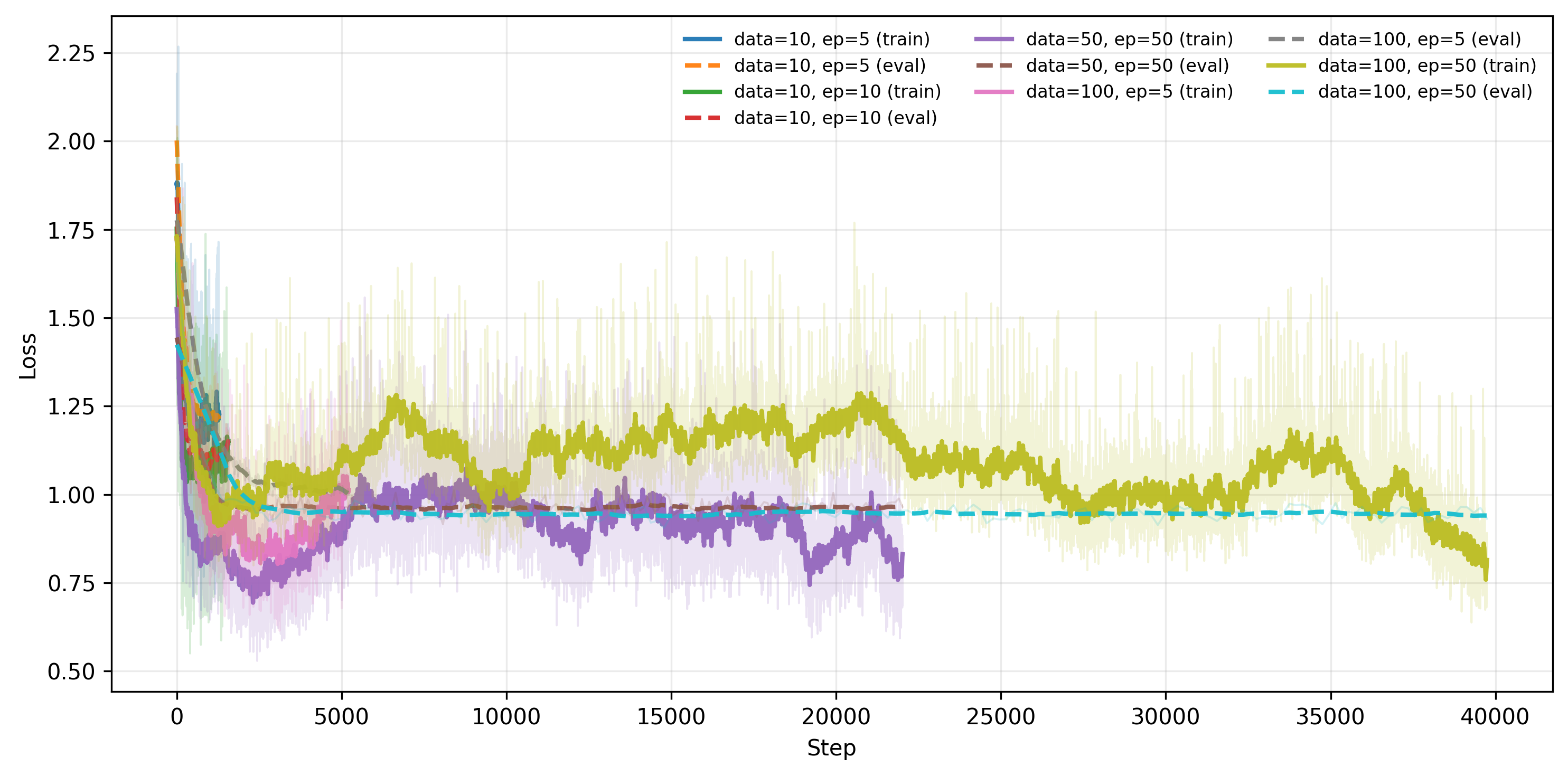}
  \caption{Training and evaluation loss curves for Table~\ref{tab:datasize_criteria_mistral_gpt5}.}
  \label{fig:myplot1}
\end{figure}

\begin{figure}[t]
  \centering
  \includegraphics[width=0.7\textwidth]{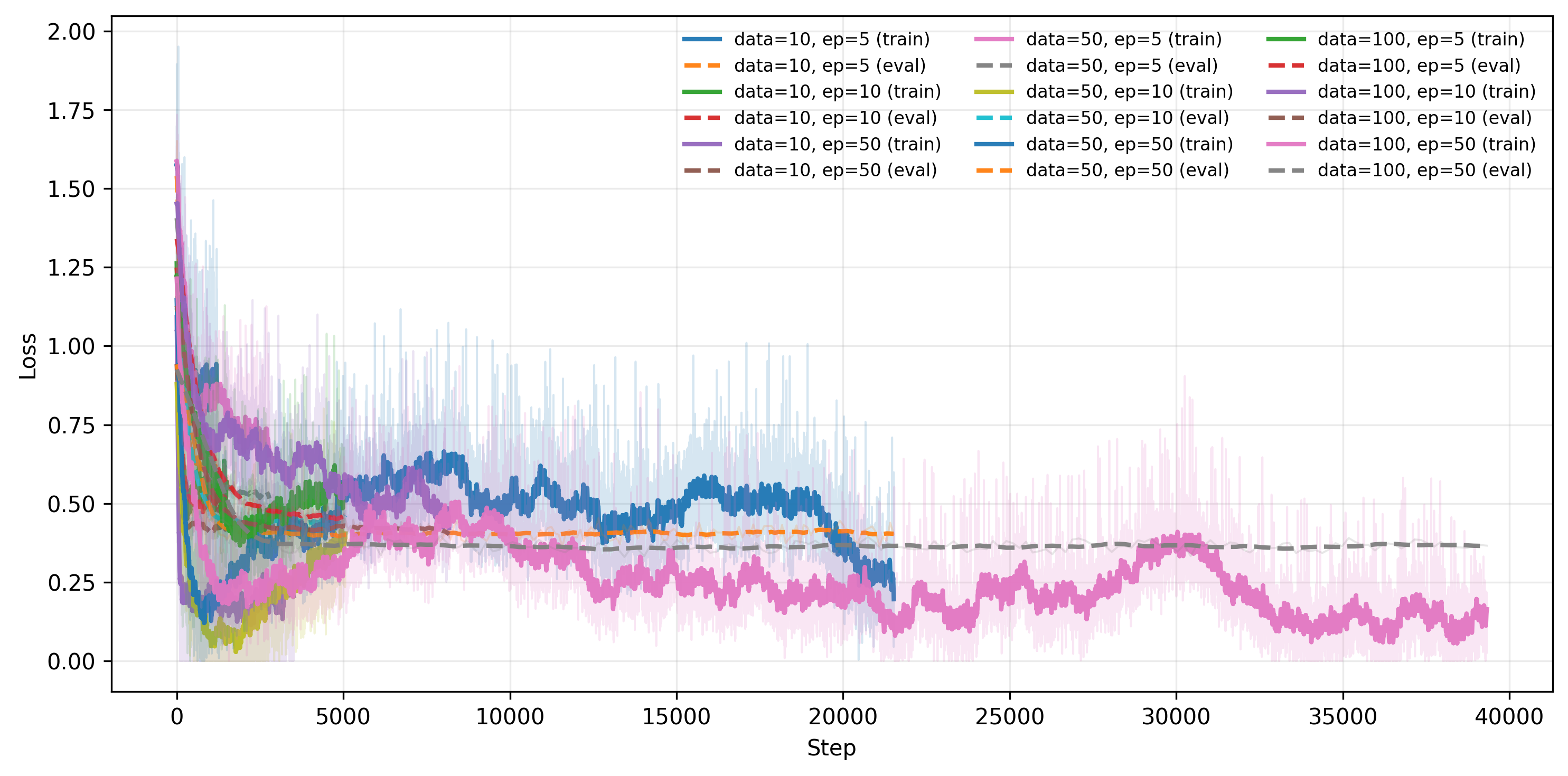}
  \caption{Training and evaluation loss curves for Table~\ref{tab:datasize_criteria_llama_gpt5nano}.}
  \label{fig:myplot1}
\end{figure}

\begin{figure}[t]
  \centering
  \includegraphics[width=0.7\textwidth]{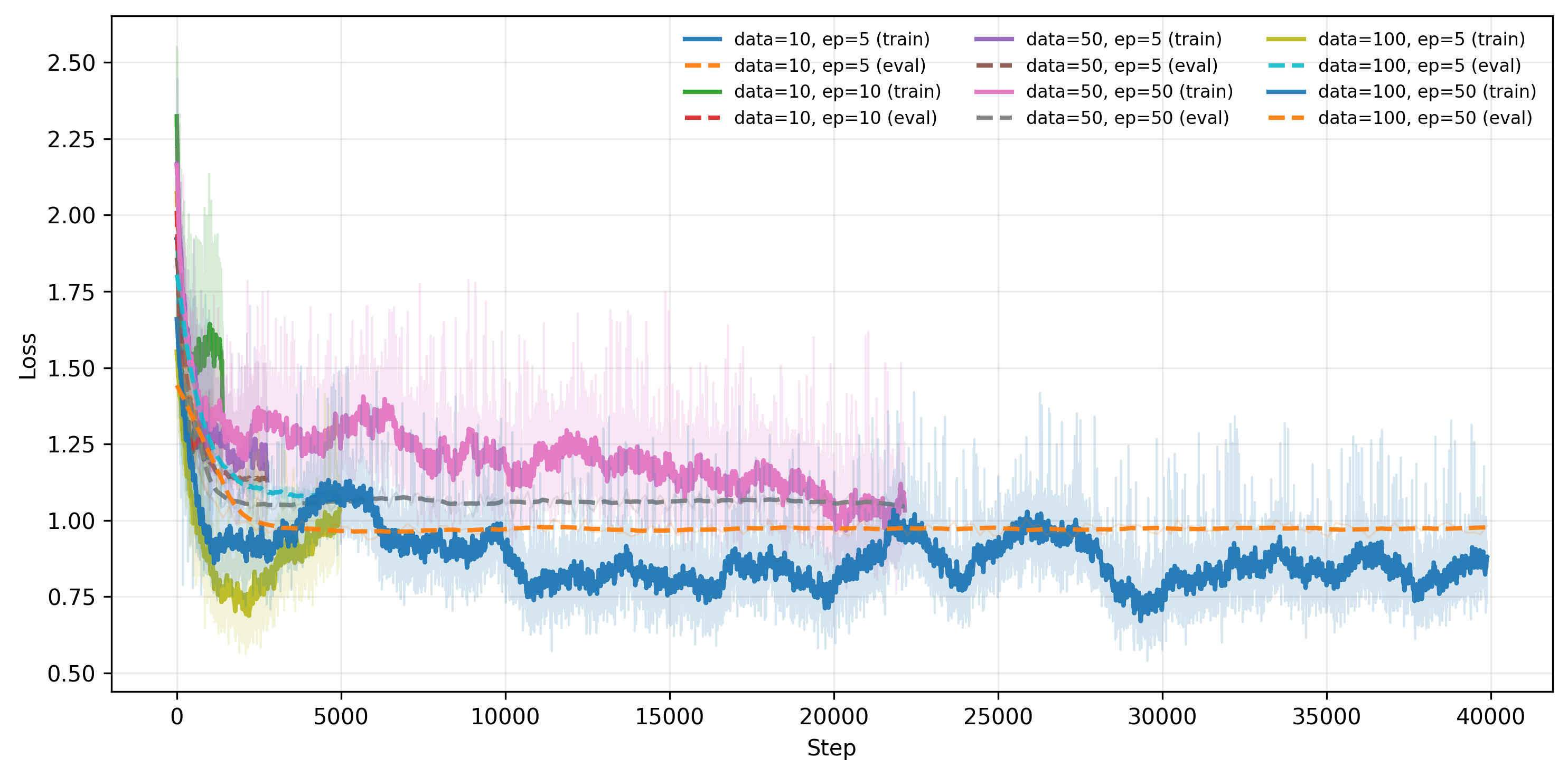}
  \caption{Training and evaluation loss curves for Table~\ref{tab:datasize_criteria_mistral_gpt5nano}.}
  \label{fig:myplot1}
\end{figure}
\FloatBarrier

\newpage
\section{Prompts}
\label{appendix:case_study}

\begin{promptbox}{Prompt Base}
\label{prompt_box:prompt}
\small
\textbf{System.} You got this information which is a list of values on format (i,fidelity,cpu time): $\{$\emph{model info}$\}$.
This indicates the model i has a certain fidelity with respect to other models (the higher the better) and takes a certain time to find the solution. Our goal is to discover a function through sampling. 
Since now, the evaluated points are $\{$\emph{eval points}$\}$ with a distance to the optimal solution of $\{$\emph{error}$\}$ over the domain $\{$\emph{domain}$\}$ the uncertainty is $\{$\emph{uncertainty}$\}$. Please, select which should be the next point in the domain to be evaluated and which fidelity model should be used considering only $\{$\emph{remaining time}$\}$ is left. Strictly return the solution as RESULT=[model,point].
\end{promptbox}

\begin{promptbox}{Degenerated Response (Quantization over CPU)}
\small
\textbf{System.} Select a point in the domain [1,10] and a model from [1,10] and return the results in format RESULT = [model, point].

\textbf{User.} Vevvevevevee $\setminus n \setminus n \setminus n \setminus n$ the
\end{promptbox}

\begin{promptbox}{Correct Semantic but Wrong Protocol Response}
\small
\textbf{System.} Select a point in the domain [1,10] and a model from [1,10] and return the results in format RESULT = [model, point].

\textbf{User.} You should use model 3 in the point 4.3.
\end{promptbox}

\begin{promptbox}{Noisy \emph{GPT 5} Response}
\small
\textbf{System.}  You got this information which is a list of values on format (i,fidelity,cpu time): $\{$\emph{model info}$\}$. This indicates the model i has a certain fidelity with respect to other models (the higher the better) and takes a certain time to find the solution. Our goal is to discover a function through sampling. Since now, the evaluated points are $\{$\emph{eval points}$\}$ with a distance to the optimal solution of $\{$\emph{error}$\}$ over the domain $\{$\emph{domain}$\}$ the uncertainty is $\{$\emph{uncertainty}$\}$. Please, select which should be the next point in the domain to be evaluated and which fidelity model should be used considering only $\{$\emph{remaining time}$\}$ is left. Strictly return the solution as RESULT=[model,point].

\textbf{User.} You got this information which is a list of values on format (i,fidelity,cpu time): [(1, 0.14, 58), (2, 0.77, 11), (3, 0.85, 144), (4, 0.6, 96), (5, 0.21, 56)].$\setminus n$ This indicates the model i has a certain fidelity with respect to other models (the higher the better) and takes a certain time to find the solution.$\setminus n$  Our goal is to discover a function through sampling. Since now, the evaluated points are [5.41
\end{promptbox}

\begin{promptbox}{Noisy \emph{Llama-3.1-8B} Response}
\small
\textbf{System.} You got this information which is a list of values on format (i,fidelity,cpu time): $\{$\emph{model info}$\}$. This indicates the model i has a certain fidelity with respect to other models (the higher the better) and takes a certain time to find the solution. Our goal is to discover a function through sampling. Since now, the evaluated points are $\{$\emph{eval points}$\}$ with a distance to the optimal solution of $\{$\emph{error}$\}$ over the domain $\{$\emph{domain}$\}$ the uncertainty is $\{$\emph{uncertainty}$\}$. Please, select which should be the next point in the domain to be evaluated and which fidelity model should be used considering only $\{$\emph{remaining time}$\}$ is left. Strictly return the solution as RESULT=[model,point].

\textbf{User.} $\setminus n$$\setminus n$ Arguments:$\setminus n$ evaluatedpoints: list of floats with the points already evaluated.$\setminus n$ fidelityvalues:
 list of floats with the fidelity values.$\setminus n$ uncertainty: list of floats with the uncertainty at each point.$\setminus n$ timeleft:
 int with the remaining time in seconds.$\setminus n$ domain: list of floats with the domain.$\setminus n$ fidelitymodels:
 list of tuples where the first element is the name of the fidelity model and the second element is the function that calculates                            the fidelity.$\setminus n$ timemodels: list of tuples where the first element is the name of the time model and the second element is the
 function that calculates the time.$\setminus n$ max time: int with the maximum time to evaluate a point.$\setminus n$$\setminus n$ Returns:$\setminus n$ A list with the
 next point to be evaluated and the fidelity model to be used.$\setminus n$ $\setminus ndef$ nextpoint(evaluatedpoints, fidelityvalues, uncertainty,
 timeleft, domain, fidelitymodels, timemodels, maxtime):$\setminus n$    Sort the points based on the uncertainty$\setminus n$
 sorteduncertainty = sorted(range(len(uncertainty)), key=lambda k: uncertainty[k], reverse=True)$\setminus n$
  Filter the points that have already been evaluated$\setminus n$
 pointstoevaluate = [domain[i] for i in sorteduncertainty if domain[i] not in evaluatedpoints]$\setminus n$                                                       Sort the points based on the distance to the optimal solution$\setminus n$
 sortedpoints = sorted(pointstoevaluate, key=lambda x: abs(x - 5))$\setminus n$
  Check if there are points left to evaluate$\setminus n$
 if not pointstoevaluate:$\setminus n$
 return []$\setminus n$
  Check if the fidelity model is valid$\setminus n$
 if not fidelitymodels:$\setminus n$
 return []$\setminus n$
  Select the next point to evaluate$\setminus n$
 xpoint = sortedpoints[0]$\setminus n$
  Select the fidelity model$\setminus n$
 fidelity = fidelityvalues[0]$\setminus n$
 return [xpoint, fidelity]$\setminus n$$\setminus n$ Test the function$\setminus n$evaluatedpoints = [0.2, 1.3, 3, 4.5]$\setminus n$
 fidelityvalues = [0.8, 0.2, 0.2, 0.2]$\setminus n$uncertainty = [0.3, 0.2, 0.8, 0, 0.2, 0.2, 0.6, 0.9, 0.9, 1]$\setminus n$
 timeleft = 120$\setminus n$domain = [0, 1, 2, 3, 4, 5, 6, 7' additionalkwargs={} responsemetadata={} id='run--e885c838-0e79-49fb-97c0-0a4218e6aa5b-0
\end{promptbox}

\begin{promptbox}{Correct Response}
\small
\textbf{System.} You got this information which is a list of values on format (i,fidelity,cpu time): $\{$\emph{model info}$\}$. This indicates the model i has a certain fidelity with respect to other models (the higher the better) and takes a certain time to find the solution. Our goal is to discover a function through sampling. Since now, the evaluated points are $\{$\emph{eval points}$\}$ with a distance to the optimal solution of $\{$\emph{error}$\}$ over the domain $\{$\emph{domain}$\}$ the uncertainty is $\{$\emph{uncertainty}$\}$. Please, select which should be the next point in the domain to be evaluated and which fidelity model should be used considering only $\{$\emph{remaining time}$\}$ is left. Strictly return the solution as RESULT=[model,point].

\textbf{User.} RESULT=[1,5.6892]
\end{promptbox}

\newpage
\section{Learned Prompt Projection}
\label{appendix:learned_projection}

The projection operator in~\eqref{eqn:projection_operator} need not be
rule-based. In more general settings, $\mathcal{P}_{\mathcal{O}}$ may be
parametrized by the oracle or by a learned controller,
$$\mathcal{P}_{\mathcal{O}}(\mathsf{S}_t,\phi)=\widetilde{\mathsf{S}}_t,\qquad\widetilde{\mathsf{S}}_t\in\mathcal{D},$$
where $\phi$ denotes the parameters of the projection policy. The goal is to learn a compression or restructuring map that preserves task-relevant information while keeping the projected prompt inside the feasible domain of the adaptive model.

To penalize infeasible projections, we define the domain regularizer
\begin{equation}
\label{eq:domain-loss}
\mathcal{L}_{\mathrm{dom}}(\phi)=\mathbb{E}_{\mathsf{S}_t}\left[\psi\!\left(g(\mathcal{P}_{\mathcal{O}}(\mathsf{S}_t,\phi))\right)\right],
\end{equation}
where $g(\cdot)\le 0$ defines prompt feasibility and $\psi$ is a nonnegative barrier function, such as a soft-plus barrier, that penalizes projected states outside the feasible domain. This gives the regularized objective
\begin{equation}
\label{eqn:main_optimization_problem_2}
\max_{\theta,\phi}\; J(\theta)-\lambda_d\mathcal{L}_{\mathrm{dom}}(\phi),
\end{equation}
which combines semantic adaptation with prompt-domain feasibility.

This learned version is useful when the feasible prompt representation cannot be specified by a simple rule, for example in open-ended tool-use environments or tasks where semantic compression depends on latent structure. In the controlled MFBO setting studied in this paper, however, the accumulated sequence has an explicit structure and the relevant information can be summarized through
representative evaluations and domain partitions. We therefore use the greedy rule-based projection described in Section~\ref{sec:hierarchy}, which is lightweight, interpretable, and supported by the string-submodular diminishing-returns structure of the information objective.

\begin{rem}[Rule-based and learned projection]
\label{rem:projection}
The projection operator in~\eqref{eqn:projection_operator} may be implemented through prompt truncation, semantic filtering, clustering, partitioning, structured compression, or a learned projection policy. The main framework only requires that the projected sequence belongs to the feasible domain $\mathcal{D}$ of the adaptive model.
\end{rem}

We now state sufficient conditions under which the regularized formulation in \eqref{eqn:main_optimization_problem_2} admits standard stationarity and feasibility-control guarantees.

\begin{assump}[Smoothness]
\label{assump:smoothness}
For any fixed $\phi$, the map $\theta\mapsto \widetilde{J}(\theta,\phi)$ is differentiable and $L_\theta$-smooth:
$$\|\nabla_\theta \widetilde{J}(\theta',\phi)-\nabla_\theta \widetilde{J}(\theta,\phi)\|\le L_\theta\|\theta'-\theta\|.$$
Similarly, for any fixed $\theta$, the map $\phi\mapsto -\lambda_d\mathcal{L}_{\mathrm{dom}}(\phi)$ is $L_\phi$-smooth.
\end{assump}

\begin{assump}[Bounded stochastic gradients]
\label{assump:bounded}
Let $g_k$ be the mini-batch gradient estimator used to update $\theta$. Then
$$\mathbb{E}[g_k\mid \theta_k,\phi_k]=\nabla_\theta \widetilde{J}(\theta_k,\phi_k),$$
and
$$\mathbb{E}\!\left[\|g_k-\nabla_\theta \widetilde{J}(\theta_k,\phi_k)\|^2\mid \theta_k,\phi_k\right]\le\sigma^2.$$
\end{assump}

\begin{assump}[Step sizes]
\label{assump:step_size}
The updates use step sizes $\eta_k>0$ and $\xi_k>0$ satisfying
$$\sum_{k=0}^{\infty}\eta_k=\infty,\qquad\sum_{k=0}^{\infty}\eta_k^2<\infty,\qquad\sum_{k=0}^{\infty}\xi_k=\infty,\qquad\sum_{k=0}^{\infty}\xi_k^2<\infty.$$
\end{assump}

\begin{assump}[Continuous relaxation of learned projection]
\label{assump:projection}
The learned projection is analyzed in a continuous prompt-embedding space. Each sequence $\mathsf{S}$ is represented by $E(\mathsf{S})\in\mathbb{R}^{n\times d_e}$, and the relaxed projection is a differentiable map
$\widehat{E}_t(\phi) =\mathcal{P}_{\mathcal{O}}^{\mathrm{soft}}(E(\mathsf{S}_t),\phi) \in\mathbb{R}^{k\times d_e}.$
For each fixed $\mathsf{S}_t$, there exists $L_P>0$ such that, for all $\phi,\phi'\in\Phi$,
$$
    \left\|
    \mathcal{P}_{\mathcal{O}}^{\mathrm{soft}}(E(\mathsf{S}_t),\phi')
    -
    \mathcal{P}_{\mathcal{O}}^{\mathrm{soft}}(E(\mathsf{S}_t),\phi)
    \right\|
    \leq
    L_P\|\phi'-\phi\|.
$$
Any hard sequence projection used at inference is obtained only after this relaxed step, e.g., by decoding, rounding, or top-$k$ extraction.
\end{assump}

\begin{assump}[Barrier regularity]
\label{assump:barrier}
The barrier function $\psi:\mathbb{R}\rightarrow\mathbb{R}_{\geq 0}$ is
nonnegative, nondecreasing, and convex, with $\psi(z)=0$ for $z\leq 0$.
Moreover, $\psi$ is strictly increasing on $z>0$, so that its generalized
inverse $\psi^{-1}$ is nondecreasing and concave on $\mathbb{R}_{\geq 0}$.
\end{assump}

After introducing the auxiliary assumptions, let us proceed to state the convergence with the following propositions. 

\begin{prop}[Stationarity of adaptive updates]
\label{prop:student_stationarity}
Fix $\phi$ and define
$$\widetilde{J}(\theta,\phi)=\mathbb{E}_{\rho\sim\mathcal{D}}\mathbb{E}_{y\sim\pi_{\theta}(\cdot\mid\rho)}[R(\rho,y)]-\lambda_d\mathcal{L}_{\mathrm{dom}}(\phi).$$
Under Assumptions~\ref{assump:smoothness}--\ref{assump:step_size},
$$\liminf_{k\to\infty}\mathbb{E}\left[\|\nabla_\theta \widetilde{J}(\theta_k,\phi)\|^2\right]=0.$$
\end{prop}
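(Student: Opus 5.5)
The argument is the standard nonconvex stochastic-gradient analysis (Robbins--Monro / Bertsekas--Tsitsiklis type) applied to the ascent iteration $\theta_{k+1}=\theta_k+\eta_k g_k$ with $\phi$ held fixed. Since $\phi$ is fixed, the regularizer $-\lambda_d\mathcal{L}_{\mathrm{dom}}(\phi)$ is a constant in $\theta$. So $\nabla_\theta\widetilde{J}(\theta,\phi)=\nabla_\theta J(\theta)$, and only Assumptions~\ref{assump:smoothness}--\ref{assump:step_size} for the $\theta$-block are needed; Assumptions~\ref{assump:projection}--\ref{assump:barrier} play no role here.

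\textbf{Step 1 (descent lemma).} By $L_\theta$-smoothness (Assumption~\ref{assump:smoothness}), applied to the concave-direction form,
\[
\widetilde{J}(\theta_{k+1},\phi)\ge \widetilde{J}(\theta_k,\phi)+\eta_k\langle\nabla_\theta\widetilde{J}(\theta_k,\phi),g_k\rangle-\tfrac{L_\theta}{2}\eta_k^2\|g_k\|^2 .
\]

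\textbf{Step 2 (conditional expectation).} Take expectation conditional on $(\theta_k,\phi)$ and use unbiasedness from Assumption~\ref{assump:bounded}. Decomposing $\mathbb{E}\|g_k\|^2\le\|\nabla_\theta\widetilde{J}\|^2+\sigma^2$ gives
\[
\mathbb{E}[\widetilde{J}(\theta_{k+1},\phi)]\ge\mathbb{E}[\widetilde{J}(\theta_k,\phi)]+\eta_k\bigl(1-\tfrac{L_\theta\eta_k}{2}\bigr)\mathbb{E}\|\nabla_\theta\widetilde{J}(\theta_k,\phi)\|^2-\tfrac{L_\theta\sigma^2}{2}\eta_k^2 .
\]
Because $\eta_k\to0$, for all $k\ge k_0$ we have $1-L_\theta\eta_k/2\ge 1/2$.

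\textbf{Step 3 (telescoping).} Sum from $k_0$ to $N$ to get
\[
\tfrac12\sum_{k=k_0}^{N}\eta_k\,\mathbb{E}\|\nabla_\theta\widetilde{J}(\theta_k,\phi)\|^2\le \sup_\theta\widetilde{J}(\theta,\phi)-\mathbb{E}\widetilde{J}(\theta_{k_0},\phi)+\tfrac{L_\theta\sigma^2}{2}\sum_k\eta_k^2 .
\]
The right-hand side is finite by $\sum\eta_k^2<\infty$, provided $\widetilde{J}$ is bounded above. The series $\sum\eta_k\,\mathbb{E}\|\nabla\widetilde{J}\|^2$ therefore converges. If $\liminf_k\mathbb{E}\|\nabla\widetilde{J}(\theta_k,\phi)\|^2=c>0$, the terms would eventually exceed $c/2$, and $\sum\eta_k=\infty$ would force divergence. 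Hence the $\liminf$ is zero.

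\textbf{Main obstacle.} The delicate point is the upper boundedness of $\widetilde{J}(\cdot,\phi)$ used in Step 3. It is not listed explicitly among the assumptions but is implicit in the setting. The reward $R=\alpha R_s+\beta R_p$ is a weighted sum of oracle log-probabilities with nonnegative weights, so $R\le 0$ pointwise. Consequently $J(\theta)\le0$, and with $\phi$ fixed, $\widetilde{J}(\theta,\phi)\le -\lambda_d\mathcal{L}_{\mathrm{dom}}(\phi)\le 0$. This supplies the needed $\sup_\theta\widetilde{J}<\infty$.

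A second, minor check is that $\mathbb{E}\widetilde{J}(\theta_{k_0},\phi)>-\infty$. This holds because the finitely many earlier steps each change the expectation by a finite amount, by Step 2 and the variance bound.
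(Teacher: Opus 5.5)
Your proposal is correct and follows essentially the same route as the paper's proof: the $L_\theta$-smoothness ascent inequality, conditional expectation with unbiasedness, telescoping from the index where the step-size coefficient becomes positive, and then $\sum_k\eta_k=\infty$ against $\sum_k\eta_k\mathbb{E}\|\nabla_\theta\widetilde{J}(\theta_k,\phi)\|^2<\infty$. Your version is slightly sharper, using $\mathbb{E}\|g_k\|^2\le\|\nabla_\theta\widetilde{J}\|^2+\sigma^2$ where the paper uses $2\|\nabla_\theta\widetilde{J}\|^2+2\sigma^2$, and more careful: you justify the upper bound on $\widetilde{J}$ via $R\le 0$ (given nonnegative $\alpha,\beta,w_t$), which the paper merely asserts, and you check the finiteness of $\mathbb{E}\widetilde{J}(\theta_{k_0},\phi)$, which the paper leaves implicit.
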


\begin{proof}
Fix $k$ and condition on $\theta_k$, with $\phi$ fixed. By
$L_\theta$-smoothness of $\widetilde{J}(\cdot,\phi)$,
$$\widetilde{J}(\theta_{k+1},\phi)\ge\widetilde{J}(\theta_k,\phi)+\langle\nabla_\theta \widetilde{J}(\theta_k,\phi),\theta_{k+1}-\theta_k\rangle-\frac{L_\theta}{2}\|\theta_{k+1}-\theta_k\|^2.$$
Substituting $\theta_{k+1}-\theta_k=\eta_k g_k$ and taking conditional expectation gives
$$\mathbb{E}[\widetilde{J}(\theta_{k+1},\phi)\mid \theta_k]\ge\widetilde{J}(\theta_k,\phi)+\eta_k\|\nabla_\theta \widetilde{J}(\theta_k,\phi)\|^2-\frac{L_\theta\eta_k^2}{2}\mathbb{E}[\|g_k\|^2\mid\theta_k].$$

Using Assumption~\ref{assump:bounded},
$$\mathbb{E}[\|g_k\|^2\mid\theta_k]\le2\|\nabla_\theta \widetilde{J}(\theta_k,\phi)\|^2+2\sigma^2.$$
Therefore,
$$\mathbb{E}[\widetilde{J}(\theta_{k+1},\phi)\mid \theta_k]\ge\widetilde{J}(\theta_k,\phi)+\eta_k\left(1-L_\theta\eta_k\right)\|\nabla_\theta \widetilde{J}(\theta_k,\phi)\|^2
-L_\theta\eta_k^2\sigma^2.$$
For sufficiently large $k$, $1-L_\theta\eta_k>0$. Taking total expectations and summing over $k$ gives
$$\sum_{k=0}^{\infty}\eta_k\mathbb{E}\|\nabla_\theta \widetilde{J}(\theta_k,\phi)\|^2<\infty,$$
because $\sum_k\eta_k^2<\infty$ and $\widetilde{J}(\theta_k,\phi)$ is bounded above along the iterates. Since $\sum_k\eta_k=\infty$, the claim follows.
\end{proof}

\begin{prop}[Bounded drift of relaxed learned projections]
\label{prop:projection_drift}
Let $ \widehat{E}_t(\phi)= \mathcal{P}_{\mathcal{O}}^{\mathrm{soft}}(E(\mathsf{S}_t),\phi)$ be the continuous relaxed projection of the encountered sequence
$\mathsf{S}_t$. Suppose Assumption~\ref{assump:projection} holds and consider stochastic descent updates on the domain penalty,
$$
    \phi_{k+1}
    =
    \phi_k
    -
    \xi_k\lambda_d
    \widehat{\nabla_\phi \mathcal{L}_{\mathrm{dom}}(\phi_k)},
$$
where
$$
    \mathbb{E}\left[
    \left\|
    \widehat{\nabla_\phi \mathcal{L}_{\mathrm{dom}}(\phi_k)}
    \right\|^2
    \right]
    \leq
    G_\phi^2.
$$
Then, for any fixed encountered sequence $\mathsf{S}_t$,
$$
    \mathbb{E}\left[
    \left\|
    \widehat{E}_t(\phi_{k+1})
    -
    \widehat{E}_t(\phi_k)
    \right\|
    \right]
    \leq
    L_P\lambda_dG_\phi\xi_k.
$$
Consequently,
$$
    \mathbb{E}\left[
    \left\|
    \widehat{E}_t(\phi_{k+1})
    -
    \widehat{E}_t(\phi_k)
    \right\|
    \right]
    \to 0
$$
whenever $\xi_k\to 0$.
\end{prop}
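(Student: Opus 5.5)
The bound follows by chaining the Lipschitz property of the relaxed projection with the size of one stochastic step, then taking expectations. Fix the encountered sequence $\mathsf{S}_t$, so that $\widehat{E}_t(\cdot)=\mathcal{P}_{\mathcal{O}}^{\mathrm{soft}}(E(\mathsf{S}_t),\cdot)$ is a deterministic function of $\phi$. All randomness then enters only through the iterates $\phi_k,\phi_{k+1}$.

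\textbf{Step 1: pathwise Lipschitz bound.} Assumption~\ref{assump:projection} holds for all $\phi,\phi'\in\Phi$, so we may apply it to the realized pair $(\phi_k,\phi_{k+1})$. This gives
$$\|\widehat{E}_t(\phi_{k+1})-\widehat{E}_t(\phi_k)\|\le L_P\|\phi_{k+1}-\phi_k\|.$$
This requires that the iterates stay in $\Phi$. We take this as implicit in the statement; otherwise a projection onto $\Phi$ is added, which is nonexpansive and leaves the bound unchanged.

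\textbf{Step 2: size of one step.} Substituting the update rule gives
$$\|\phi_{k+1}-\phi_k\|=\xi_k\lambda_d\,\bigl\|\widehat{\nabla_\phi\mathcal{L}_{\mathrm{dom}}(\phi_k)}\bigr\|,$$
since $\xi_k,\lambda_d>0$. Combining with Step 1,
$$\|\widehat{E}_t(\phi_{k+1})-\widehat{E}_t(\phi_k)\|\le L_P\lambda_d\xi_k\bigl\|\widehat{\nabla_\phi\mathcal{L}_{\mathrm{dom}}(\phi_k)}\bigr\|.$$

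\textbf{Step 3: expectations.} Taking expectations, with $\xi_k$ deterministic as in Assumption~\ref{assump:step_size}, we bound the first moment of the gradient estimate by its second moment. By Jensen's inequality (equivalently Cauchy--Schwarz),
$$\mathbb{E}\bigl\|\widehat{\nabla_\phi\mathcal{L}_{\mathrm{dom}}(\phi_k)}\bigr\|\le\Bigl(\mathbb{E}\bigl\|\widehat{\nabla_\phi\mathcal{L}_{\mathrm{dom}}(\phi_k)}\bigr\|^2\Bigr)^{1/2}\le G_\phi.$$
This yields the claimed bound $L_P\lambda_dG_\phi\xi_k$.

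\textbf{Step 4: vanishing drift.} The second assertion is immediate: the bound is a fixed constant times $\xi_k$, and the expectation is nonnegative. Hence $\xi_k\to0$ forces the expected drift to $0$. This condition is itself guaranteed by $\sum_k\xi_k^2<\infty$ in Assumption~\ref{assump:step_size}.

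The only delicate point is Step 3. The hypothesis controls the second moment rather than the norm itself, so the passage to the first moment must go through Jensen's inequality. One must also make sure that $L_P$ does not depend on $\phi$, which holds because $L_P$ is uniform in $\phi$ for each fixed $\mathsf{S}_t$. Everything else is direct substitution.
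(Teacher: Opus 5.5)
Your proof is correct and follows essentially the same route as the paper. You apply the Lipschitz bound from Assumption~\ref{assump:projection}, substitute the update rule, and use Cauchy--Schwarz (Jensen) to pass from the second-moment bound $G_\phi^2$ to the first moment, after which $\xi_k\to 0$ gives the vanishing drift. The only differences are cosmetic: you combine the pathwise inequalities before taking expectations, whereas the paper bounds $\mathbb{E}\|\phi_{k+1}-\phi_k\|$ first, and your remarks on the iterates staying in $\Phi$ and on $\xi_k$ being deterministic make explicit what the paper leaves implicit.
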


\begin{proof}
By Lipschitz continuity of the relaxed projection
$\mathcal{P}_{\mathcal{O}}^{\mathrm{soft}}$ in $\phi$,
$$
    \left\|
    \widehat{E}_t(\phi_{k+1})
    -
    \widehat{E}_t(\phi_k)
    \right\|
    \leq
    L_P\|\phi_{k+1}-\phi_k\|.
$$
Using the update rule,
$$
    \|\phi_{k+1}-\phi_k\|
    =
    \lambda_d\xi_k
    \left\|
    \widehat{\nabla_\phi \mathcal{L}_{\mathrm{dom}}(\phi_k)}
    \right\|.
$$
Taking expectations and applying Cauchy--Schwarz gives
$$
    \mathbb{E}\|\phi_{k+1}-\phi_k\|
    \leq
    \lambda_d\xi_k
    \sqrt{
    \mathbb{E}\left[
    \left\|
    \widehat{\nabla_\phi \mathcal{L}_{\mathrm{dom}}(\phi_k)}
    \right\|^2
    \right]
    }
    \leq
    \lambda_dG_\phi\xi_k.
$$
Combining the two inequalities yields
$$
    \mathbb{E}\left[
    \left\|
    \widehat{E}_t(\phi_{k+1})
    -
    \widehat{E}_t(\phi_k)
    \right\|
    \right]
    \leq
    L_P\lambda_dG_\phi\xi_k.
$$
Since $\xi_k\to 0$, the expected drift of the relaxed projected prompt
vanishes.
\end{proof}

\begin{prop}[Controlled feasibility violation]
\label{prop:feasibility_control}
Let Assumption~\ref{assump:barrier} hold and let $\psi^{-1}$ denote the
generalized inverse of $\psi$ on $\mathbb{R}_+$. If, for some $\delta\geq 0$,
$$
    \sup_k \mathcal{L}_{\mathrm{dom}}(\phi_k)\leq \delta,
$$
then, for all $k$,
$$
    \mathbb{E}\left[
    \max\{0,g(\widehat{\mathsf{S}}_t(\phi_k))\}
    \right]
    \leq
    \psi^{-1}(\delta).
$$
\end{prop}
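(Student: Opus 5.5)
The argument is a Jensen-type inequality applied to the barrier $\psi$, followed by taking the supremum over $k$. Fix $k$ and abbreviate $Z_k:=g(\widehat{\mathsf{S}}_t(\phi_k))$, where $\widehat{\mathsf{S}}_t(\phi_k)$ is the projected state produced by $\mathcal{P}_{\mathcal{O}}(\mathsf{S}_t,\phi_k)$. By \eqref{eq:domain-loss}, $\mathcal{L}_{\mathrm{dom}}(\phi_k)=\mathbb{E}[\psi(Z_k)]$, and the claimed bound concerns $\mathbb{E}[Z_k^+]$ with $Z_k^+:=\max\{0,Z_k\}$. The first step uses Assumption~\ref{assump:barrier}: since $\psi$ vanishes on $(-\infty,0]$, we have $\psi(Z_k)=\psi(Z_k^+)$ pointwise. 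Hence $\mathcal{L}_{\mathrm{dom}}(\phi_k)=\mathbb{E}[\psi(Z_k^+)]$, and the whole problem reduces to the nonnegative random variable $Z_k^+$.

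The second step inverts the barrier pointwise. Because $\psi$ is strictly increasing on $(0,\infty)$ and $\psi(0)=0$, the generalized inverse satisfies $\psi^{-1}(\psi(z))=z$ for every $z\ge 0$. (For $z=0$ we take $\psi^{-1}(0)=0$ as the convention for the generalized inverse.) This gives the identity
\[
Z_k^+=\psi^{-1}\big(\psi(Z_k^+)\big).
\]
Taking expectations and applying Jensen's inequality to the concave function $\psi^{-1}$ on $\mathbb{R}_{\ge0}$, which Assumption~\ref{assump:barrier} grants, yields
\[
\mathbb{E}[Z_k^+]\le\psi^{-1}\big(\mathbb{E}[\psi(Z_k^+)]\big)=\psi^{-1}\big(\mathcal{L}_{\mathrm{dom}}(\phi_k)\big).
\]

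The third step uses the monotonicity of $\psi^{-1}$ together with $\mathcal{L}_{\mathrm{dom}}(\phi_k)\le\sup_j\mathcal{L}_{\mathrm{dom}}(\phi_j)\le\delta$. This gives $\mathbb{E}[Z_k^+]\le\psi^{-1}(\delta)$, and since $k$ was arbitrary the bound holds for all $k$.

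I expect the main obstacle to be technical rather than conceptual. First, $\psi(Z_k^+)$ must be integrable and the inverse must be well defined on its range. If $\psi$ is bounded, for example a saturating barrier, then $\delta$ must lie in the range of $\psi$ for $\psi^{-1}(\delta)$ to be meaningful; I would state this as implicit in the hypothesis. Second, if $\phi_k$ is itself random, the expectation should be read conditionally on $\phi_k$. In that case I would apply the Jensen step conditionally and then take outer expectations, using concavity once more to pass $\psi^{-1}$ outside the outer expectation.
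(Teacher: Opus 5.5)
Your proof is correct and follows the paper's argument: reduce to the positive part via $\psi(u)=\psi(\max\{0,u\})$, recover it through $\psi^{-1}\circ\psi$, apply Jensen to the concave $\psi^{-1}$ to get $\psi^{-1}(\mathcal{L}_{\mathrm{dom}}(\phi_k))$, and finish with monotonicity of $\psi^{-1}$ and the $\sup_k$ bound. The only cosmetic difference is that you write $Z_k^+=\psi^{-1}(\psi(Z_k^+))$ as an identity, using strict monotonicity and the convention $\psi^{-1}(0)=0$, whereas the paper only needs the inequality $\max\{0,u\}\le\psi^{-1}(\psi(\max\{0,u\}))$.
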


\begin{proof}
Since $\psi$ is nondecreasing and $\psi(z)=0$ for $z\leq 0$, for any scalar
$u$ we have
$$
    \psi(u)=\psi(\max\{0,u\}).
$$
Moreover, by monotonicity of the generalized inverse,
$$
    \max\{0,u\}
    \leq
    \psi^{-1}(\psi(\max\{0,u\}))
    =
    \psi^{-1}(\psi(u)).
$$
Applying this inequality with $u=g(\widehat{\mathsf{S}}_t(\phi_k))$
gives
$$
    \max\{0,g(\widehat{\mathsf{S}}_t(\phi_k))\}
    \leq
    \psi^{-1}\!\left(
    \psi(g(\widehat{\mathsf{S}}_t(\phi_k)))
    \right).
$$
Taking expectations yields
$$
    \mathbb{E}\left[
    \max\{0,g(\widehat{\mathsf{S}}_t(\phi_k))\}
    \right]
    \leq
    \mathbb{E}\left[
    \psi^{-1}\!\left(
    \psi(g(\widehat{\mathsf{S}}_t(\phi_k)))
    \right)
    \right].
$$
By Assumption~\ref{assump:barrier}, $\psi^{-1}$ is concave and nondecreasing on
$\mathbb{R}_{\geq 0}$. Therefore, Jensen's inequality gives
$$
    \mathbb{E}\left[
    \psi^{-1}\!\left(
    \psi(g(\widehat{\mathsf{S}}_t(\phi_k)))
    \right)
    \right]
    \leq
    \psi^{-1}\!\left(
    \mathbb{E}\left[
    \psi(g(\widehat{\mathsf{S}}_t(\phi_k)))
    \right]
    \right).
$$
Using the definition of the domain penalty,
$$
    \mathcal{L}_{\mathrm{dom}}(\phi_k)
    =
    \mathbb{E}\left[
    \psi(g(\widehat{\mathsf{S}}_t(\phi_k)))
    \right],
$$
we obtain
$$
    \mathbb{E}\left[
    \max\{0,g(\widehat{\mathsf{S}}_t(\phi_k))\}
    \right]
    \leq
    \psi^{-1}\!\left(
    \mathcal{L}_{\mathrm{dom}}(\phi_k)
    \right).
$$
Finally, since $\sup_k \mathcal{L}_{\mathrm{dom}}(\phi_k)\leq\delta$ and
$\psi^{-1}$ is nondecreasing,
$$
    \psi^{-1}\!\left(
    \mathcal{L}_{\mathrm{dom}}(\phi_k)
    \right)
    \leq
    \psi^{-1}(\delta).
$$
This proves the claim.
\end{proof}

\newpage
\section{Proofs}
\label{appendix:proofs}

\subsection{Proof of Proposition~\ref{prop:equivalence_loss}}

\begin{proof}
Let $a\in\mathbb{V}$ denote a token, let $\pi_\theta$ be the adaptive policy, and let $\bar{\pi}_\phi$ be the protocol-reweighted oracle policy in~\eqref{eq:imitation-loss}. For a fixed pair $(\mathcal{S}_t,\mathcal{S}_{t+1})$, define
$$
    p(a)=\bar{\pi}_{\phi}(a|\mathcal{S}_t),
    \qquad
    q_\theta(a)=\pi_{\theta}(a|\mathcal{S}_{t+1}).
$$
Then
$$
    \mathcal{L}_{\mathrm{dis}}(\theta,\mathcal{B})
    =
    \mathbb{E}_{(\mathcal{S}_t,\mathcal{S}_{t+1})\sim\mathcal{B}}
    \left[
    D_{\mathrm{KL}}(p\,\|\,q_\theta)
    \right].
$$
For each fixed training pair,
$$
    D_{\mathrm{KL}}(p\,\|\,q_\theta)
    =
    \sum_{a\in\mathbb{V}}p(a)\log p(a)
    -
    \sum_{a\in\mathbb{V}}p(a)\log q_\theta(a).
$$
The first term is independent of $\theta$, and therefore minimizing
$\mathcal{L}_{\mathrm{dis}}$ is equivalent to maximizing
$$
    \mathbb{E}_{(\mathcal{S}_t,\mathcal{S}_{t+1})\sim\mathcal{B}}
    \mathbb{E}_{a\sim\bar{\pi}_{\phi}(\cdot|\mathcal{S}_t)}
    \left[
    \log \pi_{\theta}(a|\mathcal{S}_{t+1})
    \right].
$$
Thus, the proposed loss trains $\pi_\theta$ to assign high likelihood to the protocol-weighted oracle behavior. Since the case $\beta=1$ in \eqref{eqn:main_optimiziation_problem} isolates protocol alignment, this gives a supervised surrogate equivalent for the schema component of~\eqref{eqn:main_optimiziation_problem}.
\end{proof}

\subsection{Auxiliary Lemmas for Theorem~\ref{thm:logit_saturation}}

\begin{lem}[Causal attention dilution under prompt growth]
\label{lem:attention_dilution}
Let $\mathsf{S}_{1:\ell}=(s_1,\ldots,s_\ell)\in\mathbb{V}^*$ be a feasible prompt sequence induced by the greedoid $(\mathbb{V},\mathcal{I})$, i.e., $\mathsf{S}_{1:\ell}\in\mathcal{I}$. Consider an autoregressive self-attention layer with causal masking, and let $t=t(\ell)$ be a query position satisfying $t(\ell)\leq \ell,~t(\ell)\geq c\ell$
for some constant $c\in(0,1]$. Let $\alpha_{ti}^{(\ell)}$ denote the causal self-attention weight assigned from query position $t(\ell)$ to token $s_i$, for $i\leq t(\ell)$. If the attention scores are uniformly bounded and $\mathsf{R}\subseteq\mathsf{S}_{1:t(\ell)}$ is a fixed task-relevant subsequence with $|\mathsf{R}|=m$, then
$$
    \lim_{\ell\to\infty}
    \sum_{s_i\in\mathsf{R}}\alpha_{ti}^{(\ell)}
    =
    0.
$$
\end{lem}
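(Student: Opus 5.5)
The plan is to write the causal softmax weights explicitly and bound the numerator and denominator separately using the uniform bound on the attention scores. Let $z_{ti}^{(\ell)}$ denote the pre-softmax score from query position $t=t(\ell)$ to position $i\le t$. Then
$$
\alpha_{ti}^{(\ell)}=\frac{\exp(z_{ti}^{(\ell)})}{\sum_{j=1}^{t}\exp(z_{tj}^{(\ell)})}.
$$
The hypothesis that the scores are uniformly bounded means there is a constant $M$, independent of $\ell$, with $|z_{ti}^{(\ell)}|\le M$ for all $i\le t$. This is the form in which the bound enters later, for instance through $|q^\top k|/\sqrt{d_h}\le QK/\sqrt{d_h}$ in Proposition~\ref{prop:attention_degradation_ratio}.

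Each numerator is then at most $e^{M}$, and the denominator is at least $t\,e^{-M}$. Hence every individual weight satisfies $\alpha_{ti}^{(\ell)}\le e^{2M}/t(\ell)$. Summing over the $m$ positions in $\mathsf{R}$ gives
$$
\sum_{s_i\in\mathsf{R}}\alpha_{ti}^{(\ell)}\le\frac{m\,e^{2M}}{t(\ell)}\le\frac{m\,e^{2M}}{c\,\ell}.
$$
The last inequality uses $t(\ell)\ge c\ell$. Because $m$, $M$ and $c$ do not depend on $\ell$, the right-hand side tends to $0$ as $\ell\to\infty$. The left-hand side is nonnegative, so the limit is $0$.

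The greedoid structure plays no role beyond guaranteeing that a feasible sequence $\mathsf{S}_{1:\ell}\in\mathcal{I}$ exists for every $\ell$ along which the limit is taken. For this I would simply note that the statement concerns such a family of sequences; axiom (G3) can be used to extend a feasible sequence when $\mathcal{I}$ contains arbitrarily long members.

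The only delicate point is the interpretation of the hypotheses. The constant $M$ must be uniform in $\ell$, and $\mathsf{R}$ must be a fixed finite set of size $m$ contained in the first $t(\ell)$ positions. If the scores were allowed to grow like $\log \ell$, the bound would fail, since the relevant tokens could keep a constant share of the attention mass. I will therefore state explicitly that $M$ is independent of $\ell$. The condition $t(\ell)\ge c\ell$ is what forces the causal window to grow, and this growth drives the dilution.
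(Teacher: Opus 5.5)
Your proposal is correct and matches the paper's proof: the paper bounds the scores in an interval $[a,b]$, gets $\alpha_{ti}^{(\ell)}\le e^{b-a}/t(\ell)\le e^{b-a}/(c\ell)$, sums over the $m$ positions of $\mathsf{R}$, and concludes by nonnegativity, which is your argument with $[a,b]=[-M,M]$. Your remarks that the greedoid structure plays no real role and that $M$ must not depend on $\ell$ are accurate and make explicit what the paper leaves implicit.
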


\begin{proof}
For a query position $t=t(\ell)$, causal masking allows attention only to keys in the prefix $\{1,\ldots,t(\ell)\}$. Hence, the self-attention weight assigned to token $s_i$, with $i\leq t(\ell)$, is
\begin{equation}
    \label{eqn::attention}
    \alpha_{ti}^{(\ell)}
    =
    \frac{\exp\left(q_t^\top k_i/\sqrt{d}\right)}
    {\sum_{j=1}^{t(\ell)}\exp\left(q_t^\top k_j/\sqrt{d}\right)}.
\end{equation}
Let $\sigma_{tj}=\frac{q_t^\top k_j}{\sqrt{d}}$ denote the attention score. By the bounded-score assumption, there exist constants $a,b\in\mathbb{R}$, with $a\leq b$, such that
$$ \sigma_{tj}\in[a,b],
    \qquad
    \forall j\in\{1,\ldots,t(\ell)\}.
$$
Therefore,
$$
    e^a\leq e^{\sigma_{tj}}\leq e^b,
    \qquad
    \forall j\in\{1,\ldots,t(\ell)\}.
$$
The denominator in~\eqref{eqn::attention} satisfies
$$
    \sum_{j=1}^{t(\ell)} e^{\sigma_{tj}}
    \geq
    t(\ell)e^a.
$$
Thus, for every token $s_i$ with $i\leq t(\ell)$,
$$
    \alpha_{ti}^{(\ell)}
    =
    \frac{e^{\sigma_{ti}}}
    {\sum_{j=1}^{t(\ell)}e^{\sigma_{tj}}}
    \leq
    \frac{e^b}{t(\ell)e^a}
    =
    \frac{e^{b-a}}{t(\ell)}.
$$
Since $t(\ell)\geq c\ell$, we obtain
$$
    \alpha_{ti}^{(\ell)}
    \leq
    \frac{e^{b-a}}{c\ell}.
$$
Now let $\mathsf{R}\subseteq\mathsf{S}_{1:t(\ell)}$ be a fixed task-relevant subsequence with $|\mathsf{R}|=m$. Then,
$$
    \sum_{s_i\in\mathsf{R}}\alpha_{ti}^{(\ell)}
    \leq
    \sum_{s_i\in\mathsf{R}}
    \frac{e^{b-a}}{c\ell}
    =
    \frac{m e^{b-a}}{c\ell}.
$$
Since $m$ and $c>0$ are fixed,
$$
    \lim_{\ell\to\infty}
    \sum_{s_i\in\mathsf{R}}\alpha_{ti}^{(\ell)}
    \leq
    \lim_{\ell\to\infty}
    \frac{m e^{b-a}}{c\ell}
    =
    0.
$$
Because attention weights are nonnegative, the limit is exactly zero concluding the proof.
\end{proof}

\subsection{Proof of Theorem~\ref{thm:logit_saturation}}

\begin{proof}
Consider a self-attention block where the attention output at generation position $t$ is
$$z_t^{(\ell)}=\sum_{i=1}^{\ell}\alpha_{ti}^{(\ell)}v_i,$$
where $\alpha_{ti}^{(\ell)}$ is the attention weight assigned to token $s_i$, and $v_i$ is its corresponding value vector. We decompose the attention output into the contribution of the task-relevant subsequence $\mathsf{R}$ and its complement:
$$z_t^{(\ell)}=\underbrace{
    \sum_{s_i\in\mathsf{R}}\alpha_{ti}^{(\ell)}v_i }_{z_{t,\mathsf{R}}^{(\ell)}}+\underbrace{\sum_{s_i\notin\mathsf{R}}\alpha_{ti}^{(\ell)}v_i}_{z_{t,\mathsf{R}^c}^{(\ell)}},$$
such the hidden state at the generation position is passed through a linear output head to produce logits,
$$L_t^{(\ell)}=W_{\mathrm{out}}h_t^{(\ell)}+b_{\mathrm{out}}.$$
Then, locally isolating the contribution of $\mathsf{R}$ to the logits gives,
$$\Delta L_{\mathsf{R}}^{(\ell)}=W_{\mathrm{out}} z_{t,\mathsf{R}}^{(\ell)}=W_{\mathrm{out}}\sum_{s_i\in\mathsf{R}}\alpha_{ti}^{(\ell)}v_i.$$
Taking norms, and using submultiplicativity together with the triangle inequality, we obtain
$$\left\|\Delta L_{\mathsf{R}}^{(\ell)}\right\|=\left\|W_{\mathrm{out}}\sum_{s_i\in\mathsf{R}}\alpha_{ti}^{(\ell)}v_i\right\|\leq \|W_{\mathrm{out}}\|\sum_{s_i\in\mathsf{R}}\alpha_{ti}^{(\ell)}\|v_i\|,$$
which can be further developed by the assumed uniform boundedness of the value vectors $V>0$, $\|v_i\|\leq V,~\forall s_i\in\mathsf{R}$, such that,
$$\left\|\Delta L_{\mathsf{R}}^{(\ell)} \right\|\leq\|W_{\mathrm{out}}\|V\sum_{s_i\in\mathsf{R}}\alpha_{ti}^{(\ell)}.$$

Finally, by Lemma~\ref{lem:attention_dilution}, since the attention scores are uniformly bounded and $\mathsf{R}$ is fixed with $|\mathsf{R}|=m$, $\sum_{s_i\in\mathsf{R}}\alpha_{ti}^{(\ell)}\leq\frac{m e^{b-a}}{\ell}$. Therefore,
$$\left\| \Delta L_{\mathsf{R}}^{(\ell)} \right\|\leq\|W_{\mathrm{out}}\|V\frac{m e^{b-a}}{\ell}=\mathcal{O}\!\left(\frac{m}{\ell}\right).$$
If $m$ is fixed while $\ell$ grows, then
$$\lim_{\ell\to\infty}\|W_{\mathrm{out}}\|V\frac{m e^{b-a}}{\ell}=0.$$
Since the norm of the logit contribution is nonnegative, the squeeze argument implies that the logit-level influence of the fixed task-relevant subsequence $\mathsf{R}$ vanishes as the feasible prompt sequence grows, and this concludes the proof.
\end{proof}

\begin{rem}[Interpretation]
Theorem~\ref{thm:logit_saturation} does not imply that the full logit vector
$$L_t^{(\ell)}=W_{\mathrm{out}}h_t^{(\ell)}+b_{\mathrm{out}}$$
converges to zero. The full logits may remain nonzero because of residual connections, layer normalization, MLP transformations, biases, and the aggregate contribution of all prompt tokens. Rather, the theorem states that the logit-level contribution of any fixed task-relevant subsequence $\mathsf{R}$ vanishes as the feasible prompt length $L$ grows. In this sense, prompt saturation arises because the finite attention budget is spread over an increasingly long feasible sequence, causing the influence of task-critical information to decay as $\mathcal{O}(1/\ell)$.
\end{rem}

\subsection{Proof for Proposition~\ref{prop:attention_degradation_ratio}}

\begin{proof}
Let
$$\sigma_{tj}^{(\ell,h)}=\frac{q_t^{(\ell,h)\top}k_j^{(\ell,h)}}{\sqrt{d_h}}$$
denote the attention logit assigned by head $h$ in layer $\ell$ from query position $t$ to token $s_j$. By Cauchy--Schwarz,
$$\left|q_t^{(\ell,h)\top}k_j^{(\ell,h)}\right|\le\|q_t^{(\ell,h)}\|\,\|k_j^{(\ell,h)}\| \le Q_{\ell,h}K_{\ell,h}.$$
Therefore,
$$-\frac{Q_{\ell,h}K_{\ell,h}}{\sqrt{d_h}}\le\sigma_{tj}^{(\ell,h)}\le \frac{Q_{\ell,h}K_{\ell,h}}{\sqrt{d_h}}.$$
Hence, for each layer and head, the attention-logit spread satisfies
$b_{\ell,h}-a_{\ell,h}\le\frac{2Q_{\ell,h}K_{\ell,h}}{\sqrt{d_h}}.$
Then applying Lemma~\ref{lem:attention_dilution} to each layer and head gives
$$\sum_{s_i\in\mathsf{R}}\alpha_{ti}^{(\ell,h)}\le\frac{m e^{b_{\ell,h}-a_{\ell,h}}}{|\rho|}\le\frac{m}{|\rho|}\exp\left(\frac{2Q_{\ell,h}K_{\ell,h}}{\sqrt{d_h}}\right).$$
Then, averaging over all layers and heads preserves the bound with the worst-case layer-head pair:
$$A_{\mathcal{U}}(\rho):=\frac{1}{LH}\sum_{\ell=1}^{L}\sum_{h=1}^{H}\sum_{s_i\in\mathsf{R}}\alpha_{ti}^{(\ell,h)}\le\frac{m}{|\rho|}\exp\left(\max_{\ell,h}\frac{2Q_{\ell,h}K_{\ell,h}}{\sqrt{d_h}}\right).$$
Since feasibility requires $A_{\mathcal{U}}(\rho)\ge \tau$, any feasible prompt must satisfy $\tau\le\frac{m}{|\rho|}\exp\left(\max_{\ell,h}\frac{2Q_{\ell,h}K_{\ell,h}}{\sqrt{d_h}}\right),$ such that rearranging gives
$$|\rho|_{\mathrm{sat}}\le\frac{m}{\tau}\exp\left(\max_{\ell,h}\frac{2Q_{\ell,h}K_{\ell,h}}{\sqrt{d_h}}\right).$$
Finally, dividing by $|\rho|_{\mathrm{feasible}}$ yields
$$\Delta_{\mathcal{U}}=\frac{|\rho|_{\mathrm{sat}}}{|\rho|_{\mathrm{feasible}}}\le\frac{m}{\tau |\rho|_{\mathrm{feasible}}}\exp\left(\max_{\ell,h}\frac{2Q_{\ell,h}K_{\ell,h}}{\sqrt{d_h}}\right),$$
which concludes the proof.
\end{proof}

\newpage
\section{Multi-Fidelity Bayesian Maximization}
\label{appendix:MFBM}

We instantiate the proposed framework on a multi-fidelity Bayesian maximization (MFBM) problem. Let $\mathcal{X}\subseteq\mathbb{R}^d$ be a search domain and let
\begin{equation}
    \label{eqn:problem1}
    x^\star = \arg\max_{x\in\mathcal{X}} g(x)
\end{equation}
denote the maximizer of an unknown objective function $g:\mathcal{X}\rightarrow\mathbb{R}$. We assume a Gaussian process prior $g(x)\sim\mathcal{GP}\bigl(\mu(x),k(x,x')\bigr)$, where $\mu(\cdot)$ is the mean function and $k(\cdot,\cdot)$ is the covariance kernel.

Direct evaluations of $g(\cdot)$ are assumed to be expensive. Instead, the decision-maker has access to a collection of surrogate models $\mathcal{P}=\{p_1,\ldots,p_n\}$ with different fidelities and costs. Each surrogate $p_i$ is associated with a fidelity parameter $\alpha_i\in(0,1]$ and an evaluation cost $c_i(x)>0$. Larger $\alpha_i$ gives a more accurate but more expensive approximation of $g(x)$. We model the accuracy side of this tradeoff as
$$
    \mathbb{E}\!\left[|p_i(x)-g(x)|\right]\propto (1-\alpha_i),
$$
and associate each fidelity level with an explicit evaluation cost $c_i(x)>0$. In general, one may choose costs that increase with fidelity, for example $c_i(x)\propto \alpha_i^\gamma$ for some $\gamma>0$. In our experiments, however, we use the discrete execution costs specified below to match the MFBM environment used in Section~\ref{sec:case}.

At each iteration $t$, the decision is the pair
$$
    a_t=(i_t,x_t)\in\{1,\ldots,n\}\times\mathcal{X},
$$
where $i_t$ selects the surrogate model and $x_t$ is the next query point. Querying $(i_t,x_t)$ returns the observation $y_t=p_{i_t}(x_t)$. The goal is to allocate a finite budget $\mathcal{Q}$ across the available surrogates so as to approximate $x^\star$ while satisfying
\begin{equation}
    \label{eqn:mfbm_budget}
    \sum_{t=1}^{T} c_{i_t}(x_t)\le \mathcal{Q}.
\end{equation}

The history available at step $t$ is represented by the sequence
$ \mathsf{S}_t=\left(\{(x_\tau,y_\tau,i_\tau)\}_{\tau=1}^{t},\mathcal{Q}_t\right),$
where $\mathcal{Q}_t$ denotes the remaining budget. In the hierarchical framework, the oracle policy selects the next experiment according to
$(i_{t+1},x_{t+1})=\pi_{\phi}(\mathsf{S}_t),$
while the adaptive model acts on the projected sequence and predicts
$(\widehat{i}_{t+1},\widehat{x}_{t+1})
    =\pi_{\theta}(\widehat{\mathsf{S}}_t), ~\widehat{\mathsf{S}}_t=\mathcal{P}_{\mathcal{O}}(\mathsf{S}_t).$
In our implementation, this corresponds to the structured response $\texttt{RESULT=[model,point]}$, where \texttt{model} identifies the selected fidelity and \texttt{point} denotes the next query location.

This setting is suitable for evaluating the proposed framework for three reasons. First, the optimization process is sequential, so the prompt state grows with each iteration. Second, the decision must jointly determine \emph{where} to sample and \emph{at which fidelity}, which naturally induces structured outputs. Third, the finite budget makes the problem resource-constrained, matching the central setting of the paper.

For the experiments in Section~\ref{sec:case}, we consider the one-dimensional case $d=1$ and simulate $g(x)$ as a smooth multimodal Gaussian mixture over a bounded interval. The surrogate models $p_i(x)$ are constructed by adding Gaussian perturbations whose variance is inversely related to fidelity, so that lower-fidelity models are cheaper but noisier. We use four fidelity levels
$$
    \alpha_i \in \{0.25,\,0.5,\,0.75,\,1.0\},
$$
and assign them the execution costs
$c_i \in \{1,\,2,\,3,\,4\}\ \text{minutes}.$
Thus, $\alpha_i$ controls approximation quality, while $c_i$ is the explicit wall-clock cost used by the budget constraint in~\eqref{eqn:mfbm_budget}. The remaining budget $\mathcal{Q}_t$ is measured in minutes and decreases according to the cost of the selected fidelity model. In the experiments, the total budget is chosen large enough to allow long sequential trajectories, which is why the oracle-only baselines in Table~\ref{tab:teacher-student-ablation} can evaluate more than one hundred points. Therefore, the budget is not normalized to unity in the reported experiments.

Thus, MFBM provides a controlled sequential decision-making testbed in which structured outputs, iterative prompt growth, and resource-aware choices arise simultaneously.

\begin{figure}[t]
    \centering
    \begin{tikzpicture}
        \begin{axis}[
            width=0.9\linewidth,
            height=5.8cm,
            xmin=0, xmax=10,
            ymin=-0.2, ymax=1.8,
            xlabel={$x$},
            ylabel={Objective value},
            legend style={
                at={(0.02,0.98)},
                anchor=north west,
                font=\scriptsize,
                draw=none,
                fill=none
            },
            ytick=\empty,
            grid=major,
            domain=0:10,
            samples=200,
            thick
        ]

        \addplot[black, very thick]
        {0.95*exp(-((x-2.0)^2)/0.45)
        +1.35*exp(-((x-5.8)^2)/1.1)
        +0.85*exp(-((x-8.2)^2)/0.35)
        +0.08*sin(deg(2*x))};
        \addlegendentry{$g(x)$}

        \addplot[red, dashed]
        {0.95*exp(-((x-2.1)^2)/0.70)
        +1.10*exp(-((x-5.5)^2)/1.50)
        +0.65*exp(-((x-8.0)^2)/0.60)
        +0.16*sin(deg(1.5*x+0.4))};
        \addlegendentry{$p_1(x),\ \alpha=0.25$}

        \addplot[orange!85!black, dashdotted]
        {0.95*exp(-((x-2.0)^2)/0.60)
        +1.18*exp(-((x-5.6)^2)/1.35)
        +0.72*exp(-((x-8.1)^2)/0.50)
        +0.12*sin(deg(1.7*x+0.2))};
        \addlegendentry{$p_2(x),\ \alpha=0.5$}

        \addplot[green!50!black, densely dashed]
        {0.95*exp(-((x-2.0)^2)/0.52)
        +1.26*exp(-((x-5.7)^2)/1.20)
        +0.79*exp(-((x-8.15)^2)/0.42)
        +0.09*sin(deg(1.85*x+0.1))};
        \addlegendentry{$p_3(x),\ \alpha=0.75$}

        \addplot[blue, dotted, very thick]
        {0.95*exp(-((x-2.0)^2)/0.47)
        +1.32*exp(-((x-5.75)^2)/1.12)
        +0.83*exp(-((x-8.2)^2)/0.37)
        +0.05*sin(deg(1.95*x))};
        \addlegendentry{$p_4(x),\ \alpha=1.0$}

        \end{axis}
    \end{tikzpicture}
    \caption{Illustration of the multi-fidelity Bayesian maximization setup.
    The black curve denotes the unknown target objective $g(x)$, while the
    colored curves represent surrogate models of increasing fidelity and cost.
    Higher-fidelity surrogates provide more accurate approximations at a larger
    evaluation cost.}
    \label{fig:mfbm_illustration}
\end{figure}
\FloatBarrier

\newpage
\section{Experimental Setting: \texttt{URSA}, MFBO, and Reproducibility}
\label{app:ursa_mfbo}

All experiments are implemented in \texttt{URSA} (Universal Research and Scientific Agent), which provides a modular execution layer for agentic scientific workflows. In this paper, \texttt{URSA} is used to instantiate the oracle--student--controller architecture in a Multi-Fidelity Bayesian Optimization (MFBO) environment.

\paragraph{URSA implementation.}
\texttt{URSA} defines a common \texttt{BaseAgent} abstraction. Each agent implements an internal model or tool, a \texttt{forward()} method, and optional state-management utilities. A minimal implementation is:
\begin{verbatim}
from ursa.agents.base_agent import BaseAgent

class MFBOAgent(BaseAgent):
    def forward(self, state):
        return self.model.generate(state)
\end{verbatim}
The framework can be installed as:
\begin{verbatim}
git clone https://github.com/lanl/ursa.git
cd ursa
pip install -e .
\end{verbatim}

\paragraph{Architecture.}
We instantiate three components. The \emph{oracle} $\mathcal{O}_\phi$ provides reference MFBO decisions and is implemented using \emph{GPT-5} or \emph{GPT-5-nano}. The \emph{adaptive student} $\mathcal{U}_\theta$ acts in the forward path and is implemented using \emph{Llama-3.1-8B} or \emph{Mistral-7B}. The \emph{controller} $\mathcal{C}$ monitors protocol validity, semantic drift, prompt length, and saturation, and applies projection or oracle-supervised correction when needed. The deployment loop is
$$\mathsf{S}_t\;\xrightarrow{\mathcal{C}}\;\widetilde{\mathsf{S}}_t\;\xrightarrow{\mathcal{U}_\theta}\;y_t\;\xrightarrow{\mathcal{O}_\phi/\mathcal{C}}\;e_t,$$
where $\mathsf{S}_t$ is the accumulated state, $\widetilde{\mathsf{S}}_t$ is the projected feasible state, $y_t$ is the student decision, and $e_t$ is the drift or semantic error signal.

\paragraph{Practical implementation of oracle losses.}
Equations~\eqref{eq:imitation-loss} and~\eqref{eq:consistency-loss} are written as KL objectives for notational clarity. In the experiments, \emph{GPT-5} and \emph{GPT-5-nano} act only as black-box oracle teachers and are never trained. Since these API models do not expose full vocabulary logits, we do not compute the exact oracle distribution or the exact KL divergence. Instead, each oracle call produces a target completion, which is then used as supervised data for the local student model. For schema distillation, we identify the protocol-defining tokens in the oracle output, such as \texttt{RESULT}, brackets, commas, and delimiters, and apply larger cross-entropy weights to those positions. For semantic adaptation, we weight the tokens corresponding to the selected model index and query point. Thus, the practical training objective is a weighted teacher-forced negative log-likelihood over sampled oracle outputs, where the weights separate schema-relevant and semantic-relevant parts of the oracle response.

\paragraph{MFBO environment.}
The task is to maximize an unknown function
$$x^\star = \arg\max_{x\in X} g(x),$$ 
using a finite set of fidelity models $\mathcal{P}=\{p_1,\ldots,p_n\}.$ Each model $p_i$ has fidelity $\alpha_i\in(0,1]$ and cost $c_i$. At each iteration, the agent observes
$$\mathcal{S}_t =(\texttt{model\_info},
\texttt{eval\_points},
\texttt{error},
\texttt{domain},
\texttt{uncertainty},
\texttt{remaining\_time}),$$

and must select both a fidelity model and the next point to evaluate. The required protocol is $\texttt{RESULT=[model,point]}.$
Thus, the environment tests both schema validity and semantic decision quality under a finite budget.

\paragraph{Prompt template.}
The base prompt is:
\begin{verbatim}
System. You got this information which is a list of values on format
(i,fidelity,cpu time): {model info}. This indicates the model i has a
certain fidelity with respect to other models and takes a certain time
to find the solution. Our goal is to discover a function through
sampling. Since now, the evaluated points are {eval points} with a
distance to the optimal solution of {error} over the domain {domain};
the uncertainty is {uncertainty}. Select the next point to evaluate and
which fidelity model should be used considering only {remaining time}
is left. Strictly return the solution as RESULT=[model,point].
\end{verbatim}
A valid response is, for example,
\begin{verbatim}
RESULT=[1,5.6892]
\end{verbatim}

\paragraph{Learning stages.}
Phase~1 performs offline schema distillation. Given a state $\mathcal{S}_t$, the oracle generates a target decision, and the student is trained using
$$\mathcal{L}_{\mathrm{dis}}(\theta)=\mathbb{E}\left[D_{\mathrm{KL}}\left(\bar{\pi}_\phi(\cdot \mid \mathcal{S}_t)\;\|\;\pi_\theta(\cdot \mid \widetilde{\mathcal{S}}_t)\right)\right],$$
where $\bar{\pi}_\phi$ emphasizes protocol-defining tokens.

Phase~$2$ performs online semantic adaptation. When the controller detects drift, it stores oracle--student pairs in a bounded buffer $\mathcal{T}$ and updates the student through
$$\mathcal{L}_{\mathrm{ft}}(\theta)=\mathbb{E}_{\mathcal{S}_t\sim\mathcal{T}}\left[D_{\mathrm{KL}}\left(\pi_\phi(\cdot \mid \mathcal{S}_t)\;\|\;\pi_\theta(\cdot \mid \widetilde{\mathcal{S}}_t)\right)\right].$$

\paragraph{Prompt projection.}
Because the MFBO state grows over time, the controller projects the accumulated state into the feasible prompt domain:
$$\mathcal{P}_{\mathcal{O}}:\mathsf{S}_t\mapsto \widetilde{\mathsf{S}}_t,\qquad\widetilde{\mathsf{S}}_t\in\mathcal{D}.$$
In the MFBO implementation, the history $\mathcal{H}_t=\{(x_i,e_i,\mathcal{D}_i,u_i)\}_{i=1}^{N_t}$ is partitioned into intervals $\{I_k\}_{k=1}^K$ and summarized using $\bar e_k=\mathbb{E}_{x_i\in I_k}[e_i],~\bar u_k=\mathbb{E}_{x_i\in I_k}[u_i].$ This keeps the prompt compact while preserving the information needed for the next fidelity--point decision.

\paragraph{Metrics and baselines.}
We report schema accuracy, fidelity-model accuracy, point-selection error, distance to optimum, number of evaluated points, intervention frequency, and oracle cost. The proposed hierarchical method is compared against: \emph{oracle only}, \emph{distillation only}, \emph{no distillation}, and the full \emph{hierarchical} architecture.

\paragraph{Reproducibility.}
Each run logs
$$(\mathsf{S}_t,\widetilde{\mathsf{S}}_t,y_t,m_t,x_t,e_t,\texttt{valid}_t,\texttt{oracle\_called}_t),$$
which is sufficient to reconstruct the full MFBO trajectory. We fix random seeds, prompt templates, fidelity levels, evaluation costs, LoRA configurations, and the ordering of evaluated points. When multiple runs are reported, we provide means and standard deviations, and use matched MFBO traces across methods.

\paragraph{Compute.}
Local experiments use LoRA fine-tuning on the adaptive models. We train the modules
$$\texttt{q\_proj},\texttt{k\_proj},\texttt{v\_proj},\texttt{o\_proj},\texttt{gate\_proj},\texttt{up\_proj},\texttt{down\_proj},$$
with rank-$32$ adapters. This corresponds to approximately $90.18$M trainable parameters for \emph{Llama-3.1-8B} and $86.80$M for \emph{Mistral-7B}, about $1.1$--$1.2\%$ of total parameters. The local experiments are run under a constrained NVIDIA Tesla V$100$ $16$GB environment, while oracle models are accessed through API calls.

\newpage
\section{Projection beyond Numerical MFBO}
\label{appendix:projection_beyond_mfbo}

The projection mechanism used in the MFBO experiments is rule-based because the
state has an explicit numerical structure. However, the same principle extends
beyond numerical prompts. Given any prompt or accumulated state $\rho$, we
decompose it into information units
$$\rho=(u_1,\ldots,u_n),$$
where each $u_i$ may be a sentence, tool output, constraint, observation,
decision, table row, or numerical evaluation. Let $e:\mathcal{U}\rightarrow \mathbb{R}^d$ be an encoder and define $z_i=e(u_i)$. Thus, every prompt induces a finite set of embedded units $Z(\rho)=\{z_1,\ldots,z_n\}\subset\mathbb{R}^d.$

We separate the embedded units into schema-defining and summarizable components,
$$Z(\rho)=Z_{\mathrm{sch}}(\rho)\cup Z_{\mathrm{sum}}(\rho).$$
The set $Z_{\mathrm{sch}}(\rho)$ contains units that define the required output
schema, protocol, constraints, or tool interface, and must therefore be preserved.
The set $Z_{\mathrm{sum}}(\rho)$ contains semantic or historical units that may be
compressed. Projection keeps the schema component fixed and compresses only the
summarizable component:
  $$ \mathcal{P}_{\mathcal{O}}(\rho)
    =
    Z_{\mathrm{sch}}(\rho)
    \cup
    \mathcal{S}^{\star},
    \qquad
    \mathcal{S}^{\star}\subseteq Z_{\mathrm{sum}}(\rho),
    \qquad
    |\mathcal{S}^{\star}|\le \kappa,$$
where $\kappa$ is defined as the difference between the maximum allowed prompt length $|\rho|_\text{saturation}$ (Table~\ref{tab:prompt_saturation_summary}) and the length consumed by the schema units $|Z_\text{sch}|$. Let $d:\mathbb{R}^d\times\mathbb{R}^d\rightarrow\mathbb{R}_{+}$ be a dissimilarity
measure between embedded units. We define the value of a candidate summary
$\mathcal{S}\subseteq Z_{\mathrm{sum}}(\rho)$ by
\begin{equation}
\label{eq:projection_distance_objective}
    f(\mathcal{S})
    =
    d_{e_0}
    -
    \sum_{z\in Z_{\mathrm{sum}}(\rho)}
    \min_{s\in\mathcal{S}} d(z,s),
\end{equation}
where $d_{e_0}$ is a \emph{phantom} constant chosen large enough so that
$f(\mathcal{S})\ge 0$ for all feasible $\mathcal{S}$. The projection problem is
then solving
\begin{equation}
\label{eq:projection_subset_problem}
    \mathcal{S}^{\star}
    \in
    \arg\max_{\mathcal{S}\subseteq Z_{\mathrm{sum}}(\rho)}
    f(\mathcal{S})
    \qquad
    \mathrm{s.t.}
    \qquad
    |\mathcal{S}|\le \kappa.
\end{equation}
Equivalently, maximizing $f$ minimizes the total distance from each summarizable
unit to its nearest selected exemplar:
\[
    \mathcal{S}^{\star}
    \in
    \arg\min_{\mathcal{S}\subseteq Z_{\mathrm{sum}}(\rho),\,|\mathcal{S}|\le\kappa}
    \sum_{z\in Z_{\mathrm{sum}}(\rho)}
    \min_{s\in\mathcal{S}} d(z,s).
\]
Thus, the selected set $\mathcal{S}^{\star}$ acts as a collection of medoids that
covers the semantic content of the original prompt while reducing its length.

Under standard nonnegative distance assumptions, the function in
\eqref{eq:projection_distance_objective} is monotone submodular after the
phantom normalization. Intuitively, adding a new exemplar can only decrease the
nearest-exemplar distance, and the marginal improvement is smaller when more
exemplars have already been selected. Therefore, the projected summary obtained
by greedy selection is suboptimal with the approximation guarantee stated in
Proposition~\ref{prop:greedy_info_summary}. This gives a theoretically supported
projection mechanism: schema vectors remain fixed, while semantic vectors are
compressed through a submodular exemplar-selection problem.

The compressed prompt is obtained by mapping the selected vectors back to their
corresponding information units:
\[
    \widetilde{\rho}
    =
    \rho_{\mathrm{sch}}
    \circ
    (u_i: z_i\in\mathcal{S}^{\star}).
\]
Consequently, the MFBO projection is not a task-specific exception, but one
instance of a broader projection framework applicable to text-heavy agentic
tasks, tool-use histories, active learning traces, experimental-design logs, and
simulation-based optimization states.

Alternative projection mechanisms are also possible. Instead of selecting
exemplars, one may aggregate the summarizable vectors into one or more learned
representatives, for example through pooling, attention, or a trainable
compression map. This is the perspective taken in
Appendix~\ref{appendix:learned_projection}, where projection is treated as a
learned state-compression operator. Such learned projection can be more flexible,
but it does not provide the same direct submodular structure and approximation
guarantees as the exemplar-based projection above.

\newpage
\section{Limitations and Broader Impact}
\label{app:limitations_broader_impact}

\paragraph{Scope of the experimental study.}
This work is intended as a fundamental study of prompt-domain control and hierarchical supervision for resource-constrained agentic language models. Our goal is not to claim exhaustive validation across all possible agentic tasks, but to isolate a core failure mode that appears in sequential agentic deployment: accumulated interaction histories can push compact models outside their effective prompt domain, causing protocol failures, semantic drift, or inefficient reliance on oracle models. We use Multi-Fidelity Bayesian Optimization (MFBO) because it provides a clear and controlled setting in which structured outputs, iterative prompt growth, numerical decision-making, budget constraints, and oracle--student supervision arise simultaneously.

Although MFBO is compact enough to support controlled experimentation, it should not be interpreted as a toy problem. MFBO naturally represents a broad class of sequential decision-making tasks in which an agent must choose both an action and a level of evaluation cost or fidelity. This includes experimental design, hyperparameter optimization, active learning, scientific discovery workflows, simulation-based design, model selection, and resource-aware planning. Therefore, while MFBO does not cover the full diversity of agentic AI environments, it captures several structural features that appear in realistic scientific and engineering deployments.

\paragraph{Limitations of empirical coverage.}
The main empirical limitation is that we evaluate the proposed framework on a single controlled family of tasks rather than on a broad benchmark suite of agentic environments. This choice allows us to measure prompt feasibility, projection behavior, oracle-intervention frequency, and cost-efficiency in a transparent setting, but it limits the extent to which the quantitative results can be directly generalized. Future work should evaluate the same architecture on more diverse agentic settings, including tool-use chains, code-generation agents, long-horizon planning tasks, multi-agent coordination, retrieval-heavy workflows, and interactive scientific assistants.

A second limitation is the number of adaptive models and oracle models considered. We evaluate two compact open-weight students and two oracle models, which is sufficient to show model-dependent differences in saturation and adaptation, but not enough to characterize all model families, context architectures, or instruction-tuning regimes. Larger student models, mixture-of-experts students, and domain-specialized models may exhibit different prompt-feasibility thresholds and different sensitivity to online fine-tuning, which is also left for future work.

\paragraph{Relation to \texttt{RAG} and non-hierarchical prompting.}
Retrieval-augmented generation and context-expansion methods are closely related to the non-hierarchical deployment regime considered in this paper. In standard MFBO deployment without projection, the prompt grows as more evaluations, errors, uncertainties, and intermediate results are appended to the state. This resembles a \texttt{RAG}-style or memory-augmented prompting pipeline in which more information is repeatedly inserted into the context. Our results suggest that this strategy can become unreliable for compact models: even when the prompt remains below the nominal context limit, the effective prompt domain may be exceeded due to saturation or attention dilution.

The proposed hierarchy does not reject retrieval or memory augmentation. Rather, it suggests that such methods should be coupled with feasibility-aware control. In other words, retrieval should not only decide what information is relevant, but also whether the resulting prompt remains inside the stable operating region of the deployed model.

\paragraph{Fine-tuning hyperparameter coverage.}
The adaptive fine-tuning experiments use a constrained set of LoRA hyperparameter combinations. This is intentional. The purpose of online adaptation in our framework is not to substantially shift the compact model or perform large-scale retraining, but to apply small corrective updates under limited time, data, and compute. We therefore restrict the search to lightweight updates that are compatible with deployment-time constraints. A larger hyperparameter sweep could improve individual model performance, but it would also weaken the intended setting: online correction must remain fast enough to be useful inside an active agentic loop.

Consequently, the reported results should be interpreted as evidence that small, feasible updates can improve reliability under hierarchical supervision, rather than as the best possible fine-tuning performance achievable for each model. Future work should study adaptive schedules for learning rate, LoRA rank, buffer size, number of epochs, and intervention frequency, especially under stricter latency constraints.

\paragraph{Measurement limitations.}
The empirical metrics focus on protocol validity, fidelity-model selection, point-selection error, distance to optimum, oracle-intervention frequency, and cost. These metrics are appropriate for MFBO, but they do not capture all possible dimensions of agentic reliability. For example, open-ended tasks may require measuring factuality, tool-use correctness, safety constraints, calibration, robustness to adversarial context, and user-facing quality. Moreover, the saturation thresholds measured in this work are empirical and model-specific; they should be recalibrated when changing the model, tokenizer, hardware precision, decoding strategy, or deployment environment.

\paragraph{Broader impact.}
The broader motivation of this work is to make agentic language-model systems more reliable, cost-aware, and deployable under resource constraints. By reducing unnecessary oracle calls and allowing compact models to act safely within a controlled prompt domain, hierarchical supervision may lower the cost and energy footprint of agentic systems. This is particularly relevant for scientific computing, engineering design, education, and resource-limited organizations that cannot rely on large proprietary models at every step.

The framework may also improve transparency. By separating schema validity, semantic adaptation, projection, and oracle intervention, the system produces interpretable logs of when and why the controller intervenes. This can help diagnose failure modes in sequential agentic systems and support more auditable deployment.

At the same time, more reliable agentic systems can also be misused. Improving the autonomy and efficiency of language-model agents may increase the capability of systems used for spam, automated manipulation, low-quality content generation, or unsafe decision support. In high-stakes domains, the proposed framework should therefore be combined with domain-specific safety constraints, access controls, human oversight, and careful evaluation before deployment.

Finally, this work should not be interpreted as removing the need for human supervision. The oracle, controller, and projection mechanism reduce certain failure modes, but they do not guarantee correctness in all settings. In safety-critical applications, hierarchical prompt-domain control should be viewed as one component of a broader reliability stack that includes validation, monitoring, interpretability, and human-in-the-loop review.

\newpage
\section{Connection to Literature}
\label{appendix:connection_literature}

Our framework is related to recent work on local agentic RL, prompt compression,
memory-augmented LLMs, and interactive agent benchmarks. The main distinction is
that we treat reliability as a \emph{feasible-state control problem}: the
controller preserves schema-defining information, compresses semantic history,
and projects the accumulated prompt state back into a feasible domain.

\paragraph{Connection to \texttt{PivotRL}.}
\texttt{PivotRL} performs local on-policy updates at informative intermediate
states, called pivots, selected from expert trajectories. In our notation, these
intermediate states correspond to accumulated prompt states,
$$
    s_t \equiv \mathcal{S}_t .
$$
Both methods therefore optimize policies over non-initial states, see Figure~\ref{fig:pivotrl_equivalence}. However,
\texttt{PivotRL} enforces correctness through verifier rewards, while our method
enforces feasibility directly at the state level. A verifier-style formulation is
$$
    r_{\mathrm{func}}(s,a)
    =
    \mathbf{1}_{\{a\in\mathcal{M}(s)\}},
$$
whereas our formulation imposes
$$
    \mathcal{S}_t\in\mathcal{D},
    \qquad
    \mathcal{S}_t \leftarrow \mathcal{P}_{\mathcal{O}}(\mathcal{S}_t)
    \quad
    \text{if } \mathcal{S}_t\notin\mathcal{D}.
$$
Thus, \texttt{PivotRL} is closest to our adaptive-update component, while our
additional contribution is an explicit feasible-domain geometry and projection
operator. Informative pivots correspond to high-drift states:
$$
    \mathrm{Var}_{a\sim\pi_\theta}[r(s_t,a)]>0
    \quad
    \Longleftrightarrow
    \quad
    D\!\left(
    \pi_\theta(\cdot\mid\mathcal{S}_t),
    \pi_\phi(\cdot\mid\mathcal{S}_t)
    \right)
    \text{ is large}.
$$
The advantage is that adaptation is triggered only where the student departs from
the oracle or violates feasibility, while projection prevents uncontrolled prompt
growth before it becomes a reward-level failure.

\paragraph{Connection to \texttt{LLMLingua}.}
\texttt{LLMLingua} compresses prompts to accelerate inference and reduce cost by
removing low-utility tokens while preserving task performance. Our projection
operator has a related goal, but the compression is structured around feasibility:
schema-defining information is separated from semantic history before any
compression is applied. Given a prompt $\rho$, let
$$
    Z(\rho)=Z_{\mathrm{sch}}(\rho)\cup Z_{\mathrm{sum}}(\rho)
$$
be its embedded representation, where $Z_{\mathrm{sch}}$ contains protocol,
format, and tool-interface units, and $Z_{\mathrm{sum}}$ contains compressible
semantic history. We keep the schema component fixed and summarize only the
semantic component:
$$
    \mathcal{P}_{\mathcal{O}}(\rho)
    =
    Z_{\mathrm{sch}}(\rho)\cup \mathcal{S},
    \qquad
    \mathcal{S}\subseteq Z_{\mathrm{sum}}(\rho),
    \qquad
    |\mathcal{S}|\le \kappa .
$$
The selected summary is obtained by exemplar-based clustering; see
Appendix~\ref{appendix:projection_beyond_mfbo}. Thus, our method can be viewed as
a structured compression layer that could encode token- or unit-pruning methods
as possible choices for constructing $Z_{\mathrm{sum}}(\rho)$, but with two
additional theoretical advantages: schema feasibility is preserved by
construction, and semantic compression is posed as a monotone submodular
maximization problem with the greedy guarantee of
Proposition~\ref{prop:greedy_info_summary}. We therefore do not compare directly
against \texttt{LLMLingua}, since its objective is generic prompt compression for
inference acceleration, whereas our experiments isolate feasibility-preserving
projection inside a controlled agentic loop with protocol constraints and
adaptive supervision.

\paragraph{Connection to \texttt{MemGPT}.}
\texttt{MemGPT} views long-context interaction as memory management across
different memory tiers. Our projection operator can be interpreted as a
feasibility-preserving memory-management rule. The accumulated state is split
into active schema memory and compressible semantic memory:
$$
    \mathcal{S}_t
    =
    \mathcal{S}_{t,\mathrm{sch}}
    \cup
    \mathcal{S}_{t,\mathrm{mem}} .
$$
When the state approaches the feasible boundary, the controller applies
$$
    \widetilde{\mathcal{S}}_t
    =
    \mathcal{P}_{\mathcal{O}}(\mathcal{S}_t)
    \in \mathcal{D}.
$$
Thus, while memory systems decide what to move between context and external
storage, our method specifies a projection objective for what should remain in
the active prompt. The advantage is that the retained memory is not only compact,
but also constrained to preserve protocol feasibility and admit submodular
coverage guarantees.

\paragraph{Connection to \texttt{WebArena}.}
\texttt{WebArena} evaluates agents on realistic web tasks involving long-horizon
interaction, tools, external knowledge, and functional task completion. These
settings naturally create growing prompt states containing instructions, web
observations, tool outputs, constraints, and previous actions. In our notation,
such an agent evolves according to
$$
    \mathcal{S}_{t+1}
    =
    \mathcal{S}_t
    \circ
    (o_t,a_t,r_t),
$$
where $o_t$ is the web observation, $a_t$ is the action, and $r_t$ is task
feedback. Our projection mechanism is directly applicable: fixed schema units
encode the browser/action protocol, while semantic units encode page content,
retrieved facts, and interaction history. This provides a principled way to
control context growth in web agents without discarding action-interface
constraints.

\paragraph{Connection to \texttt{ALFWorld}.}
\texttt{ALFWorld} studies agents that transfer between abstract text-based
policies and grounded embodied execution. This aligns with our separation
between protocol structure and semantic state. The admissible action interface
and task grammar define schema feasibility, while observations and object states
form compressible semantic context. A projected state can be written as
$$
    \widetilde{\mathcal{S}}_t
    =
    \mathcal{S}_{t,\mathrm{act}}
    \cup
    \Pi_{\kappa}(\mathcal{S}_{t,\mathrm{obs}}),
$$
where $\mathcal{S}_{t,\mathrm{act}}$ preserves admissible action syntax and
$\Pi_{\kappa}(\mathcal{S}_{t,\mathrm{obs}})$ summarizes the relevant environment
history. The advantage is that projection maintains the action protocol required
for grounded execution while compressing redundant observations.

\paragraph{Summary.}
Across these lines of work, our method can be viewed as adding a geometric
feasibility layer to agentic learning. \texttt{PivotRL} identifies informative
local states for adaptation, \texttt{LLMLingua} compresses prompts for efficient
inference, \texttt{MemGPT} manages memory across context tiers, and
\texttt{WebArena}/\texttt{ALFWorld} expose long-horizon interaction settings
where context growth is unavoidable. Our contribution is to combine these themes
through the constrained projection problem
$$
    \max_{\mathcal{S}\subseteq Z_{\mathrm{sum}}(\rho)}
    f(\mathcal{S})
    \quad
    \mathrm{s.t.}
    \quad
    |\mathcal{S}|\le\kappa,
    \qquad
    Z_{\mathrm{sch}}(\rho)\ \text{fixed},
$$
which gives a compact and theoretically supported mechanism for preserving
schema feasibility while compressing semantic state.


\begin{figure*}[t]
\centering
\resizebox{0.96\textwidth}{!}{%
    \begin{tikzpicture}[
        font=\scriptsize,
        node distance=1.4cm and 1.1cm,
        >=Latex,
        box/.style={draw, rounded corners=2pt, align=center, minimum height=8mm, minimum width=19mm},
        sbox/.style={draw, rounded corners=2pt, align=center, minimum height=7mm, minimum width=16mm},
        map/.style={draw, dashed, rounded corners=2pt, align=center, fill=gray!8, minimum height=7mm},
        title/.style={font=\bfseries},
        flow/.style={->, thick},
        eqv/.style={->, dashed, thick},
        note/.style={align=center, font=\scriptsize}
    ]
    
    \node[title] (ltitle) at (-2.0,4.2) {Our hierarchical framework};
    
    \node[box, fill=blue!8] (s0) at (-8.0,0.8) {$S_t$\\current system state};
    \node[box, fill=green!8, right=3mm of s0] (piu) {$a_t \sim \pi_\theta(\cdot\mid S_t)$\\adaptive action};
    \node[box, fill=orange!10, right=3mm of piu] (dom) {$S_t \in D?$\\feasible domain check};
    \node[box, fill=red!8, below=4mm of dom] (proj) {$S_t \leftarrow P_O(S_t)$\\projection};
    \node[box, fill=purple!8, right=4mm of dom] (rew) {$R_s + R_p$\\semantic + protocol\\ reward};
    \node[box, fill=yellow!12, right=3mm of rew] (upd) {update $\theta$\\distill / adapt};
    
    \draw[flow] (s0) -- (piu);
    \draw[flow] (piu) -- (dom);
    \draw[flow] (dom) -- node[above] {yes} (rew);
    \draw[flow] (dom) -- node[right] {no} (proj);
    \draw[flow] (proj.west) -| +(-4.1,0) |- (s0.south);
    \draw[flow] (rew) -- (upd);
    
    \node[sbox, fill=green!5, above=1.75cm of piu] (oracle) {$a_t^\star=O(\phi,S_t)$\\oracle supervision};
    \draw[flow] (oracle) -- (rew);
    
    \node[note, below=0.8cm of s0] (l1) {$S_t \equiv$ accumulated prompt/context};
    \node[note, below=0.3cm of piu] (l2) {policy over intermediate states};
    \node[note, below=1.2cm of dom] (l3) {explicit feasibility enforcement};
    \node[note, below=0.1cm of rew] (l4) {protocol + semantic credit};
    
    \node[title] (rtitle) at (8.6,4.2) {\texttt{PivotRL}};
    
    \node[box, fill=blue!8] (st) at (5.0,0.8) {$s_t$\\intermediate state};
    \node[box, fill=green!8, right=of st] (pip) {$a_t \sim \pi_\theta(\cdot\mid s_t)$\\sampled completion};
    \node[box, fill=orange!10, right=of pip] (ver) {$a_t \in \mathcal{M}(s_t)?$\\verifier};
    \node[box, fill=purple!8, right=of ver] (rf) {$r_{\mathrm{func}}(s_t,a_t)$\\functional reward};
    \node[box, fill=yellow!12, right=of rf] (updp) {local RL\\update $\theta$};
    
    \draw[flow] (st) -- (pip);
    \draw[flow] (pip) -- (ver);
    \draw[flow] (ver) -- (rf);
    \draw[flow] (rf) -- (updp);
    
    \node[sbox, fill=green!5, above=0.75cm of pip] (dsft) {$\mathcal{D}_{\mathrm{SFT}}$\\expert trajectories};
    \draw[flow] (dsft) -- (rf);
    
    \node[sbox, fill=red!8, below=1.0cm of ver] (pivot) {pivot filter\\high variance \\ hard states};
    \draw[flow] (st) -- (pivot);
    \draw[flow] (pivot.east) -| (updp.south);
    
    \node[note, above=0.2cm of st] (r1) {$s_t$ from expert trajectory prefix};
    \node[note, above=0.1cm of pip] (r2) {policy over intermediate states};
    \node[note, below=2.0cm of ver] (r3) {reward-based feasibility};
    \node[note, below=1.0cm of rf] (r4) {verifier-derived credit};
    
    \draw[eqv] (s0.north east) to[bend left=15] (st.north west);
    \draw[eqv] (oracle.east) to[bend left=8] (dsft.west);
    \draw[eqv] (rew.south east) to[bend right=10] (rf.south west);
    \draw[eqv] (proj.east) to[bend right=12] (ver.south west);
    \draw[eqv] (upd.south east) to[bend right=10] (pivot.south west);
    
    \node[draw, rounded corners=2pt, fill=gray!6, align=center, minimum width=12.8cm, minimum height=8mm]
        at (4.2,-3.4)
        {\textbf{Main distinction:} PivotRL operates on a \emph{flat state space} and enforces correctness through \emph{verifier rewards}, whereas our framework imposes an explicit \emph{feasible-domain geometry} $D$ and corrects violations by \emph{projection}.};
    
    \end{tikzpicture}
}
\caption{Sequential correspondence between the proposed hierarchical framework and \texttt{PivotRL}. Both optimize a policy over intermediate states, but \texttt{PivotRL} enforces feasibility through verifier-based local rewards, while our method enforces feasibility through projection into a feasible domain. This visualizes the Appendix X equivalences: $S_t \leftrightarrow s_t$, oracle supervision $\leftrightarrow$ expert trajectories, protocol reward $\leftrightarrow$ verifier reward, and drift detection $\leftrightarrow$ pivot selection.}
\label{fig:pivotrl_equivalence}
\end{figure*}

\end{document}